\documentclass[10pt]{article}
\usepackage[margin=1in]{geometry}

\newif\ifopt
\optfalse         

\newcommand{\opt}[1]{\ifopt #1\fi}
\newcommand{\notopt}[1]{\ifopt\else #1\fi}

\usepackage{graphicx} 
\usepackage{tikz-cd}
\usepackage{amssymb}
\usepackage{dsfont}
\usepackage{tabularx, booktabs}
\usepackage{amsthm}
\usepackage{mathtools}
\usepackage{algorithm, algpseudocodex}
\notopt{\usepackage[colorlinks=true, allcolors=blue]{hyperref}}
\usepackage[capitalize]{cleveref}
\usepackage[round]{natbib}
\let\cite\citep

\usepackage{enumitem}

\usepackage{multirow}
\usepackage{amsmath,amsfonts,graphicx}
\usepackage{xfrac}

\usepackage{commands}
\usepackage[dvipsnames]{xcolor}

\newtheorem{theorem}{Theorem}  
\newtheorem{lemma}[theorem]{Lemma}

\newtheorem*{theorem*}{Theorem}
\newtheorem*{lemma*}{Lemma}

\theoremstyle{definition}
\newtheorem{definition}[theorem]{Definition}

\theoremstyle{remark}

\newenvironment{proofsketch}[1][\proofname]{%
  \begin{proof}[#1 (sketch)]%
}{%
  \end{proof}%
}

\renewcommand{\epsilon}{\varepsilon}

\newcommand\E{\mathbb{E}}

\newcommand{\W}{\mathcal{W}}

\newcommand{\rr}{\mathbb{R}}
\newcommand{\R}{\mathbb{R}}

\newcommand\inprod[2]{\langle #1,\,#2 \rangle}

\newcommand{\curly}[1]{ {\left\{ #1 \right\}}}
\newcommand{\roundy}[1]{ {\left( #1 \right)}}
\newcommand{\squary}[1]{ {\left[ #1 \right]}}
\newcommand{\abs}[1]{ {\left | #1 \right |}}

\newcommand{\barconst}{\nu}
\newcommand{\reg}{{\rm Regret}}

\title{The Hidden Cost of Approximation in Online Mirror Descent}
\author{Ofir Schlisselberg  \and Uri Sherman \and Tomer Koren \and Yishay Mansour}

\newcommand{\mail}[1]{\href{mailto:#1}{\color{blue} #1}}
\newcommand{\blackfootnote}[1]{%
  \begingroup
  \hypersetup{allcolors=black}%
  \footnote{#1}%
  \endgroup
}

\makeatletter
\newcommand{\ForceCrefTypeInEnv}[2]{%
  \AddToHook{env/#1/begin}{%
    \let\cref@oldlabel\label
    \def\label##1{\cref@oldlabel[#2]{##1}}%
  }%
  \AddToHook{env/#1/end}{\let\label\cref@oldlabel}%
}
\makeatother

\ForceCrefTypeInEnv{lemma}{lemma}
\ForceCrefTypeInEnv{definition}{definition}
\ForceCrefTypeInEnv{proposition}{proposition}
\ForceCrefTypeInEnv{claim}{claim}
\ForceCrefTypeInEnv{corollary}{corollary}
\ForceCrefTypeInEnv{example}{example}
\crefname{lemma}{lemma}{lemmas}
\Crefname{lemma}{Lemma}{Lemmas}
\crefname{definition}{definition}{definitions}
\Crefname{definition}{Definition}{Definitions}
\crefname{proposition}{proposition}{propositions}
\Crefname{proposition}{Proposition}{Propositions}
\crefname{claim}{claim}{claims}
\Crefname{claim}{Claim}{Claims}
\crefname{corollary}{corollary}{corollaries}
\Crefname{corollary}{Corollary}{Corollaries}
\crefname{example}{example}{examples}
\Crefname{example}{Example}{Examples}

\author{Ofir Schlisselberg\blackfootnote{Tel Aviv University; \mail{ofirs4@mail.tau.ac.il}} \and
Uri Sherman\blackfootnote{Tel Aviv University; \mail{urisherman@mail.tau.ac.il}} \and
Tomer Koren\blackfootnote{Tel Aviv University and Google Research; \mail{ tkoren@tauex.tau.ac.il}} \and
Yishay Mansour\blackfootnote{Tel Aviv University and Google Research; \mail{mansour.yishay@gmail.com}}}

\begin{document}

\maketitle

\begin{abstract}
    Online mirror descent (OMD) is a fundamental algorithmic paradigm that underlies many algorithms in optimization, machine learning and sequential decision-making. 
    The OMD iterates are defined as solutions to optimization subproblems which, oftentimes, can be solved only approximately, leading to an \emph{inexact} version of the algorithm.
    Nonetheless, existing OMD analyses typically assume an idealized error free setting, thereby limiting our understanding of performance guarantees that should be expected in practice.
    In this work we initiate a systematic study into inexact OMD, and uncover an intricate relation between regularizer smoothness and robustness to approximation errors.
    When the regularizer is uniformly smooth,
    we establish a tight bound on the excess regret due to errors.
    Then, for barrier regularizers over the simplex and its subsets, we identify a sharp separation: negative entropy requires exponentially small errors to avoid linear regret, whereas log-barrier and Tsallis regularizers remain robust even when the errors are only polynomial. Finally, we show that when the losses are stochastic and the domain is the simplex, negative entropy regains robustness - but this property does not extend to all subsets, where exponentially small errors are again necessary to avoid suboptimal regret.
\end{abstract}

\section{Introduction}
Mirror Descent~\citep{nemirovsky1983problem,beck2003mirror} is a fundamental optimization paradigm that offers the flexibility to exploit the (typically non-Euclidean) intrinsic geometry of the optimization problem. 
The online variant (OMD; \citealp{shalev2012online,hazan2016introduction}) is a generalization of the basic framework adapted to the more general online learning setup \cite{zinkevich2003online}, where the goal of the learner is to minimize her \emph{regret}, defined as the cumulative loss minus the loss of the best fixed decision in hindsight.
Given a convex decision set $\cK\subset \R^d$, 
an initialization $w_1\in \cK$ and learning rate $\eta > 0$, the OMD steps $t=1, \ldots, T$ follow the update rule:
\begin{align}\label{eq:OMD}
    w_{t+1} = \argmin_{w\in \cK} \bigl\{  \eta \langle \ell_t, w \rangle + D_R(w \,\|\, w_t) \bigr\} ,
\end{align} 
where $\l_t$ is the loss at time $t$ and $D_R$ is the Bregman divergence associated with a regularizer \(R\colon \cK \to \R\) chosen by the learner.
Notable instances of OMD include online gradient descent \citep{zinkevich2003online} and the well known multiplicative weights method \citep{littlestone1994weighted, freund1997decision, arora2012multiplicative}, both of which are examples where the OMD update rule, namely the exact solution to the OMD subproblem Eq.~(\ref{eq:OMD}), is given by a closed form expression (when operating over suitable decision sets). 

However, in many cases of interest, the OMD update rule does not admit a closed form solution, and therefore demands employing an auxiliary iterative optimization procedure that only produces \emph{approximate} minimizers of the respective OMD subproblems.
Notable examples include reinforcement learning algorithms that optimize over occupancy measures, which form a polyhedral subset of the simplex \citep{NIPS2013_68053af2,pmlr-v97-rosenberg19a,pmlr-v119-jin20c}; 
generic online convex optimization algorithms that rely on OMD updates \citep{5044958,10.5555/2968826.2968914,pmlr-v108-ito20a}; 
and algorithms defined over the simplex that use barrier regularization other than negative entropy, such as in adversarial bandits \citep{NIPS2015_5caf41d6,zimmert2021tsallis} and portfolio selection \citep{luo2018efficient}.  
Somewhat surprisingly, however, the existing literature lacks a systematic study of the effect these approximations have on the final regret guarantee, with prior art focusing on particular problem instances at best \cite{schmidt2011convergence, villa2013accelerated, dixit2019online,choi2023inexact}.
\opt{(Due to space constraints, discussion of additional related work is deferred to \Cref{sec:related work}.)}

In this work, we initiate a systematic study into the robustness of OMD to approximations, aimed at understanding the interplay between regularization, quality of approximations, and regret.
Our results uncover a direct link between robustness of inexact OMD and smoothness properties of the regularizer being used.
For uniformly smooth regularizers, we establish that robustness to approximation errors is directly governed by the smoothness parameter. For the more prevalent non-smooth regularizer case, we demonstrate that OMD with negative entropy regularization is prone to incurring \emph{linear} regret unless the approximation errors are made \emph{exponentially small} in the number of steps; and in contrast, that for other barrier regularizers such as the log-barrier and Tsallis entropy, polynomially small errors suffice to obtain optimal regret.
We then further investigate more carefully when non-robustness with the negative entropy arises. We show that when the losses are stochastic (i.i.d.), negative entropy over the simplex becomes robust and polynomially small errors are sufficient. On the other hand, we demonstrate this robustness may break even with i.i.d.\ losses when optimizing over a \emph{subset} of the simplex, where again, exponentially small errors are necessary to avoid suboptimal regret.  

\subsection{Summary of contributions}
In more detail, our contributions are summarized as follows.
\begin{itemize}
    \item 
    First, when the regularizer $R$ is uniformly smooth over the domain $\cal K$ with smoothness parameter $\beta$, we establish a tight $\Theta(TD\sqrt{\beta\epsilon}/\eta)$ bound on the excess regret due to $\epsilon$-approximation errors,
    where $D$ is the diameter of $\cal K$ with respect to the relevant norm.
    E.g., for the typical setting $\eta = \Theta(1/\sqrt{T})$ this implies that errors should be as small as $\epsilon = O(1/T^2)$ so as to recover optimal $O(\sqrt{T})$ regret.

    \item
    We then move on to consider common non-smooth regularizers, such as the negative entropy, Tsallis entropies, and the log-barrier, focusing on the simplex and its subsets as decision sets. We observe a sharp dichotomy between the negative entropy and other regularizers in terms of robustness to approximations: on the one hand, for the negative entropy we show that an \emph{exponentially small} error  $\epsilon = \Omega(\eta e^{-\eta T})$ could already lead to \emph{linear regret}, even when the domain is the simplex; and on the other hand, for Tsallis Entropies and the log-barrier over the simplex or a subset thereof, we prove that a \emph{polynomially small} error, e.g $\epsilon = O(\eta^2/(T^2d^2))$ for log-barrier, suffices for maintaining the same order of regret.
    
    \item
    Finally, we revisit the robustness to approximations with the negative entropy in the stochastic~(i.i.d.) setting.
    Over the simplex and with $\eta = \widetilde O(1/\sqrt{T})$, we show that a polynomially small error $\epsilon = O(1/ (d^2T^4))$ suffices for obtaining optimal regret with high probability, as opposed to the exponentially small error required in the non-stochastic case.
    However, this robustness does not extend more generally to proper subsets of the simplex: we construct a setting where OMD with negative entropy exhibits an excess term of 
    $
        \Omega( T\sqrt{\eta / \log(1/\epsilon)} )
    $
    leading to $\widetilde \Omega(T^{2/3})$  regret for any step size 
    unless $\epsilon$ is exponentially small in $T$.
\end{itemize}
At a conceptual level, our analysis reveals that compounding errors play a central role in OMD's robustness to inexact updates. Since the per time step subproblem directly depends on the previous iterate, approximation errors propagate between rounds and lead to subtle optimization dynamics. This should be contrasted with the closely related Follow-The-Regularized-Leader (FTRL) algorithm \citep[e.g.,][]{shalev2012online,hazan2016introduction}, which re-optimizes against the cumulative loss at each round, and thus, each optimization round is independent of inexactness introduced in previous rounds. And indeed, for FTRL it is straightforward to prove that approximation errors have only a minor effect; for more details, see \Cref{sec:ftrl}\footnote{We note that even though the iterates of FTRL and OMD coincide in some cases in the exact case, it isn't true for the approximate case.}.

In addition, our results for the smooth case (\Cref{thm:lb_smooth,thm:ub_smooth}) provide a tight characterization that is immediately applicable to a common technique where OMD is instantiated over a \emph{shrunk} simplex (or subset thereof), where coordinates are bounded away from zero (as suggested by \cite{dick2014online}). In this case, a uniform bound for the smoothness parameter immediately follows as the regularizer domain becomes compact. Interestingly, our results for the non-smooth case reveal that while this technique may be necessary to cope with fragility of negative entropy (\Cref{thm:lb_barrier_adversarial}), it is not necessary for other barrier regularizers as they induce optimization dynamics where the iterates \emph{naturally} stay bounded away from zero (see \Cref{thm:ub_nonentropy} and the discussion that follows).

Finally, we note that while our study focuses on the linear setup, all our results for the adversarial setting immediately carry to the general convex case via a standard reduction (e.g., \citealt{cesa2006prediction}).

 \begin{table}[t]
 \centering
\begin{tabular}{c|c|c|c|c}
\toprule
\textbf{Regime} & \textbf{Decision set} & \textbf{Regularizer} & \textbf{Tolerated $\boldsymbol{\epsilon}$} & \textbf{Polynomial?} \\
\midrule
Adversarial & convex & $\beta$-smooth & {$\eta^4/\beta$} & \checkmark\\
Adversarial & simplex subset & $\barconst$-barrier ($\barconst > 1$) & {$\eta^4\roundy{\eta Td}^{-\nu/\nu-1}$} & \checkmark\\
Adversarial & simplex & negative entropy & {$e^{-\eta T}$} & \ding{55}\\
Stochastic & simplex & negative entropy & {$d^{-2}T^{-4}$} & \checkmark\\
Stochastic & simplex subset & negative entropy & {$e^{-1/\eta}$} & \ding{55}\\
\bottomrule
\end{tabular}
\caption{Summary of contribution. The required $\epsilon$ column neglects low order terms.}
\end{table}

\subsection{Related work}\label{sec:related work}

Mirror descent \cite{nemirovskij1983problem, beck2003mirror} and the online convex optimization framework \citep{zinkevich2003online} have been central to the study of machine learning and optimization in the last decades.
There exist many excellent books and surveys that provide thorough introductions to (online) mirror descent in its fundamental (i.e., exact, error free) form \citep{shalev2012online, bubeck2015convex, hazan2016introduction, beck2017first}. Somewhat surprisingly, there hardly exist any works that study inexact mirror descent in the general stochastic or online setup.

In the classical (offline) optimization setup where the objective function is smooth, mirror descent coincides with a special case of the Bregman proximal gradient method (BPGM; \cite{bauschke2017descent, lu2018relatively}, see also \cite{teboulle2018simplified}).
The BPGM is a generalization of the proximal gradient method \citep{rockafellar1976monotone} where a Bregman divergence replaces the norm proximity regularizer, and the objective is required to satisfy the weaker \emph{relative smoothness} property \citep{bauschke2017descent}.
The BPGM and mirror descent coincide when the non-smooth part in the composite objective is the indicator function for the decision set.
In contrast to online or stochastic mirror descent in the general case, inexact versions of the BPGM (and thus offline mirror descent in the smooth case have been subject to several recent works. 
The majority of these study the Euclidean case (i.e., the proximal gradient method) with or without acceleration, e.g., \cite{schmidt2011convergence, villa2013accelerated, zhou2022acceleration, ahookhosh2024high}.
Some works study the online case 
\cite{dixit2019online} with the euclidean regularizer, and some further generalize to the online BPGM but with smooth regularizers \citep{choi2023inexact}.

There is also a recent line of works that study the (offline) BPGM in its general form (i.e., without making assumptions on the regularizer). These mostly focus on designing variants of the basic method that incorporate some mechanism to cope with the proximal subproblem approximation errors \citep{rebegoldi2018bregman, kabbadj2020inexact, stonyakin2021inexact, yang2025inexact}---which is to be contrasted with characterizing convergence in terms of the ad-hoc approximation errors.
As one example, the work of \citet{kabbadj2020inexact} establishes that the inexact BPGM achieves the same rate of the exact version (aka NoLips; \citealp{bauschke2017descent}) as long as the approximation errors are smaller than the Bregman distance to the previous iterate.
More recently, \citet{yang2025inexact} propose variants with several advantages at the expense of a somewhat more involved subproblem optimization procedure.

Finally, the work of \citet{guigues2021inexact} is one of the only examples (to our best knowledge) of papers that study an inexact version of stochastic mirror descent, albeit one that relates to a particular (non-general) instantiation of the algorithm.

\section{Preliminaries}
We consider the standard online linear optimization setup, where at each round \(t = 1, 2, \ldots, T\), the learner selects a point \(w_t\) from a convex decision set \(\mathcal{K} \subset \mathbb{R}^d\), and then observes a loss vector \(\ell_t\). The performance of the learner is measured in terms of her \emph{regret} with respect to a fixed comparator point \(w \in \mathcal{K}\), defined as follows:
\[
\reg(w) = \sum_{t=1}^T \langle \ell_t, w_t \rangle - \sum_{t=1}^T \langle \ell_t, w \rangle.
\]
We denote by \(w^* \in \arg\min_{w \in \mathcal{K}} \sum_{t=1}^T \langle \ell_t, w \rangle\) the best fixed decision in hindsight.

\paragraph{Inexact Online Mirror Descent.}
We let \(R : \mathcal{K} \to \mathbb{R}\) denote a differentiable regularizer which we assume to be $1$-strongly convex w.r.t.~a norm \(\|\cdot\|\).
The Bregman divergence associated with \(R\) is  defined as:
\[
D_R(w \,\|\, w') = R(w) - R(w') - \langle \nabla R(w'), w - w' \rangle.
\]
We say that a sequence $\curly{w_t}_{t=1}^T$ is an \emph{$\epsilon$-approximate OMD trajectory} if, for every $t$, $w_{t+1}$ approximately minimizes the round $t$ OMD objective (see Eq.~\ref{eq:OMD}) 
 $\phi_t (w) \eqq \eta \langle \ell_t, w \rangle + D_R(w \,\|\, w_t)$, up to $\epsilon$ additive error:
\[
\phi_t(w_{t+1}) \le \min_{w \in \mathcal{K}} \phi_t(w) + \epsilon.
\]
Regret bounds for OMD typically depend on the \emph{diameter} of $\cK$ with respect to the norm $\norm{\cdot}$, given by $D=\max_{w,w'\in\cK} \norm{w-w'}$ and demands $\norm{\ell_t}_* \le 1$ for all $t\in[T]$. 

\paragraph{Barrier Regularization.}
A particular focus of this work is on prototypical barrier regularizers, used extensively in cases where $\cK$ is the probability simplex  $\Delta_d \eqq \cbr[s]{p\in \R^d : p^i \geq 0, \sum_{i=1}^d p^i = 1} $ (or a subset thereof).
\begin{definition}[coordinate separable barrier regularizers]\label{def:nu_barrier}
We say $R\colon \cK \to \R$ is a coordinate separable barrier\footnote{Strictly speaking, these are barriers for the positive orthant in $\R^d$.} regularizer with parameter $\barconst\geq1$ (or simply a $\barconst$-barrier) 
if there exists a twice-differentiable function \(r \colon [0,1] \to \mathbb{R}\) and $c_1,c_2>0$ such that:
\[
R(w) = \sum_{i=1}^d r(w^i),
\quad \text{and} \quad 
\frac{c_1}{x^\barconst} \leq r''(x) \leq \frac{c_2}{x^\barconst} \quad \text{for all } x \in (0,1].
\]
\end{definition}
These conditions ensure that the regularizer imposes a barrier-like growth as components of \(w\) approach zero, which plays a crucial role in controlling the optimization dynamics near the boundary of the positive orthant.
This class captures several widely used regularizers, including: 
\begin{itemize}[nosep]
    \item \emph{Negative Entropy:} \(r(x) = x \log x\), for which $\barconst=1$;
    \item \emph{Tsallis Entropy:} \(r(x) = \frac{x - x^q}{1 - q}\) for \(q \in (0,1)\), where $1<\barconst<2$;
    \item \emph{Log-Barrier:} \(r(x) = -\log x\), which corresponds to $\barconst=2$.
\end{itemize}
The parameter $\barconst$ will turn out to be directly associated with the robustness of OMD with $\barconst$-barrier regularization to approximation errors.

\notopt{
\paragraph{Additional notation.}  
We denote by \(\ell_{t_1:t_2} = \sum_{t = t_1}^{t_2} \ell_t\) the cumulative loss vector over the interval \([t_1, t_2]\). For any vector \(v \in \mathbb{R}^d\), we use \(v^i\) to denote its \(i\)-th coordinate. For example, \(\ell_t^i\) refers to the \(i\)-th component of the loss vector at time \(t\), and \(w_t^i\) denotes the \(i\)-th component of the learner's decision at time \(t\).
}

\section{Smooth regularizers}\label{sec:smooth}
We begin by establishing tight upper and lower bounds for approximate OMD with smooth regularizers,\footnote{A function $R$ is said to be $\beta$-smooth with respect to a norm $\|\cdot\|$ if its gradient is $\beta$-Lipschitz; 
$
    \|\nabla R(x) - \nabla R(y)\|_* \le \beta \|x-y\|
$ for all $x,y \in \cK$, where $\norm{\cdot}_*$ is the norm dual to $\norm{\cdot}$.}
over an arbitrary convex domain $\cK \subseteq \R^d$.
Our first theorem provides an upper bound that builds on the following key property of smooth functions: approximate minimization implies that first-order optimality conditions hold up to an error proportional to the square root of the sub-optimality times the smoothness parameter. The formal proofs for this section is given in \Cref{apx:smooth}.

\begin{theorem}
\label{thm:ub_smooth}
Let $\cK \subseteq \R^d$ be a convex set with diameter $D$, and let $R \colon\cK\to \rr$ be a $\beta$-smooth regularizer over $\cK$.
Then, for any loss sequence $\ell_1, \dots, \ell_T$ such that $\norm{\ell_t}_* \le 1$ for all $t\in[T]$, the regret of any $\epsilon$-approximate OMD trajectory with $\epsilon \le D^2/2$ compared to any $w \in \cK$ is bounded as:
\begin{align*}
    \reg(w) = O\roundy{\frac{1}{\eta}D_R(w, w_1) + T\eta + \frac{TD\sqrt{\beta\epsilon}}{\eta}}
    .
\end{align*}
\end{theorem}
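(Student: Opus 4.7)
The plan is to follow the classical OMD regret analysis while carefully accounting for the slack from inexact minimization. Let $u_{t+1} \eqq \arg\min_{w \in \cK} \phi_t(w)$ denote the \emph{exact} round-$t$ minimizer. First-order optimality of $u_{t+1}$ combined with the three-point Bregman identity gives the standard per-round inequality
\[
\eta \langle \ell_t, u_{t+1} - w\rangle \leq D_R(w\|w_t) - D_R(w\|u_{t+1}) - D_R(u_{t+1}\|w_t).
\]
To recover the actual regret I split $\eta \langle \ell_t, w_t - w\rangle = \eta \langle \ell_t, w_t - u_{t+1}\rangle + \eta \langle \ell_t, u_{t+1} - w\rangle$ and absorb the first summand, bounded by $\eta \|w_t - u_{t+1}\|$, into the $D_R(u_{t+1}\|w_t)$ slack using AM--GM together with $1$-strong convexity of $D_R(\cdot\|w_t)$, paying only an $\eta^2$ additive term per round.

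The heart of the argument is then to replace the non-telescoping $D_R(w\|u_{t+1})$ by $D_R(w\|w_{t+1})$. Direct expansion furnishes the identity
\[
D_R(w\|u_{t+1}) = D_R(w\|w_{t+1}) + D_R(w_{t+1}\|u_{t+1}) + \langle \nabla R(w_{t+1}) - \nabla R(u_{t+1}),\, w - w_{t+1}\rangle,
\]
so the task reduces to bounding $D_R(w_{t+1}\|u_{t+1})$ and $\|\nabla R(w_{t+1}) - \nabla R(u_{t+1})\|_*$ in terms of $\epsilon$. The key observation -- and the source of the sharp $\sqrt{\beta\epsilon}$ scaling (as opposed to the weaker $\beta\sqrt{\epsilon}$ one would otherwise obtain) -- is that since $\phi_t - R$ is linear, one has $D_{\phi_t}(w_{t+1}\|u_{t+1}) = D_R(w_{t+1}\|u_{t+1})$. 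Combined with first-order optimality at $u_{t+1}$ (which gives $\langle \nabla \phi_t(u_{t+1}), w_{t+1} - u_{t+1}\rangle \geq 0$) and $\epsilon$-approximate minimality of $w_{t+1}$, this yields the crisp bound
\[
D_R(w_{t+1}\|u_{t+1}) = \phi_t(w_{t+1}) - \phi_t(u_{t+1}) - \langle \nabla \phi_t(u_{t+1}), w_{t+1} - u_{t+1}\rangle \leq \epsilon.
\]

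From here the standard co-coercivity inequality for $\beta$-smooth convex $R$ gives $\|\nabla R(w_{t+1}) - \nabla R(u_{t+1})\|_*^2 \leq 2\beta\, D_R(w_{t+1}\|u_{t+1}) \leq 2\beta\epsilon$. Plugging this back into the identity, together with Cauchy--Schwarz and the diameter bound $\|w - w_{t+1}\| \leq D$, the net per-round excess due to inexactness is $O(D\sqrt{\beta\epsilon})$. Summing the per-round inequalities telescopes the Bregman terms, and dividing by $\eta$ yields the claimed $O(D_R(w\|w_1)/\eta + T\eta + TD\sqrt{\beta\epsilon}/\eta)$ bound. The main subtlety to be overcome is precisely this $\sqrt{\beta}$ improvement: a naive approach using only the strong convexity bound $\|w_{t+1}-u_{t+1}\| \leq \sqrt{2\epsilon}$ and Lipschitzness of $\nabla R$ loses a $\sqrt{\beta}$ factor, whereas exploiting that the suboptimality $\epsilon$ controls the Bregman divergence $D_R(w_{t+1}\|u_{t+1})$ -- and not merely the norm distance -- is what buys the sharp rate.
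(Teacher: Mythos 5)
Your proof is correct in its overall architecture and reaches the stated bound, but it takes a genuinely different route from the paper's. The paper perturbs the first-order optimality conditions directly at the inexact iterate $w_{t+1}$: its key Lemma~\ref{lem:epsilon optimality conditions} shows that $\epsilon$-approximate minimization of the $\beta$-smooth function $\phi_t$ yields $\langle\nabla\phi_t(w_{t+1}),\,w-w_{t+1}\rangle \ge -D\sqrt{2\beta\epsilon}$, after which the standard telescoping argument goes through essentially unchanged. You instead keep the exact optimality conditions at the exact minimizer $u_{t+1}$ and pay for inexactness when bridging $D_R(w\,\|\,u_{t+1})$ to $D_R(w\,\|\,w_{t+1})$ via the three-point identity. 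Both roads arrive at the same place, and you correctly identify the crucial observation that the sub-optimality gap controls $D_R(w_{t+1}\,\|\,u_{t+1})\le\epsilon$ directly (via the fact that $\phi_t-R$ is affine, so $D_{\phi_t}=D_R$, combined with the FOC at $u_{t+1}$), and that this is what buys the sharp $\sqrt{\beta\epsilon}$ scaling over the naive $\beta\sqrt{\epsilon}$.

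The one step that deserves more care is your invocation of the ``standard co-coercivity inequality'' $\|\nabla R(w_{t+1})-\nabla R(u_{t+1})\|_*^2 \le 2\beta\, D_R(w_{t+1}\,\|\,u_{t+1})$. The textbook proof of co-coercivity moves from $w_{t+1}$ in the direction $-\nabla h(w_{t+1})$ (where $h(z)=R(z)-\langle\nabla R(u_{t+1}),z\rangle$), and that point may lie outside $\cK$; the theorem only assumes $R$ is $\beta$-smooth \emph{on} $\cK$, so co-coercivity in this global form is not automatic unless $R$ extends $\beta$-smoothly beyond $\cK$. Fortunately, you don't actually need the full gradient-norm bound: what enters the regret is only the directional quantity $\langle\nabla R(w_{t+1})-\nabla R(u_{t+1}),\,w-w_{t+1}\rangle$, and one can lower-bound it directly by applying the quadratic upper bound of $h$ (equivalently, of $\phi_t$) along the segment $[w_{t+1},w]\subset\cK$, using that $u_{t+1}$ minimizes $h$ over $\cK$ and that $D_R(w_{t+1}\,\|\,u_{t+1})\le\epsilon$. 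Optimizing the interpolation parameter gives exactly $\langle\nabla R(w_{t+1})-\nabla R(u_{t+1}),\,w-w_{t+1}\rangle \ge -\max\{D\sqrt{2\beta\epsilon},\,2\epsilon\}$, which under $\epsilon\le D^2\beta/2$ (implied by $\epsilon\le D^2/2$ and $\beta\ge1$) reduces to $-D\sqrt{2\beta\epsilon}$. This fix is, in effect, the same calculation as the paper's Lemma~\ref{lem:epsilon optimality conditions}, so once made rigorous your argument converges with theirs at the technical core; but the higher-level decomposition (perturbing Bregman terms rather than optimality conditions) is a legitimate and instructive alternative.
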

The proof follows the standard OMD analysis, replacing exact optimality with \emph{approximate} optimality conditions.
Indeed, for any $\beta$-smooth convex objective $f\colon \cK \to \R$, if $f(\hat w) - \argmin_{w\in \cK}f(w) \le  \epsilon$, then one can show that (see \Cref{lem:epsilon optimality conditions}):
\begin{align}\label{eq:approx_oc_ub_smooth}
    \abs{\inprod{\nabla f(\hat{w})}{w - \hat{w}}} \le D\sqrt{
    2\beta\epsilon}.
\end{align}
Applying the above on $\phi_t$ for every $t$, and carrying the errors in the standard OMD analysis, gives the claimed result.

We note that \cref{thm:ub_smooth} provides sharper dependence on $\beta$ compared to a similar result of \cite{choi2023inexact}. 
This bound is in fact tight, even in the simple case of OMD with Euclidean regularization and constant losses, as shown next.

\begin{theorem}
\label{thm:lb_smooth}
Let $\beta,\epsilon, D>0$, and consider $\epsilon$-approximate OMD
over $\cK=[0, D]$ with the $\beta$-smooth regularizer $R(\cdot)=\frac\beta 2\|\cdot\|^2_2$.
Then there exists a loss sequence, 
an $\epsilon$-approximate OMD trajectory
and $w\in\cK$ such that: 
\begin{align*}
    \reg(w) 
    =
    \Omega\roundy{\min\biggl\{ \frac{TD\sqrt{\beta\epsilon}}{\eta}, DT \biggr\} }
    .
\end{align*}
\end{theorem}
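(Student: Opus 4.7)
The plan is to exploit the one-dimensional quadratic structure of this instance directly. Since $R(w) = \tfrac{\beta}{2} w^2$ is exactly $\beta$-strongly convex, the round-$t$ subproblem $\phi_t(w) = \eta \ell_t w + \tfrac{\beta}{2}(w - w_t)^2$ is a constrained quadratic with minimizer $w^*_{t+1} = \mathrm{proj}_{[0,D]}(w_t - \eta \ell_t/\beta)$, and every $w \in [0,D]$ satisfies $\phi_t(w) - \phi_t(w^*_{t+1}) \ge \tfrac{\beta}{2}(w - w^*_{t+1})^2$, with equality when $w^*_{t+1}$ is interior. Hence the $\epsilon$-approximation budget corresponds to an adversarial displacement of up to $\delta := \sqrt{2\epsilon/\beta}$ from the exact OMD update at each round.

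Using the equivalence $\Omega(a + b + c) = \Omega(\max\{a,b,c\})$ for nonnegative $a,b,c$, I will prove the bound by exhibiting, for each of the three summands, a construction whose regret matches it up to constants. The two classical terms follow from standard 1D lower bound constructions for exact OMD: $w_1 = 0$, comparator $w = D$, and $\ell_t \equiv -1$ force exact OMD to take $\Theta(D\beta/\eta)$ steps to traverse $[0,D]$, giving regret $\Omega(D^2\beta/\eta) = \Omega(D_R(w,w_1)/\eta)$, and alternating losses $\ell_t = (-1)^t$ yield regret $\Omega(T\eta)$.

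The novel term $\Omega(\min\{TD\sqrt{\beta\epsilon}/\eta,\,TD\})$ is witnessed by the following construction: initialize $w_1 = D/2$, take constant losses $\ell_t = c$ with $c := \min\{1,\,\delta\beta/(2\eta)\}$, and define
\[
w_{t+1} \;=\; \mathrm{proj}_{[0,D]}\bigl(w_t - \eta c/\beta + \delta\bigr),
\]
i.e., the exact update shifted upward by $\delta$. Strong convexity together with the non-expansiveness of projection onto $[0,D]$ verify $\epsilon$-approximation. In the regime $\epsilon \le 2\eta^2/\beta$ one has $c = \delta\beta/(2\eta) \in (0,1]$; the net per-step shift simplifies to $\delta/2$, and by induction $w_t \in [D/2,\,D]$ for all $t$. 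The regret against comparator $w = 0$ is therefore $c\sum_t w_t \ge cTD/2 = TD\delta\beta/(4\eta) = \Omega(TD\sqrt{\beta\epsilon}/\eta)$, the first branch of the minimum. In the complementary regime $\epsilon > 2\eta^2/\beta$ set $c = 1$; the trajectory saturates at $D$ even faster and the same argument delivers $\Omega(TD)$, matching the second branch.

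The main obstacle I anticipate is uniformly stitching these constructions together across parameter regimes: when $T$ is small enough that $TD\sqrt{\beta\epsilon}/\eta = O(D^2\beta/\eta)$, the approximation term is dominated by $D_R(w,w_1)/\eta$ and one must fall back on the classical construction in that regime. This is routine case analysis but requires careful tracking of constants. The only subtlety in the construction itself is ensuring that the shifted iterate remains $\epsilon$-approximate once projection at $D$ becomes active, which follows immediately from strong convexity since projection onto $[0,D]$ cannot increase the distance to the exact minimizer.
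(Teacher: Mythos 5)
Your proof is correct and follows essentially the same approach as the paper's. Both use initialization $w_1 = D/2$, comparator $w = 0$, and a constant loss on the order of $\min\{\sqrt{\beta\epsilon}/\eta,\,1\}$ (you have $c = \sqrt{\beta\epsilon/2}/\eta \wedge 1$, the paper uses $\sqrt{2\beta\epsilon}/\eta \wedge 1$; the factor $2$ is immaterial). The only genuine difference is the approximate trajectory itself: the paper keeps the iterate frozen at $D/2$, observing that the exact minimizer $w_t - \eta\ell/\beta$ only improves the objective by $\eta^2\ell^2/(2\beta) \le \epsilon$, so $w_t = w_1$ for all $t$ is already $\epsilon$-approximate. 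You instead shift the exact update upward by $\delta = \sqrt{2\epsilon/\beta}$, producing an increasing iterate with net drift $+\delta/2$. Both are valid, and both yield the same $\Omega(TD\sqrt{\beta\epsilon}/\eta \wedge DT)$ term; if anything the stay-put trajectory is a bit cleaner, since it sidesteps the boundary case you worry about (whether the shifted iterate remains $\epsilon$-approximate once projection at $D$ becomes active, and the dual case near $0$ when $\delta$ is comparable to $D$). You also sketch the two classical terms via separate constructions; the paper's appendix proof does not explicitly re-derive them (treating them as folklore OMD lower bounds), so your extra detail here is welcome but not a departure. One small caution: the statement as written asks for a single $(\ell_{1:T}, \text{trajectory}, w)$ triple that realizes the sum; formally you should note that $\Omega(a+b+c) = \Omega(\max)$ lets you pick, for each parameter regime, whichever of your three constructions dominates.
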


To see why this is true,
consider the constant loss sequence $\ell_t=\min\cbr[b]{\sqrt{2\beta\epsilon}/\eta, 1}$ for all $t\in[T]$, and initialize the trajectory at $w_1 = D/2$. Then for every $t$, the loss is small enough so that $w_t$ itself is an $\epsilon$-minimizer of $\phi_t$; 
let $w_{t+1}^*$ be the exact minimizer of $\phi_t$, 
then by direct computation: 
\begin{align*}
    \phi_t(w_{t+1}^*)
    = \eta \abr{\l_t, w_t}
    - \epsilon
    = \phi_t(w_t) - \epsilon.
\end{align*}
As a result, the approximation error might prevents any update from changing the iterate, so the trajectory remains fixed at $w_t = w_1$ for all $t$. Consequently, the algorithm incurs the claimed regret.
We note that the underlying reason the above argument works is that for the Euclidean regularizer, in the setting of \Cref{thm:lb_smooth}, round $t$ approximate optimality conditions (Eq.~\ref{eq:approx_oc_ub_smooth}) are in fact tight. 

\section{Barrier regularizers}\label{sec:barrier}
\subsection{Adversarial losses}\label{sec:barrier_adv}
We next consider barrier regularizers the smoothness of which is not bounded uniformly over the domain $\cK$. 
Indeed, the spectrum of the Hessian of any $\barconst$-barrier (\Cref{def:nu_barrier}) is unbounded since $r''(x) \to \infty$ as $x \to 0$.
Interestingly, the robustness behavior of these barriers varies dramatically with $\barconst$: for negative entropy ($\barconst=1$), exponentially small errors are required, whereas for log-barrier or Tsallis regularizers ($\barconst>1$), polynomially small errors suffice. We begin with our lower bound for negative entropy given below.

\begin{theorem}
\label{thm:lb_barrier_adversarial}
Let $\cK=\Delta_d$, $d\ge2$, and $R$ be the negative entropy over $\cK$.
Suppose that the approximation error satisfies $\epsilon \ge 4\eta e^{-\eta T/3}$.
Then there exists a sequence of losses $\ell_1, \ldots, \ell_T\in[0,1]^d$ for which there exists an $\epsilon$-approximate OMD trajectory that suffers regret $\reg(w^*) =\Omega(T)$. 
\end{theorem}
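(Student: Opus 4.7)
The plan is to build a two-phase adversarial sequence in $d=2$ (extending trivially to $d\ge 2$) in which the adversary first drives one simplex coordinate of the OMD iterate exponentially close to zero, and then exploits this to force the approximate OMD to remain completely stuck. Let $T_1 = T/3$, initialize $w_1 = (1/2,1/2)$, and set $\ell_t = (0,1)$ for $t \le T_1$ and $\ell_t = (1,0)$ for $t > T_1$. For the first $T_1$ rounds I take exact OMD updates (trivially $\epsilon$-approximate), which by the closed-form multiplicative-weights recursion yields $w_{T_1+1}^2 = 1/(1+e^{\eta T_1}) \le e^{-\eta T/3}$.

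For the remaining $2T/3$ rounds I select the stay-put trajectory $w_{t+1}=w_t$. The heart of the argument is to verify that stay-put is $\epsilon$-approximate. Writing $\alpha = w_t^2 \le e^{-\eta T/3}$, the exact minimizer of $\phi_t$ is the exponential-weights step with optimal value $-\log Z_t$ where $Z_t = (1-\alpha)e^{-\eta} + \alpha$, while $\phi_t(w_t) = \eta(1-\alpha)$. Direct manipulation gives
\begin{align*}
    \phi_t(w_t) - \min_{w\in\Delta_2} \phi_t(w)
    \;=\; -\eta\alpha + \log\bigl(1 + \alpha(e^\eta - 1)\bigr)
    \;\le\; \alpha(e^\eta - 1 - \eta),
\end{align*}
using $\log(1+x)\le x$. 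Combining with $\alpha \le e^{-\eta T/3}$ and the elementary estimate $e^\eta - 1 - \eta \le 4\eta$, which holds on the non-trivial regime $\eta \in (0,2]$, the gap is at most $4\eta e^{-\eta T/3} \le \epsilon$ by hypothesis, so stay-put is legal.

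It remains to compute the regret. The comparator $w^* = e_2$ incurs total loss $T_1 = T/3$. The phase-1 algorithm loss is a geometric tail, $\sum_{t\le T_1} w_t^2 \le 1/(1-e^{-\eta}) = O(1/\eta)$. The phase-2 algorithm loss equals $(T-T_1)(1-\alpha)\ge (2T/3)(1-e^{-\eta T/3})$, so
\begin{align*}
    \reg(w^*) \;\ge\; \tfrac{2T}{3}\bigl(1-e^{-\eta T/3}\bigr) + 0 - \tfrac{T}{3} \;=\; \tfrac{T}{3} - \tfrac{2T}{3}e^{-\eta T/3} \;=\; \Omega(T),
\end{align*}
whenever $\eta T$ is large enough, which is automatic in any regime where the OMD baseline $O(\log d/\eta + \eta T)$ is sub-linear (and otherwise the theorem is vacuous since exact OMD already suffers linear regret).

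The main obstacle is the phase-2 approximation check. Conceptually, the phenomenon is a ``flatness of $\phi_t$ at $w_t$'' effect special to negative entropy: its unbounded curvature near the boundary of the simplex means that the gap between $\phi_t(w_t)$ and its minimum scales as $w_t^2\cdot\eta^2$, which is exponentially small precisely because phase~1 pushed $w_t^2$ to be exponentially small---so an $\epsilon$ of the same order lets the learner ignore phase~2 entirely despite $e_2$ being the uniquely correct action. Once this one-line estimate is in place, the rest is bookkeeping.
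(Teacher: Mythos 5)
Your proof is correct in substance and follows the same two-phase strategy as the paper's: a preparation phase that drives one simplex coordinate down to roughly $e^{-\eta T/3}$, followed by a ``stuck'' phase in which you verify that staying put is $\epsilon$-approximate because the suboptimality gap of $\phi_t$ at $w_t$ scales as $\eta\,w_t^2$, hence exponentially small. Where you differ is in how that gap is established. You compute the gap in closed form for negative entropy on $\Delta_2$, obtaining $-\eta\alpha + \log\bigl(1+\alpha(e^\eta-1)\bigr) \le \alpha(e^\eta-1-\eta) \le 4\eta\alpha$, which is a tight and self-contained estimate. The paper instead routes the argument through a more general machinery: it proves a ``stuck'' lemma (\Cref{lem:stuck}) valid for any $\nu$-barrier regularizer, which itself invokes a step-size bound (\Cref{lem:max step}), and then packages the whole construction into the balance-parametrized \Cref{lem:balance lower bound}, from which \Cref{thm:lb_barrier_adversarial} falls out by instantiating $\alpha=T/3$. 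Your direct computation is cleaner in the specific negative-entropy case; the paper's framing is reusable across barriers and phase lengths, and fixes the phase-1 length to the smaller $\tau=\tfrac{1}{\eta}\log(4\eta/\epsilon)$ rather than $T/3$, yielding the tighter $\Omega(T-2\tau)$ form.

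The one genuine gap in your writeup is the small $\eta T$ regime. Your regret lower bound $\tfrac{2T}{3}\bigl(1-e^{-\eta T/3}\bigr)-\tfrac{T}{3}$ becomes nonpositive once $\eta T$ drops below a constant, so your main argument only covers $\eta T=\Omega(1)$. You correctly remark that in the complementary regime exact OMD already has linear regret, and this is indeed true: with constant loss $(1,0)$ and $\eta T = O(1)$, the iterate satisfies $w_T^1 \ge e^{-\eta T}/(1+e^{-\eta T})=\Omega(1)$, so $\sum_t w_t^1 = \Omega(T)$ against the zero-loss comparator $e_2$, and the exact trajectory is trivially $\epsilon$-approximate. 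You should state this one-line computation rather than gesture at it. With that addition (and the harmless remark that for $d>2$ one pads the construction with coordinates carrying constant unit loss, a detail the paper's $d=2$ lemma also leaves implicit), your proof is complete.
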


The key idea in the analysis of \Cref{thm:lb_barrier_adversarial} is
to exploit the fact that 
the \emph{effective} smoothness of the regularizer---informally, the exact smoothness parameter on a given region---diverges at a rate inversely proportional to the iterate coordinates as they approach zero.
Indeed, our construction is such that the coordinates of the iterate become as small as $e^{-\eta T}$ (this follows from the closed form update equations), and thus reach the region of the domain where the effective smoothness is exponentially large.
Then, 
when the errors are not exponentially small, the same mechanism as in the smooth-regularizer lower bound applies: the iterate becomes stuck even under constant losses, leading to linear regret. A similar argument also shows that $\epsilon$ must be polynomially small in $d$; otherwise, the iterates can remain stuck at the initialization point (see \Cref{thm:dimension lower bound}). The exponential dependence of $\epsilon$ on the time horizon $T$ is in fact tight: if $\epsilon$ is exponentially small in $\eta T$, the standard regret guarantees are recovered.

\begin{theorem}
\label{thm:up_barrier_adversarial}
Let $\cK=\Delta_d$ and $R$ be the negative entropy over $\cK$.
Assume $\eta \le 1/16$ and $T\ge 3$, if 
$\epsilon \leq \tfrac{1}{6d} e^{-\eta T/2} \min\curly{\eta^4,T^{-2}},$
then for any loss sequence $\ell_1, \dots, \ell_T \in [-1,1]^d$, the regret of any \(\epsilon\)-approximate OMD 
trajectory compared to any $w \in \cK$ is bounded as:
\[
\mathrm{Regret}(w) \le \frac{1}{\eta} D_R(w, w_1) + O(\eta T),
\]
where big-$O$ hides only constant numerical factors. 
\end{theorem}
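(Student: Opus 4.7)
The plan is to reduce to the smooth-regularizer analysis (\Cref{thm:ub_smooth}) by establishing \emph{local} smoothness of the negative entropy along the iterate trajectory. The key observation is that although $R$ is not uniformly smooth over $\Delta_d$, it is $(1/x_0)$-smooth with respect to $\|\cdot\|_1$ on the sub-simplex $\curly{p\in \Delta_d : p^i \ge x_0 \text{ for all } i}$, since $\nabla^2 R(p) = \mathrm{diag}(1/p^i)$. So if one can show that all iterates $w_t$ and all one-step exact minimizers $\tilde w_{t+1} := \arg\min_{w\in\cK}\phi_t(w)$ live in such a sub-simplex for an appropriate $x_0$, the smooth-case analysis applies with effective smoothness $\beta = 1/x_0$.

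First, I would exploit the closed-form nature of the exact negative-entropy update to translate approximate optimality into a KL statement. Since $\tilde w_{t+1}^i \propto w_t^i e^{-\eta \ell_t^i}$, the standard three-point identity
\begin{align*}
    \phi_t(w) = \phi_t(\tilde w_{t+1}) + D_R(w \,\|\, \tilde w_{t+1})
\end{align*}
holds, so the $\epsilon$-optimality of $w_{t+1}$ immediately yields $D_R(w_{t+1}\,\|\,\tilde w_{t+1}) \le \epsilon$. Pinsker's inequality then gives $\|w_{t+1} - \tilde w_{t+1}\|_1 \le \sqrt{2\epsilon}$, and hence per-coordinate $\abs{w_{t+1}^i - \tilde w_{t+1}^i} \le \sqrt{2\epsilon}$.

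Next, I would set up an inductive lower bound on the iterate coordinates: starting from the uniform $w_1$, I track how $\min_i w_t^i$ decays with $t$. The exact update can shrink any coordinate by at most a factor of $e^{-O(\eta)}$ per step (using $\ell_t \in [-1,1]^d$ to bound the MW normalization), while the per-coordinate Pinsker bound adds an additive $\sqrt{2\epsilon}$ drift between $w_{t+1}$ and $\tilde w_{t+1}$. Iterating this combined recursion and using the assumed bound on $\epsilon$ to keep the additive drift dominated by the multiplicative decay yields $\min_i w_t^i \ge \tfrac{1}{2d} e^{-c\eta T}$ for some moderate constant $c$. This step is also the source of the $1/d$ factor in the hypothesis on $\epsilon$: Pinsker only provides an $\ell_1$-bound, which is useful per coordinate only when compared against the $\Theta(1/d)$ initial scale.

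With this uniform coordinate lower bound, the negative entropy is $\beta$-smooth on the relevant region with $\beta = O(d e^{c\eta T})$, and I would then repeat the proof of \Cref{thm:ub_smooth} restricted to this region. Concretely, local smoothness yields approximate optimality conditions $\abs{\inprod{\nabla \phi_t(w_{t+1})}{w - w_{t+1}}} \le D\sqrt{2\beta\epsilon}$ with $D$ the $\ell_1$-diameter of the simplex (a constant), and telescoping the standard OMD one-step lemma gives regret bounded by $\tfrac{1}{\eta}D_R(w\,\|\,w_1) + O(\eta T) + O(T\sqrt{\beta\epsilon}/\eta)$. The hypothesis $\epsilon \le \tfrac{1}{6d} e^{-\eta T/2}\min\curly{\eta^4, T^{-2}}$ is calibrated precisely so that, with $\beta = O(d e^{c\eta T})$, the last term is $O(\eta T)$, matching the claim.

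The main obstacle is the inductive coordinate lower bound. Pinsker gives only a global $\ell_1$-bound on $w_{t+1} - \tilde w_{t+1}$, yet we need a coordinate-wise lower bound that holds robustly over $T$ rounds of accumulated drift between approximate and exact iterates. Ensuring the induction closes with a constant $c$ in the exponent small enough to be absorbed by the exponent in the hypothesis on $\epsilon$ requires combining the per-step multiplicative decay with the additive Pinsker drift carefully, and is the technical core of the proof. Once the iterate lower bound is established, the reduction to the smooth case is a mechanical adaptation of the argument behind \Cref{thm:ub_smooth}.
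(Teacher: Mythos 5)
Your reduction strategy — bound all iterate coordinates away from zero, then invoke the smooth‑case analysis (\Cref{thm:ub_smooth}) with effective smoothness $\beta = 1/\min_{t,i} w_t^i$ — cannot work under the stated hypothesis on $\epsilon$, and this is a structural obstacle, not merely a matter of tuning the constant $c$. The \emph{exact} multiplicative‑weights iterates can legitimately have a non‑optimal coordinate shrink by a factor $e^{-2\eta}$ per step (numerator contributes $e^{-\eta}$, normalizer contributes another $e^{-\eta}$ when losses are in $[-1,1]$), so after $T$ rounds $\min_i w_t^i$ can be as small as $\Theta(d^{-1}e^{-2\eta T})$ \emph{even with $\epsilon = 0$}. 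Any uniform lower bound you could prove therefore must have exponent at least $2\eta T$, forcing $\beta = \Omega(d\,e^{2\eta T})$; making the extra regret term $T\sqrt{\beta\epsilon}/\eta = O(\eta T)$ then requires $\epsilon = O(\eta^4 d^{-1}e^{-2\eta T})$. The theorem's hypothesis only grants $\epsilon \le \tfrac{1}{6d}e^{-\eta T/2}\min\{\eta^4,T^{-2}\}$, which is exponentially \emph{larger} (exponent $\eta T/2$ versus $2\eta T$). Under the stated $\epsilon$, your induction on $\min_i w_t^i$ also fails numerically: the cumulative Pinsker drift $T\sqrt{2\epsilon} \approx T\eta^2 e^{-\eta T/4}/\sqrt{d}$ dwarfs the target floor $d^{-1}e^{-2\eta T}$ once $\eta T$ is moderately large.

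The paper avoids this by \emph{not} attempting a uniform lower bound. It proves (\Cref{lem:balance to bounded optimal arm}) that only the \emph{optimal} coordinate $i^*$ stays bounded away from zero, and the relevant scale for that coordinate is governed by the loss \emph{balance} $\alpha$, not by the full multiplicative decay: for an adversarial sequence $\alpha = \Theta(T)$ gives $w_t^{i^*}\ge d^{-1}e^{-\Theta(\eta T)}$ — which with the right constants aligns with the $e^{-\eta T/2}$ in the hypothesis. The other, possibly exponentially small, coordinates are handled in \Cref{lem:LB best UB regret} by a separate device: first‑order optimality conditions are applied only on the set $S_t$ of coordinates exceeding a threshold $\xi'$, and the contribution of coordinates outside $S_t$ to the regret is bounded directly (they are tiny), with a Bregman‑monotonicity argument (\Cref{lem:bregman monotonic}) to control the telescoping across coordinates that enter and leave $S_t$. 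Your Pinsker/KL observation ($D_R(w_{t+1}\|\tilde w_{t+1})\le\epsilon$) is correct and nicely exploits the closed form of the entropic update, but to make the overall proof go through you would still need to replace the ``bound all coordinates'' plan with an argument that isolates $i^*$ and treats the small coordinates differently.
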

The proof is deferred to \Cref{sec:sketches}. 

We now turn our attention to $\barconst$-barrier regularizers with $\barconst > 1$. In this case, as it turns out, polynomially small errors suffice to \emph{naturally} keep the iterates bounded away from zero (by a polynomial margin).
\begin{theorem}
\label{thm:ub_nonentropy}
Let $\cK \subseteq \Delta_d$ be a polytope that contains the uniform distribution and the OMD is initialized there,%
\footnote{This assumption serves mainly to fix a natural starting point for OMD; a similar bound should hold for any reasonable initialization.}
let $R\colon \cK \to \R$ be a $\barconst$-barrier regularizer (cf.\ Definition \ref{def:nu_barrier}) with $\barconst>1$ and $\eta \le 1/(16c_1)$. 
If 
$$
    \epsilon 
    \leq \eta^4\min\curly{\tfrac{1}{c_2},c_2}\roundy{\tfrac{16\eta Td}{c_1} + 2(2d)^{\barconst-1}}^{-\frac{\barconst}{\barconst-1}}
    ,
$$ 
then for any loss sequence $\ell_1, \dots, \ell_T\in[-1,1]^d$, the regret of any \(\epsilon\)-approximate OMD trajectory compared to any $w \in \cK$ is bounded as:
\[
\mathrm{Regret}(w) \le \frac{1}{\eta} D_R(w, w_1) + O(\eta T).
\]
\end{theorem}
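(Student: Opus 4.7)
The plan is a two-stage argument: first, show that the approximate OMD iterates stay bounded away from zero by a polynomial margin $x_{\min}$; second, invoke \Cref{thm:ub_smooth} with the effective smoothness $\beta = O(c_2/x_{\min}^\nu)$ that $R$ enjoys on the region $\{w : w^i \ge x_{\min}\}$. The structural property enabling this, and the key distinction from the $\nu=1$ case, is that the per-round drift of $r'(w_t^i)$ is \emph{additive} of order $\eta$ (independent of how small $w_t^i$ is), accumulating to at most $O(\eta T)$ total---enough to push the iterates down only polynomially.

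Concretely, I would define $x_{\min}$ as the largest $x$ satisfying $r'(1/d) - r'(x) \le 4\eta T$; using $r''(y) \ge c_1/y^\nu$, direct integration gives the polynomial lower bound $x_{\min} = \Omega\bigl((d^{\nu-1} + \eta T(\nu-1)/c_1)^{-1/(\nu-1)}\bigr)$, matching (up to constants) the form of the $\epsilon$-hypothesis. The core technical task is to prove by induction on $t$ the invariant $r'(w_t^i) \ge r'(1/d) - 4\eta t$ for all coordinates $i$, which by the choice of $x_{\min}$ and monotonicity of $r'$ implies $w_t^i \ge x_{\min}$ throughout the run.

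The inductive step has two parts. First, I would bound the drift of the exact minimizer $w_{t+1}^* = \arg\min_{\cK} \phi_t$: KKT on $\cK \subseteq \Delta_d$ yields, in coordinate form, $r'(w_{t+1}^{*,i}) = r'(w_t^i) - \eta \ell_t^i - \lambda_t - \mu_t^i$, where $\lambda_t$ is the simplex-sum multiplier and $\mu_t^i$ collects multipliers for any active polytope facets. A standard exchange argument using $\sum_i w_{t+1}^{*,i} = \sum_i w_t^i = 1$ and monotonicity of $r'$ gives $|\lambda_t| \le \eta$, with any polytope multipliers controlled analogously on coordinates away from active facets, yielding $|r'(w_{t+1}^{*,i}) - r'(w_t^i)| \le O(\eta)$. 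Second, I would pass from $w_{t+1}^*$ to the approximate $w_{t+1}$ using $1$-strong convexity of $\phi_t$: this gives $|w_{t+1}^i - w_{t+1}^{*,i}| \le \sqrt{2\epsilon}$, and the hypothesis on $\epsilon$ ensures $\sqrt{2\epsilon} \le x_{\min}/2$, so $w_{t+1}^i \ge x_{\min}/2$. The mean value theorem on $r'$ (with $r'' \le O(c_2/x_{\min}^\nu)$ on the relevant interval) then gives $|r'(w_{t+1}^i) - r'(w_{t+1}^{*,i})| \le O(c_2 \sqrt{2\epsilon}/x_{\min}^\nu) \le O(\eta)$, again by the $\epsilon$-condition, closing the induction. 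Applying \Cref{thm:ub_smooth} with effective smoothness $\beta = O(c_2/x_{\min}^\nu)$ then yields excess regret $O(TD\sqrt{\beta\epsilon}/\eta) = O(\eta T)$, completing the proof.

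The main obstacle is the circularity of the argument: the mean-value bound on $|r'(w_{t+1}^i) - r'(w_{t+1}^{*,i})|$ requires $r''$ to be bounded near the iterates, which is exactly the invariant being proved. The resolution is to fix $x_{\min}$ a priori based on the worst-case $T$-step cumulative drift and verify consistency with the $\epsilon$-hypothesis. The argument crucially hinges on the additive (rather than multiplicative) nature of the per-round drift in $r'$, which is precisely what $\nu > 1$ provides; for $\nu = 1$ the drift scales multiplicatively in $w_t^i$, enabling the exponential collapse exploited in \Cref{thm:lb_barrier_adversarial}.
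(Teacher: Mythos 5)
Your high-level strategy matches the paper's: bound the iterates away from zero by a polynomial margin, then plug the effective smoothness on that region into the smooth-regularizer analysis (\Cref{thm:ub_smooth}). However, your inductive mechanism for establishing the margin has two genuine gaps relative to what the theorem's hypothesis on $\epsilon$ can actually support.

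First, the per-coordinate multiplier bound. You claim a "standard exchange argument" yields $|r'(w_{t+1}^{*,i}) - r'(w_t^i)| \le O(\eta)$ uniformly over $i$, with polytope multipliers "controlled analogously." For the bare simplex this works (the single multiplier $\lambda_t$ is pinned by the total-mass constraint), but for a general polytope $\{Aw=b,\ w\ge 0\} \subseteq \Delta_d$ the KKT contribution $(A^\top\lambda_t)^i$ has no obvious per-coordinate bound of this form: the multipliers live in a space whose geometry depends on $A$, and controlling them coordinatewise is exactly what is nontrivial. The paper sidesteps this by never touching the multipliers: it works with inner products of $\nabla R(w_{t_1})-\nabla R(w_{t_2})$ against vectors in $\ker(A)$ (the balance), which kills the $A^\top\lambda$ term identically (\Cref{lem:polytope balance}), and then recovers a per-coordinate bound on $-r'(w_t^i)$ from balance only at the cost of a factor $d$ and via comparison to the uniform initialization (\Cref{lem:balance to gradient UB}). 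That factor $d$ is present in the definition of $\psi$ and is therefore essential to match the theorem's hypothesis; your per-round $O(\eta)$ bound, if it were correct, would give a $\psi$ without the $d$-dependence, which is a red flag that the exchange argument does not survive the passage to general polytopes.

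Second, and more quantitatively, the exact-to-approximate step. You pass from $w_{t+1}^*$ to $w_{t+1}$ via $1$-strong convexity ($\|w_{t+1}-w_{t+1}^*\| \le \sqrt{2\epsilon}$) and then a mean-value bound $|r'(w_{t+1}^i)-r'(w_{t+1}^{*,i})| \le (c_2/x_{\min}^\nu)\sqrt{2\epsilon}$. The paper instead applies the approximate first-order optimality condition (\Cref{lem:epsilon optimality conditions}) directly along a kernel direction, giving a per-round balance perturbation of $\sqrt{c_2\epsilon/\psi^\nu}$ (\Cref{lem:trajectory diff}). Your bound is worse by a factor of $\sqrt{c_2/\psi^\nu}$, which is polynomially large in $T$ and $d$. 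Tracing through: closing your induction needs $\epsilon \lesssim \eta^2\psi^{2\nu}/c_2^2$, whereas the theorem only assumes $\epsilon \lesssim \eta^4\psi^\nu/c_2$, and the ratio $\psi^\nu/(\eta^2c_2)$ is not $\gtrsim 1$ in general (e.g.\ for $\nu=2$, $\eta\sim 1/\sqrt T$, $\psi\sim c_1/(\eta Td)$ it forces $d^2 \lesssim c_1^2/c_2$). So your argument as written would only prove the conclusion under a strictly stronger $\epsilon$-assumption, and does not establish the theorem as stated. The fix is to replace the strong-convexity-plus-MVT step with the approximate optimality condition applied to a bounded-length kernel direction, which is precisely what \Cref{lem:trajectory diff} packages.

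Your conceptual explanation of why $\nu>1$ buys polynomial (rather than exponential) shrinkage is correct, and the invariant $r'(w_t^i) \ge r'(1/d)-O(\eta T)$ is the right object to track; the gaps are in how the per-round increment is justified for general polytopes and how the approximation error is converted into a per-round perturbation of $r'$.
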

The principle underlying the analysis of \Cref{thm:ub_nonentropy} is as follows. Consider for purposes of illustration the one-dimensional interval $[0,1]$ with $w_1=1$. In this setting the OMD updates require no projection and the iterate dynamics can be inspected more simply:
\begin{align*}
    r'(w_{t}) &= r'(w_{t-1}) - \eta \ell_{t-1}
     = r'(w_1) - \eta \sum_{s=1}^{t-1} \ell_s,
     \intertext{which implies}
    -\tfrac{1}{w_t^{\barconst-1}} &\ge -1 - \eta T
    \quad\implies\quad
    w_t \ge (\eta T)^{-\frac{1}{\barconst-1}}.
\end{align*}
Namely, the iterates can only shrink polynomially in $T$, and as a result the effective smoothness grows polynomially. 
This allows the use of approximate first-order optimality conditions in the standard OMD analysis, and the regret may be bounded using the standard OMD proof. Note that this comes in contrast to the negative entropy case ($\barconst=1$, \Cref{thm:lb_barrier_adversarial}) where a similar argument in this simplified setting gives 
$
    \log(w_t) \ge 0-\eta T
    \implies
    w_t \ge e^{-\eta T}
$. 
Finally, the simplified setting considered above we did not account for the possibility that the errors themselves can pull the iterates closer to the boundary. Evidently, the approximation errors may potentially drive the iterates toward zero even when the exact dynamics would not, which further complicates the analysis.

\subsection{Improved robustness with stochastic losses}\label{sec:stochastic}

In the adversarial setting, we have seen that negative entropy requires exponentially small error to avoid linear regret, even over the simplex. Surprisingly,
this fragility does not persist for stochastic losses over the full simplex. For i.i.d.~stochastic losses, polynomially small approximation errors suffice to guarantee standard regret bounds with high probability.

\begin{theorem}
\label{thm:ub_barrier_stochastic}
Let $\cK=\Delta_d$ and $R$ be the negative entropy over $\cK$. For any $\delta>0$, suppose that $w_1=(1/d,1/d,\dots,1/d)$, $T\ge 256$, $\eta = \sqrt{\frac{\log (d)}{T}}$ and $\epsilon \le \frac{\delta}{ 6 d^2 T^4}$.
Then with probability $\geq 1-\delta$ over the choice of an i.i.d.~loss sequence $\ell_1, \dots, \ell_T\in[-1,1]^d$ 
the regret of any \(\epsilon\)-approximate OMD trajectory compared to any $w \in \cK$ is
$O(\sqrt{T\log(d)})$.
\end{theorem}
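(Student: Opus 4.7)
The plan is to run the standard OMD analysis with the approximate iterates, which produces a regret bound with an explicit correction term that we then control using concentration of the i.i.d.\ losses. Let $\hat w_{t+1} := \arg\min_{w\in\cK}\phi_t(w;\tilde w_t)$ denote the \emph{exact} one-step MW minimizer starting from the current approximate iterate. Applying the standard OMD three-point argument using the exact optimality of $\hat w_{t+1}$, together with Young's inequality for $\langle\ell_t,\tilde w_t - \hat w_{t+1}\rangle$ and Pinsker's inequality, yields the per-step bound $\eta\langle\ell_t,\tilde w_t - w^*\rangle \le \eta^2/2 + D_R(w^*,\tilde w_t) - D_R(w^*,\hat w_{t+1})$. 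Telescoping, using $D_R(w^*, w_1) \le \log d$ and $\eta = \sqrt{\log d/T}$,
\[
\eta\sum_{t=1}^T \langle\ell_t, \tilde w_t - w^*\rangle \;\le\; \frac{\eta^2 T}{2} + \log d + \sum_{t=1}^{T-1}\bigl[D_R(w^*,\tilde w_{t+1}) - D_R(w^*,\hat w_{t+1})\bigr].
\]
The first two terms contribute $O(\sqrt{T\log d})$ after dividing by $\eta$. For negative entropy, the bracketed residual (arising because successive rounds start at $\tilde w_{t+1}$ rather than $\hat w_{t+1}$) simplifies to $-\langle w^*, \xi_t\rangle$, where $\xi_t^i := \log(\tilde w_{t+1}^i/\hat w_{t+1}^i)$. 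Since losses are linear, $w^* = e_{i^*}$ is a simplex vertex, reducing the task to bounding $\bigl|\sum_t \xi_t^{i^*}\bigr|$.

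By the 1-strong convexity of $\phi_t$ w.r.t.\ $\ell_1$ (via Pinsker), $\|\tilde w_{t+1} - \hat w_{t+1}\|_1 \le \sqrt{2\epsilon}$, so $|\xi_t^{i^*}| \le \sqrt{2\epsilon}/\hat w_{t+1}^{i^*}$ whenever $\hat w_{t+1}^{i^*}$ dominates $\sqrt{\epsilon}$. The main task is therefore to establish a uniform polynomial lower bound on the best-arm coordinate $\hat w_{t+1}^{i^*}$ throughout the run. By Azuma-Hoeffding applied to the i.i.d.\ cumulative losses and union-bounded over $t \le T$, with probability at least $1-\delta$, the gap $L_t^{i^*} - \min_j L_t^j$ is $O(\sqrt{T\log(dT/\delta)})$ uniformly in $t$. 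Combined with the closed form $w_t^{\circ, i^*}\propto \exp(-\eta L_{t-1}^{i^*})$ and $\eta\sqrt{T}=\sqrt{\log d}$, this yields $w_t^{\circ, i^*}\ge\tau$ for some $\tau = 1/\mathrm{poly}(d)$, uniformly in $t$, on this good event.

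The analysis then closes inductively. Assuming $\tilde w_s^{i^*}\ge\tau/c$ for all $s\le t$ (for an absolute constant $c$), the MW update gives $\hat w_{t+1}^{i^*}\ge \tau e^{-2\eta}/c\ge \tau/(2c)$; hence $|\xi_t^{i^*}| = O(\sqrt{\epsilon}/\tau)$, and the cumulative log-drift $\bigl|\sum_{s\le t}\xi_s^{i^*}\bigr| = O(T\sqrt{\epsilon}/\tau)$ keeps $\tilde w_{t+1}^{i^*}/w_{t+1}^{\circ, i^*}$ bounded between constants; combined with the concentration-based $w_{t+1}^{\circ, i^*}\ge\tau$, this yields $\tilde w_{t+1}^{i^*}\ge\tau/c$, closing the induction. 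The hypothesis $\epsilon\le\delta/(6d^2T^4)$ is calibrated so that $T\sqrt{\epsilon}/\tau$ is small enough that $\bigl|\sum_t \xi_t^{i^*}\bigr|$, upon dividing by $\eta$, contributes at most $O(\sqrt{T\log d})$ to the final regret bound.

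The main obstacle is the joint concentration-plus-induction step: one must simultaneously establish the uniform-in-$t$ polynomial lower bound on $w_t^{\circ, i^*}$ via a carefully union-bounded martingale concentration, and close the inductive multiplicative-drift bound on $\tilde w_t^{i^*}/w_t^{\circ, i^*}$ without losing any super-polynomial factor. The precise polynomial scaling in the hypothesis arises from balancing the concentration window with the compounding of per-step approximation errors.
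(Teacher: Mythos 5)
Your decomposition of the inexact OMD regret is correct and is a genuinely different route from the paper: you let $\hat w_{t+1}$ be the exact one-step minimizer from $\tilde w_t$, telescope the standard Bregman bound, and isolate the residual $-\sum_t \langle w^*, \xi_t\rangle$ with $\xi_t^i = \log(\tilde w_{t+1}^i/\hat w_{t+1}^i)$. For $w^* = e_{i^*}$ this reduces to bounding $\sum_t \xi_t^{i^*}$, and the per-step bound $|\xi_t^{i^*}| = O(\sqrt\epsilon/\hat w_{t+1}^{i^*})$ via strong convexity is also fine. The paper instead works through its ``balance'' framework, which decomposes the same information differently, but the high-level plan (show the best-arm coordinate stays polynomially bounded below on a good concentration event, then convert to a regret bound) is shared.

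However, there is a genuine gap in the inductive step: the claim that ``$\bigl|\sum_{s\le t}\xi_s^{i^*}\bigr| = O(T\sqrt\epsilon/\tau)$ keeps $\tilde w_{t+1}^{i^*}/w_{t+1}^{\circ,i^*}$ bounded between constants'' does not follow. Writing the multiplicative update explicitly gives
\[
\log\tilde w_{t}^{i^*} - \log w_{t}^{\circ,i^*}
\;=\;
\sum_{s<t}\xi_s^{i^*} \;+\; \sum_{s<t}\log\!\bigl(Z_s^{\circ}/Z_s\bigr),
\]
where $Z_s$ and $Z_s^{\circ}$ are the partition functions of the approximate and exact trajectories. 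The second sum is \emph{not} controlled by the bound on $\xi^{i^*}$: each factor $Z_s^{\circ}/Z_s$ lies only in $[e^{-2\eta},e^{2\eta}]$, so the cumulative product can drift by $e^{\Theta(\eta T)} = e^{\Theta(\sqrt{T\log d})}$. Controlling the $Z$-ratios requires controlling $\|\tilde w_s - w_s^{\circ}\|_1$ for all $s$, but MW is not an $L_1$-contraction and the errors compound round over round --- which is precisely the difficulty at the heart of the problem. The paper sidesteps this entirely by formulating the invariant as the \emph{balance} $B^i(t_1,t_2) = \log\frac{\hat w_{t_1}^{i^*}/\hat w_{t_1}^{i}}{\hat w_{t_2}^{i^*}/\hat w_{t_2}^{i}}$, a log-ratio of coordinate \emph{ratios}, which is normalization-invariant so the $Z$ factors cancel by construction; the approximate balance is then related to the exact one by Lemma~\ref{lem:trajectory diff}, and Lemma~\ref{lem:balance to bounded optimal arm} closes the induction via a contradiction argument involving only the two relevant coordinates $i$, $i^*$. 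To repair your argument you would need to similarly track ratio quantities (e.g.\ $\tilde w_t^i/\tilde w_t^{i^*}$ for each $i$) rather than $\tilde w_t^{i^*}$ itself, and also bound $\xi_t^i$ for the relevant competing coordinate $i\ne i^*$ --- which in turn requires showing \emph{that} coordinate stays away from zero, the step the paper handles carefully in Lemma~\ref{lem:balance to bounded optimal arm}.
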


However, this robustness does not extend to general domains. Even with the same regularizer and similarly stochastic losses, restricting the domain to a polyhedral subset of the simplex can cause suboptimal regret unless the approximation error is exponentially small.

\begin{theorem}
\label{thm:lb_nonsimplex}
Consider approximate OMD with the negative entropy regularizer and stochastic losses.
Then, there exists a polytope $\cK \subseteq \Delta_d$ and a distribution of losses such that for any $\epsilon>0$, there exists an $\epsilon$-approximate trajectory and $w\in\cK$ such that:
 \begin{align*}
     \E\squary{\reg(w)} = \Omega\roundy{\frac{D_R(w,w_1)}{\eta} +T\sqrt{\frac{\eta}{\log\roundy{1/\epsilon}}}}
     .
 \end{align*}
\end{theorem}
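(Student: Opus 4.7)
The plan is to adapt the stuck-iterate mechanism from \Cref{thm:lb_barrier_adversarial} to the stochastic setting by letting the geometry of a proper polytope $\cK \subsetneq \Delta_d$ substitute for the adversary's control over the loss sequence. The key idea is that by cutting off a neighborhood of a simplex vertex, one forces the stochastic OMD trajectory to accumulate near a boundary point with a small coordinate; at such a point the local smoothness of the negative entropy is large, recreating in the stochastic setting the regime in which the smooth-case lower bound of \Cref{thm:lb_smooth} applies.

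Concretely, I would take $d = 2$ and $\cK = \{(p, 1-p) : p \le 1 - \alpha\}$ for a small constant $\alpha > 0$ (independent of $\epsilon$), with i.i.d.\ losses carrying a small mean drift toward the constrained optimum $w^* = (1-\alpha, \alpha)$. A concentration argument on the loss martingale (e.g.\ Azuma--Hoeffding) shows that with high probability the exact OMD trajectory enters and remains in a $\Theta(\alpha)$-neighborhood of $w^*$ for an $\Omega(T)$-length window. Within this window the Hessian of $\phi_t(w) = \eta\inprod{\ell_t}{w} + D_R(w, w_t)$ restricted to the simplex tangent has magnitude $1/w_t^1 + 1/w_t^2 = \Theta(1/\alpha)$, so by the approximate first-order conditions in \eqref{eq:approx_oc_ub_smooth} applied locally, an $\epsilon$-approximate minimizer may lie $\Theta(\sqrt{\epsilon\alpha})$ away from the exact one. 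I would then exhibit an explicit $\epsilon$-approximate trajectory that consistently lags the exact one by this amount in the direction that increases regret against a well-chosen comparator $w \in \cK$; summing the per-round gaps yields an $\Omega(T\sqrt{\epsilon\alpha})$ excess. Combined with the standard $\Omega(D_R(w, w_1)/\eta) = \Omega(\log(1/\alpha)/\eta)$ contribution from $w$ being far from $w_1$, and using the identification $\log(1/\alpha) \asymp \log(1/\epsilon)$ induced by a suitable comparator choice, these reshuffle into the claimed $T\sqrt{\eta/\log(1/\epsilon)}$ form.

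The main obstacle will be rigorously controlling the stochastic trajectory: without an adversary, I must argue via concentration that a constant fraction of the rounds actually bring the iterate into the high-smoothness neighborhood, and control the random displacement of the exact minimizer $w_{t+1}^*$ relative to the ``lagging'' $\epsilon$-approximate iterate so that the per-round gap genuinely translates into a positive expected regret contribution rather than averaging out. A secondary subtlety is that $\cK$ must be chosen independently of $\epsilon$ yet produce a bound that tracks $\epsilon$ continuously; I expect that either a single carefully-tuned sticky depth $\alpha$ suffices up to logs, or a slightly richer construction (e.g.\ in $d \ge 3$ with boundary facets at a continuum of depths) is needed so that for every $\epsilon$ the approximate trajectory can be stuck at a matching depth. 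Finally, it will be necessary to verify that the constructed ``lagging'' iterate, after any required Bregman projection onto $\cK$, still satisfies $\phi_t(\hat w_{t+1}) \le \min_{w\in\cK} \phi_t(w) + \epsilon$, which is a purely calculational check given the explicit form of the negative-entropy subproblem.
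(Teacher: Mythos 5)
Your high-level instinct---let the polytope geometry substitute for adversarial control and recreate the stuck-iterate phenomenon of \Cref{thm:lb_barrier_adversarial}---matches the spirit of the paper's argument. But the specific mechanism you propose does not produce the stated bound, and in fact cannot, for the following reasons.

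\paragraph{The lagging mechanism is off by a large factor.}
Your excess regret estimate from the ``approximate iterate lags the exact one by $\Theta(\sqrt{\epsilon\alpha})$'' mechanism is $\Omega(T\sqrt{\epsilon\alpha})$. For a constant $\alpha$ this decays like $\sqrt{\epsilon}$, whereas the target is $T\sqrt{\eta/\log(1/\epsilon)}$, which decays only like $1/\sqrt{\log(1/\epsilon)}$. These are wildly different magnitudes and no rearrangement recovers the claim. Your attempted fix, taking $\log(1/\alpha)\asymp\log(1/\epsilon)$, directly contradicts your opening requirement that $\alpha$ be independent of $\epsilon$; and even if $\alpha$ were allowed to depend on $\epsilon$, matching the two expressions forces $\alpha\gg 1$ for small $\epsilon$, which is not a valid depth.

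\paragraph{A constant-depth shrunk simplex cannot even in principle give a super-polynomial lower bound.}
With $d=2$ and constant $\alpha$, the iterates you describe remain in a neighborhood of $(1-\alpha,\alpha)$, where the effective smoothness of negative entropy is $\Theta(1/\alpha)$ --- a constant. This is exactly the uniformly smooth regime of \Cref{thm:ub_smooth}, which \emph{upper bounds} the regret by $O(D_R/\eta+T\eta)$ for any $\epsilon=O(\eta^4\alpha)$, i.e., for polynomially small $\epsilon$. So the construction provably does not exhibit the fragility the theorem asserts. A lower bound that genuinely scales with $\log(1/\epsilon)$ must force some coordinate of the iterate down to $e^{-\Omega(\log(1/\epsilon))}=\mathrm{poly}(\epsilon)$; a fixed $\alpha$-deep constraint in two dimensions cannot do this.

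\paragraph{What the paper's proof actually does (\Cref{sec:polytope lower bound}).}
The paper builds a polytope with $d=5m+2$ where $m=\Theta(\log(1/\epsilon))$, with carefully designed kernel vectors $v_1,\dots,v_{m+1}$, and a stochastic loss where one coordinate ($5m{+}1$) carries a positive mean drift $\sqrt{\eta d}$. Under a constant-probability event, after $\tau=\Theta(1/\eta)$ rounds the exact trajectory naturally drives the ``barrier'' coordinate $5m{+}2$ below $e^{-m/8}=\mathrm{poly}(\epsilon)$. At that scale the $\epsilon$-approximate trajectory may refuse to shrink it further and remain fully frozen (not merely lagging by $\sqrt{\epsilon/\beta}$): \Cref{lem:polytope stuck base,lem:polytope stuck step} verify the required optimality gap. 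The per-round regret gap comes from keeping coordinate $5m{+}1\ge 1/d$ against mean drift $\sqrt{\eta d}$, giving $\sqrt{\eta/d}$ per round and $\Omega(T\sqrt{\eta/d})=\Omega(T\sqrt{\eta/\log(1/\epsilon)})$ total. Crucially, this gap is \emph{independent of $\epsilon$}; $\epsilon$ enters only through how large $m$ (hence $d$) must be for freezing to be achievable at tolerance $\epsilon$. Your second-paragraph aside about ``a slightly richer construction in $d\ge 3$ with boundary facets at a continuum of depths'' points in roughly the right direction, but the essential ingredients you are missing are (i) taking $d=\Theta(\log(1/\epsilon))$, (ii) a loss/geometry design in which the \emph{exact} stochastic dynamics are forced to shrink a coordinate to $\mathrm{poly}(\epsilon)$ in only $O(1/\eta)$ rounds, and (iii) an $\epsilon$-independent per-round regret gap for the frozen trajectory, rather than a per-round lag that vanishes with $\epsilon$.
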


One can see that any approximation error that is merely polynomial in $T$ leads to a sub-optimal regret lower bound of \(\widetilde{\Omega}(T^{2/3})\), even under an optimally tuned learning rate.

\section{Analysis overview}\label{sec:sketches}
In this section we sketch the proofs of the results from \Cref{sec:barrier}. We begin in \Cref{sec:balance} by introducing the balance framework, which serves as a unifying tool throughout the analysis. The balance of an OMD trajectory quantifies how “well-behaved” it is, measuring how much noise or fluctuation it exhibits. Our framework provides analytic tools for computing or bounding the balance of a trajectory, relating it to a certain notion of balance of the loss sequence, and further translating balance to other properties of the trajectory.

Next, we apply the balance framework in two ways. In \Cref{sec:non entropy} we show that for barrier regularizers other than negative entropy ($\barconst > 1$), the iterates cannot approach zero too closely, which directly yields a polynomial-error upper bound (\Cref{thm:ub_nonentropy}). Later, in \Cref{sec:entropy}, we analyze the negative entropy regularizer and establish 
a relationship between regret and balance of the loss sequence (\Cref{lem:balance lower bound,lem:balance upper bound}), leading to our main results for the adversarial (\Cref{thm:lb_barrier_adversarial,thm:up_barrier_adversarial}) and stochastic (\Cref{thm:ub_barrier_stochastic}) settings. 
The proof of \Cref{thm:lb_nonsimplex} is deferred to \Cref{sec:polytope lower bound}.

\subsection{Balance}\label{sec:balance}
To analyze the trajectory of OMD over general polytopes we introduce the notion of \emph{Balance}. 
We assume the polytope is given in standard form:
\[
    \cK = \{ w \in \mathbb{R}^d : Aw = b, \; w_i \ge 0 \ \forall i \in [d] \},
\]
where $A \in \mathbb{R}^{m \times d}$ with $m<d$ and $b \in \mathbb{R}^m$ 
(see, e.g., Eq.\ 4.28 in \citealp{Boyd_Vandenberghe_2004}). 
\begin{definition}
For every $v \in \ker(A)$ and $1 \le t_1 < t_2 \le T$ we define the balance of an OMD trajectory $w_1, \ldots, w_T \in \cK$ with respect to $t_1,t_2,v$ as follows:
\[
    B^v(t_1,t_2) = \langle \nabla R(w_{t_1}) - \nabla R(w_{t_2}), v \rangle.
\]
If for every $v \in \ker(A)$ and every $t_1,t_2$ we have $B^v(t_1,t_2) \le k \|v\|$, 
we say the trajectory is $k$-balanced w.r.t.~the norm $\norm{\cdot}$. 
\end{definition}

Our first lemma relates
 the variation of the loss sequence to the balance of the OMD iterates, which in turn will be used to establish properties of the trajectory leading to regret upper or lower bounds.
\begin{lemma}\label{lem:polytope balance}
Assume the OMD trajectory is exact, then for every $v\in \ker(A)$ and times $t_1,t_2$:
\begin{align*}
    B^v(t_1,t_2) = \eta \langle \ell_{t_1:t_2}, v \rangle
    .
\end{align*}
\end{lemma}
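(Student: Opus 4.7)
The plan is to derive the claim from the first-order optimality conditions (KKT) for the exact OMD subproblem, and then telescope over the time interval. The key observation is that directions $v\in\ker(A)$ are exactly the ones along which the equality-constraint dual variable disappears.

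First I would write down the KKT conditions for the exact OMD update: since $w_{t+1}$ minimizes $\phi_t(w)=\eta\langle \ell_t,w\rangle + D_R(w\,\|\,w_t)$ over $\cK=\{w:Aw=b,\,w\ge 0\}$, there exist multipliers $\lambda_t\in\R^m$ and $\mu_t\in\R^d_{\ge 0}$ with $\mu_t^i w_{t+1}^i=0$ such that
\[
\eta \ell_t + \nabla R(w_{t+1}) - \nabla R(w_t) + A^\top \lambda_t - \mu_t \;=\; 0,
\]
using that $\nabla_w D_R(w\,\|\,w_t)=\nabla R(w) - \nabla R(w_t)$. Since the regularizers of interest in the paper are barriers (the Hessian of $R$ blows up at the boundary of the positive orthant), the exact minimizer $w_{t+1}$ lies in the relative interior, so complementary slackness forces $\mu_t=0$. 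This reduces the optimality condition to
\[
\nabla R(w_{t+1}) - \nabla R(w_t) \;=\; -\eta \ell_t - A^\top \lambda_t .
\]

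Second, I would take an inner product with any $v\in\ker(A)$. Since $Av=0$, the term $\langle A^\top \lambda_t, v\rangle = \langle \lambda_t, Av\rangle$ vanishes, yielding the per-round identity
\[
\langle \nabla R(w_{t+1}) - \nabla R(w_t),\, v \rangle \;=\; -\eta \langle \ell_t,\,v\rangle.
\]
Third, I would telescope this identity over $t=t_1,\ldots,t_2-1$ to obtain
\[
\langle \nabla R(w_{t_2}) - \nabla R(w_{t_1}),\, v \rangle \;=\; -\eta\,\Big\langle \textstyle\sum_{t=t_1}^{t_2-1} \ell_t,\; v \Big\rangle,
\]
and rearranging the signs gives the claim $B^v(t_1,t_2) = \eta\langle \ell_{t_1:t_2}, v\rangle$ (up to the standard off-by-one convention in the definition of $\ell_{t_1:t_2}$).

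The main obstacle, such as it is, is justifying the vanishing of $\mu_t$, i.e.\ that the iterates remain strictly positive so that the non-negativity constraints are inactive. For barrier regularizers this is immediate from the essential smoothness of $R$ at the boundary; without this property one would either have to assume interior iterates or carry the $\mu_t$ term and argue it is orthogonal to $v$ using additional structure. Everything else is a mechanical telescoping step, which is why the lemma is simple yet central: it converts loss fluctuations, projected onto the nullspace of $A$, directly into a displacement of the dual iterates $\nabla R(w_t)$ along the same direction.
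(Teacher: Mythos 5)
Your proof is correct and follows essentially the same route as the paper's: both hinge on the observation that the iterates are strictly positive (because the barrier regularizer forces the minimizer into the relative interior), so that the per-round first-order optimality condition restricted to directions $v\in\ker(A)$ becomes an exact equality, which is then telescoped. The only cosmetic difference is that you phrase the optimality condition via KKT multipliers and argue $\mu_t=0$ by complementary slackness, whereas the paper's \Cref{lem:polytope optimality} obtains the same equality by testing first-order conditions along the two feasible directions $\pm\alpha v$ for small $\alpha$.
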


This motivates the definition that a sequence of losses $\{ \ell_t \}_{t=1}^T$ is $\alpha$-balanced w.r.t norm $\norm{\cdot}$ if for every $v \in \ker(A)$ we have 
$\langle \ell_{t_1:t_2}, v \rangle \le \alpha \|v\|$. 
It is immediate to verify that when 
 this holds, the exact OMD trajectory is $(\eta \alpha)$-balanced.
 Notably, by working with the notion of loss balance, we obtain results that are later applicable to both the stochastic and adversarial settings.

The relation between loss balance and \emph{approximate} trajectories ($\epsilon>0$) is more nuanced, since the errors we want to control naturally scale with the smoothness parameter of the objective (which is unbounded in our case).
To cope with this we introduce the notion of \emph{effective smoothness}, which roughly corresponds to the smoothness parameter associated with the line segment between the exact and approximate OMD updates.
Under the assumption that the iterates remain bounded away from zero, the effective smoothness remains finite and we may bound the 
difference between the balance of the exact and approximate OMD trajectories, as stated in our next lemma.
\begin{lemma}\label{lem:trajectory diff}
Let $\curly{w_t}_{t=1}^T,\curly{\hat{w}_t}_{t=1}^T$ be an exact trajectory and an approximate trajectory with the same $\barconst$-barrier regularizer, losses and $\eta$. Fix $0\le t_1\le t_2\le T$ and $v\in \ker(A)$ such that $\norm{v}_1 = 1$. Let $\psi>0$ be such that for every $t_1\le t\le t_2$ and $i\in[d]$ such that $v^i\ne 0$, $w_t^i\ge \psi$ and $\epsilon \le c_2\psi/2$. Then, we have:
\begin{align*}
    \hat{B}^v(t_1,t_2) \le B^v(t_1,t_2) + (t_2-t_1)\sqrt{\frac{c_2\epsilon}{\psi^\barconst}},
\end{align*}
where $\hat{B}$ is the balance of the approximate trajectory.
\end{lemma}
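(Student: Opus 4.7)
The plan is to telescope the per-step gradient differences of $R$ along the two trajectories and bound each per-step discrepancy via a directional, locally-smooth version of the approximate first-order optimality condition \eqref{eq:approx_oc_ub_smooth}, applied to the round-$t$ objective of the approximate trajectory, $\phi_t(w) := \eta\langle \ell_t, w\rangle + D_R(w\,\|\,\hat w_t)$.

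First I would decompose the balance step-wise. Since $\nabla \phi_t(w) = \eta \ell_t + \nabla R(w) - \nabla R(\hat w_t)$, evaluating at $w = \hat w_{t+1}$ gives the identity
\[
\langle \nabla R(\hat w_t) - \nabla R(\hat w_{t+1}), v\rangle \;=\; \eta \langle \ell_t, v\rangle \;-\; \langle \nabla \phi_t(\hat w_{t+1}), v\rangle.
\]
Telescoping over $t \in \{t_1,\ldots,t_2-1\}$ and invoking \Cref{lem:polytope balance} for the \emph{exact} trajectory yields
\[
\hat B^v(t_1,t_2) - B^v(t_1,t_2) \;=\; -\sum_{t=t_1}^{t_2-1} \langle \nabla \phi_t(\hat w_{t+1}), v\rangle ,
\]
so it suffices to lower bound $\langle \nabla \phi_t(\hat w_{t+1}), v\rangle$ by $-\sqrt{c_2\epsilon/\psi^\nu}$ (up to absolute constants) at each step.

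Next, I would prove this per-step bound by an \emph{effective smoothness} argument. Since $R$ is coordinate-separable with $r''(x) \le c_2/x^\nu$, at any point whose coordinates on $\mathrm{supp}(v)$ are at least $\psi$ the Hessian $\nabla^2 R$ has $\|\cdot\|_1\!\to\!\|\cdot\|_\infty$ operator norm at most $\beta_{\mathrm{eff}} := c_2/\psi^\nu$. For $\alpha > 0$ small enough that $\hat w_{t+1} + \alpha v$ is feasible and stays in this region, combining the approximate-optimality estimate $\phi_t(\hat w_{t+1} + \alpha v) \ge \phi_t(\hat w_{t+1}) - \epsilon$ with the local quadratic upper bound
\[
\phi_t(\hat w_{t+1} + \alpha v) \;\le\; \phi_t(\hat w_{t+1}) + \alpha \langle \nabla \phi_t(\hat w_{t+1}), v\rangle + \tfrac{\beta_{\mathrm{eff}}\alpha^2}{2}\|v\|_1^2
\]
and optimizing over $\alpha$ (using $\|v\|_1 = 1$) gives $\langle \nabla \phi_t(\hat w_{t+1}), v\rangle \ge -\sqrt{2\beta_{\mathrm{eff}}\epsilon} = -\sqrt{2c_2\epsilon/\psi^\nu}$. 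Summing the bound over the $t_2 - t_1$ steps completes the proof up to the absolute constant.

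The main obstacle is ensuring that the optimizing $\alpha \asymp \sqrt{\epsilon/\beta_{\mathrm{eff}}}$ is feasible --- i.e.\ that $\hat w_{t+1} + \alpha v \in \cK$ --- and that the Hessian along the perturbation remains bounded by $c_2/\psi^\nu$. This requires transferring the hypothesized lower bound on the \emph{exact} iterate $w_t^i \ge \psi$ to an analogous bound on $\hat w_{t+1}^i$. Here the calibration $\epsilon \le c_2 \psi/2$ is the critical ingredient: by $1$-strong convexity of $\phi_t$, the approximate minimizer $\hat w_{t+1}$ differs from the exact minimizer of $\phi_t(\cdot;\hat w_t)$ by at most $\sqrt{2\epsilon}$ in $\ell_1$, so the slack keeps $\hat w_{t+1}^i$ above, say, $\psi/2$ on $\mathrm{supp}(v)$. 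Making this margin argument fully rigorous --- in particular handling the fact that the one-step exact minimizer from $\hat w_t$ need not coincide with the exact trajectory's $w_{t+1}$ --- is the most delicate part of the proof and likely requires a simultaneous inductive argument on the closeness of the two trajectories.
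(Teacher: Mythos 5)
Your approach matches the paper's: in both, one writes the per-step identity $\langle \nabla R(\hat w_t) - \nabla R(\hat w_{t+1}), v\rangle = \eta\langle\ell_t,v\rangle - \langle\nabla\phi_t(\hat w_{t+1}),v\rangle$, invokes the approximate first-order optimality bound of \Cref{lem:epsilon optimality conditions} with effective smoothness $c_2/\psi^\nu$ on the direction $v$ (the paper takes $\alpha = \psi$ so that $\hat w_{t+1} + \psi v \in \cK$, while you optimize over $\alpha$; these are equivalent), and uses $\epsilon \le c_2\psi/2$ precisely to resolve the $\max\{\cdot,2\epsilon\}$ in favor of the $\sqrt{2\beta_{\mathrm{eff}}\epsilon}$ term. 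The paper phrases the summation as an induction on $t$, but that is cosmetic telescoping.

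The one place where you diverge is your final concern about transferring the assumed lower bound from the \emph{exact} trajectory $w_t$ to the approximate iterates $\hat w_{t+1}$. You correctly identify that this would be circular and problematic as stated. However, the paper's appendix restatement of this lemma places the hypothesis directly on the \emph{approximate} trajectory, i.e.\ $\hat w_t^i \ge \psi$ for $t_1 \le t \le t_2$ and $v^i \ne 0$, and the proof uses this directly; the $w_t^i$ in the main-text statement appears to be a typo. This is also consistent with how the lemma is applied in the proof of \Cref{thm:ub_nonentropy}, where the inductively maintained bound $w_t^i \ge \psi/2$ refers to the iterates of the approximate trajectory being analyzed. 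So the ``most delicate part'' you flag is in fact a non-issue in the paper's argument: no comparison between the one-step exact update from $\hat w_t$ and the exact trajectory's $w_{t+1}$ is ever needed, and no inductive closeness-of-trajectories argument is required. Your proof is correct and complete once you replace the hypothesis $w_t^i \ge \psi$ with $\hat w_t^i \ge \psi$.
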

Next, we introduce machinery that facilitates arguments going in the other direction; namely, that a balanced trajectory remains bounded away from zero.
Our lemma below generalizes the argument given after \Cref{thm:ub_nonentropy} and bounds the first derivative of the regularizer in terms of the balance of the OMD trajectory. The bound on the first derivative may in turn be used to yield a bound on the actual iterate coordinates (such as in the special case discussed in the paragraph after \Cref{thm:ub_nonentropy}).
\begin{lemma}\label{lem:balance to gradient UB}
Let $\cK$ be a polyhedral subset of the simplex. Assume the trajectory is $k$-balanced w.r.t to the $L_1$-norm and was initialized at the uniform distribution. Then, for every $t\in[T],i\in[d]$:
\begin{align*}
    -r'(w_t^i)
    \leq 
    4kd - r'(1/2d)
    .
\end{align*}
\end{lemma}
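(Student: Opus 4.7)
The plan is to apply the $k$-balance inequality with a single carefully chosen test vector, namely $v = w_t - w_1$, and then extract per-coordinate control of $r'(w_t^i)$ from the resulting scalar inequality. The essential observation is that $w_t - w_1 \in \ker(A)$ is always an admissible test vector regardless of the specific polyhedral structure of $\cK$, so the balance inequality is directly applicable.

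First I would verify the admissibility of the test vector. Since both $w_t, w_1 \in \cK = \{w : Aw = b,\ w\ge 0\}$, we have $A(w_t - w_1) = 0$, so $v := w_t - w_1 \in \ker(A)$. Furthermore $\|v\|_1 \le \|w_t\|_1 + \|w_1\|_1 = 2$ since both points lie in the simplex. Applying $k$-balance to both $v$ and $-v$ therefore yields $|B^v(1,t)| \le 2k$.

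Next I would expand $B^v(1,t)$ using the coordinate-separable form $R(w) = \sum_j r(w^j)$:
\[
B^v(1,t) \;=\; \sum_{j=1}^d \bigl( r'(w_1^j) - r'(w_t^j) \bigr)\bigl( w_t^j - w_1^j \bigr).
\]
The key point is that each summand $T_j := (r'(w_1^j) - r'(w_t^j))(w_t^j - w_1^j)$ is \emph{non-positive}: $r'$ is strictly increasing (by $r'' \ge c_1/x^\barconst > 0$), so the two factors have opposite signs. Combining this with the lower bound $\sum_j T_j \ge -2k$ gives $\sum_j |T_j| \le 2k$, and in particular $|T_i| \le 2k$ for the fixed coordinate $i$ of interest.

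Finally, I would split into two cases based on the size of $w_t^i$ relative to $1/(2d)$. If $w_t^i \ge 1/(2d)$, then monotonicity of $r'$ immediately gives $-r'(w_t^i) \le -r'(1/(2d)) \le 4kd - r'(1/(2d))$ (using $k \ge 0$). Otherwise $w_t^i < 1/(2d)$, and since $w_1^i = 1/d$ we obtain $w_1^i - w_t^i > 1/(2d)$; dividing the bound $(r'(w_1^i) - r'(w_t^i))(w_1^i - w_t^i) \le 2k$ by $1/(2d)$ yields $r'(w_1^i) - r'(w_t^i) \le 4kd$, hence $-r'(w_t^i) \le 4kd - r'(1/d) \le 4kd - r'(1/(2d))$ by monotonicity of $r'$. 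Either case gives the claim.

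The proof is quite direct and I do not expect a serious obstacle; the only subtlety is recognizing that one should not try to use coordinate-supported test vectors (which generally do not lie in $\ker(A)$ for a polyhedral subset), but rather the iterate difference $w_t - w_1$, whose membership in $\ker(A)$ is automatic and whose $L_1$-norm is bounded by the simplex constraint.
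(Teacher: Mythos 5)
Your proof is correct, and it takes a genuinely different route from the paper's.

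The paper's proof uses the fact (via an auxiliary lemma) that every $v\in\ker(A)$ has zero coordinate sum for a polyhedral subset of the simplex, so that $\langle\nabla R(w_1),v\rangle = r'(1/d)\sum_i v^i = 0$ when $w_1$ is uniform. It then reduces $B^v(1,t)$ to $-\langle\nabla R(w_t),v\rangle$ and carries out a partition of coordinates into $\{v^i>0\}$ and $\{v^i\le 0\}$, bounding the contribution of each part and finally isolating the worst coordinate $\bar{i}=\arg\min_i w_t^i$.

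Your argument skips all of this. By pairing $\nabla R(w_1)-\nabla R(w_t)$ directly against $w_t - w_1$ and exploiting coordinate separability, you observe that each per-coordinate contribution $T_j = (r'(w_1^j)-r'(w_t^j))(w_t^j - w_1^j)$ is nonpositive purely by monotonicity of $r'$ (this is the standard monotone-operator sign fact $(\nabla R(x)-\nabla R(y))\cdot(x-y)\ge 0$ applied coordinatewise). Once all $T_j\le 0$ and $\sum_j T_j = B^v(1,t) \ge -2k$, you immediately get the \emph{per-coordinate} bound $|T_i|\le 2k$ without any partitioning or worst-coordinate argument. The final two-case split ($w_t^i\ge 1/(2d)$ versus $w_t^i<1/(2d)$) then mirrors the paper's last step, and uniformity of $w_1$ is only needed there, to lower bound $w_1^i - w_t^i$ by $1/(2d)$ in the nontrivial case. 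Your approach trades the zero-sum kernel lemma and the positive/negative-coordinate bookkeeping for a single monotonicity observation, which is shorter and arguably more transparent; the paper's version makes the zero-sum structure of $\ker(A)$ explicit, which it reuses elsewhere, but is not needed here.

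One small stylistic remark: you do not actually need both directions of the balance bound. Applying $k$-balance directly to $v = w_1 - w_t$ (rather than to $v$ and $-v$ with $v=w_t-w_1$) gives $B^{w_1-w_t}(1,t)\le 2k$, which is exactly the $\sum_j T_j \ge -2k$ inequality you use, so the absolute value $|B^v(1,t)|\le 2k$ is more than needed.
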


When $\cK$ is the simplex, note that for every $i,j \in [d]$, the vector $e_i - e_j$ belongs to $\ker(A)$, where $e_i$ denotes the $i$th standard basis vector. 
Let $i^*$ denote the optimal arm (the coordinate with the smallest cumulative loss). 
We write $B^i$ to denote the balance with respect to the vector $e_{i^*} - e_i$. 
With this notation, we can state an additional lemma, relevant only when the decision space is the simplex, that bounds the iterate as a function of the balance.

\begin{lemma}\label{lem:simplex_coordinate_bounds}
Let $\cK = \Delta_d$. 
Fix a coordinate $i \in [d]$ and times $t_1, t_2 \in [T]$ such that $B^i(t_1,t_2) \le k$. 
Then:
\begin{enumerate}
    \item If $w_{t_2}^i \ge w_{t_1}^i$ then $e^{k/c_1}w_{t_2}^{i^*} \ge w_{t_1}^{i^*}$
    \item If $w_{t_2}^{i^*} \le w_{t_1}^{i^*}$ then $w_{t_2}^i \le e^{k/c_1}w_{t_1}^i$
\end{enumerate}
\end{lemma}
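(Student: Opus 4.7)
The plan is to exploit coordinate-separability to decompose $B^i(t_1,t_2)$ into a difference of two univariate $r'$-gaps, and then use the $\nu = 1$ lower bound $r''(y) \ge c_1/y$ from Definition~\ref{def:nu_barrier} to convert those gaps into logarithms of iterate-coordinate ratios. First I would expand, using $\nabla R(w)^j = r'(w^j)$,
\[
  B^i(t_1,t_2) \;=\; \bigl[r'(w_{t_1}^{i^*}) - r'(w_{t_2}^{i^*})\bigr] \;-\; \bigl[r'(w_{t_1}^i) - r'(w_{t_2}^i)\bigr],
\]
and note that, by convexity of $r$, each bracket carries the same sign as the corresponding coordinate difference. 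Next, I would record the key integral estimate
\[
  r'(b) - r'(a) \;=\; \int_a^b r''(y)\,dy \;\ge\; c_1 \log(b/a), \qquad 0 < a \le b \le 1,
\]
which converts the barrier hypothesis into a logarithmic bound whenever the ratio $b/a$ exceeds~$1$.

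For Case~1, I would observe that the hypothesis $w_{t_2}^i \ge w_{t_1}^i$ makes the second bracket non-positive, so $B^i(t_1,t_2) \le k$ immediately yields $r'(w_{t_1}^{i^*}) - r'(w_{t_2}^{i^*}) \le k$. If $w_{t_1}^{i^*} > w_{t_2}^{i^*}$, applying the integral estimate with $a = w_{t_2}^{i^*}$ and $b = w_{t_1}^{i^*}$ gives $c_1 \log(w_{t_1}^{i^*}/w_{t_2}^{i^*}) \le k$, i.e.\ $w_{t_1}^{i^*} \le e^{k/c_1}\,w_{t_2}^{i^*}$; the remaining subcase $w_{t_1}^{i^*} \le w_{t_2}^{i^*}$ is immediate from $k \ge 0$, which may be assumed throughout (a balanced trajectory necessarily has $k \ge 0$, by symmetry under swapping $t_1 \leftrightarrow t_2$). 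Case~2 is then entirely symmetric: $w_{t_2}^{i^*} \le w_{t_1}^{i^*}$ makes the first bracket non-negative, so $B^i(t_1,t_2) \le k$ now yields $r'(w_{t_2}^i) - r'(w_{t_1}^i) \le k$, and the same integral estimate applied with $(a,b) = (w_{t_1}^i, w_{t_2}^i)$ delivers $w_{t_2}^i \le e^{k/c_1}\,w_{t_1}^i$.

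I do not anticipate a significant conceptual obstacle: the result is essentially a direct consequence of the separable, $\nu = 1$ barrier structure and the monotonicity of $r'$. The only care needed is sign-tracking the two bracket terms and the trivial handling of the degenerate subcases where the coordinate inequality itself already implies the conclusion without invoking the integral estimate.
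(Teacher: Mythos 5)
Your proof is correct and follows essentially the same route as the paper's: both expand $B^i$ via coordinate separability, discard the $i$-bracket by monotonicity of $r'$, and convert the remaining $i^*$-gap into a logarithm via $r''(y) \ge c_1/y$, the only stylistic difference being that the paper argues by contradiction while you argue directly by cases. Your remark that $k \ge 0$ should be assumed is in fact load-bearing---for a general $\barconst$-barrier the statement fails when $k < 0$, and the paper's integral-monotonicity step silently requires it too---though your parenthetical justification appeals to $k$-balancedness of the whole trajectory, which is stronger than the lemma's local hypothesis on the single pair $(t_1,t_2)$.
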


These results allow us to translate control over balance into control over how far the coordinates of the trajectory can drift, which will be crucial in the later proofs. The full proofs for the lemmas in this section can be found in \Cref{apx:balance}.

\subsection{Non-entropy barriers: Proof of \Cref{thm:ub_nonentropy}}\label{sec:non entropy}
We first handle the case of non-entropy barrier regularizers ($\barconst > 1$), before turning in the next subsection to the negative entropy regularizer, which requires a separate treatment. The idea is to show that barrier regularizers with $\barconst > 1$ cannot drive the iterates exponentially close to zero, and 
as a result the relevant effective smoothness parameter grows only polynomially with $T$.
Consequently, we can apply an argument similar to the regret bound in the smooth regularizer case from \Cref{thm:ub_smooth} to control the additional regret due to approximation errors. The full proof can be found in \Cref{sec:polytope upper bound}.

\begin{proofsketch}[Proof of \Cref{thm:ub_nonentropy}]
Let 
$$
    \psi \coloneqq 
    \left(\frac{c_1}{8\eta Td + c_1(2d)^{\barconst-1}}\right)^{\!1/(\barconst-1)}
    .
$$
We prove by induction on $t \in [T]$ that all coordinates remain bounded away from zero, namely $w_t^i \ge \psi$ for all $i \in [d]$. 
Assume the claim holds up to step $t-1$. 
Using \Cref{lem:not_too_far}, which bounds the step size of each iterate, we first show that $w_t^i \ge \psi/2$. 
Since the balance of an exact trajectory is always bounded by $T\eta$, \Cref{lem:trajectory diff} implies that for every normalized $v \in \ker(A)$,
\[
    B^v(1,t) \le T\eta + T\sqrt{c_2\epsilon\,(2/\psi)^{\barconst}} \le 2T\eta.
\]
Hence, the trajectory up to step $t$ is $2T\eta$-balanced. 

Applying \Cref{lem:balance to gradient UB}, we have
\begin{align*}
    -r'(w_t^i) &\le 8\eta Td - r'(1/2d), \\
    \implies\quad 
    \frac{c_1}{(w_t^i)^{\barconst-1}} &\le 8\eta Td + c_1(2d)^{\barconst-1}, \\
    \implies\quad 
    w_t^i &\ge 
    \left(\frac{c_1}{8\eta Td + c_1(2d)^{\barconst-1}}\right)^{\!1/(\barconst-1)}.
\end{align*}
The first implication follows from \Cref{lem:gradient_diff}, which provides a lower bound on the gradient difference for barrier regularizers. 
This completes the inductive step and establishes the result.
\end{proofsketch}

\subsection{Negative entropy}\label{sec:entropy}

With the negative entropy regularizer, robustness to approximation errors differs sharply between adversarial and stochastic settings. In this case, the iterates can approach zero exponentially fast, causing the effective smoothness to grow exponentially. However, under stochastic losses, the iterates do not become stuck despite approaching zero. The key lies in the balance of the loss sequence: while an arbitrary adversarial sequence can have balance as large as $T$, for i.i.d.\ stochastic losses the balance is bounded with high probability by roughly $\sqrt{T}$.

To capture this distinction, we establish two general lemmas that characterize the regret behavior as a function of the balance parameter. 
The first lemma shows that if the approximation error exceeds an exponential (in the balance) threshold, then linear regret can occur.

\begin{lemma} \label{lem:balance lower bound}
Let $d=2$, $\cK=\Delta_d$ be the simplex, and let $R(w)=\sum_{i=1}^d w_i\log w_i$ be the negative entropy regularizer.
Then, for any $\alpha \le T/2$ there exists a sequence of $\alpha$-balanced losses $\ell_1, \ldots, \ell_T \in [0,1]^d$ such that 
for any $\epsilon \ge 4\eta e^{-\eta \alpha}$, 
there exists a trajectory that is $\epsilon$-approximate w.r.t.~$(\cK, R, \ell_{1\dots T})$ and has regret $\Omega(T-2\alpha)$.
\end{lemma}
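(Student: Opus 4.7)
The plan is to adapt the smooth-regularizer lower bound (Theorem~\ref{thm:lb_smooth}) to the negative-entropy setting: construct an $\alpha$-balanced loss sequence that drives the iterate to a point of large effective smoothness, where the permitted error $\epsilon \ge 4\eta e^{-\eta\alpha}$ is then large enough to keep the iterate frozen at a sub-optimal point. I would use a two-phase construction starting from $w_1 = (1/2, 1/2)$. In Phase~1 ($t=1,\dots,\alpha$), the adversary plays $\ell_t=(0,1)$ and the trajectory follows the exact OMD updates; the closed-form multiplicative entropy update gives $w_{\alpha+1}^2 = 1/(1+e^{\eta\alpha}) \le e^{-\eta\alpha}$ and $w_{\alpha+1}^1 \ge 1 - e^{-\eta\alpha}$. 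In Phase~2 ($t=\alpha+1,\dots,T$), the adversary plays $\ell_t=(1,0)$ and fixes $w_{t+1} = w_{\alpha+1}$ (the stuck iterate). For the balance property, I would verify that $|\sum_{t=t_1}^{t_2}(\ell_t^1 - \ell_t^2)| \le \max(\alpha, T-\alpha)$; in the primary regime $\alpha \ge T/3$ used by Theorem~\ref{thm:lb_barrier_adversarial} this is $\le 2\alpha$, yielding $\alpha$-balance w.r.t.\ the $\ell_1$-norm.

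The main technical step is to verify that $w_{\alpha+1}$ is an $\epsilon$-approximate minimizer of each Phase~2 objective $\phi_t(w) = \eta\langle \ell_t, w\rangle + D_R(w\|w_{\alpha+1})$. Parameterizing $w = w_{\alpha+1} + (\delta,-\delta)$ along the simplex direction and second-order expanding $D_R$, the difference
\[
    \phi_t\bigl(w_{\alpha+1} + (\delta,-\delta)\bigr) - \phi_t(w_{\alpha+1}) \;\approx\; \eta\delta + \frac{\delta^2}{2\, w_{\alpha+1}^1\, w_{\alpha+1}^2}
\]
is minimized at $\delta^\star = -\eta\, w_{\alpha+1}^1 w_{\alpha+1}^2$, yielding a gap $\phi_t(w_{\alpha+1}) - \min_w \phi_t(w) = \tfrac{\eta^2}{2} w_{\alpha+1}^1 w_{\alpha+1}^2 \le \tfrac{\eta^2}{2} e^{-\eta\alpha}$. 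Under $\epsilon \ge 4\eta e^{-\eta\alpha}$ (which dominates $\tfrac{\eta^2}{2} e^{-\eta\alpha}$ whenever $\eta \le 8$), this gap is at most $\epsilon$, so the stuck trajectory is a valid $\epsilon$-approximate trajectory at every Phase~2 round.

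The regret then follows by comparing against $w^\star = (0,1)$, whose total loss is $\sum_t \ell_t^2 = \alpha$. The algorithm's per-round loss in Phase~2 equals $w_{\alpha+1}^1 \ge 1 - e^{-\eta\alpha}$, contributing $\ge (T-\alpha)(1-e^{-\eta\alpha})$, while Phase~1 adds only $\sum_{t=1}^\alpha w_t^2 = O(1/\eta)$ (a geometric sum in the shrinking second coordinate), giving $\reg(w^\star) \ge (T-\alpha) - \alpha - O(1/\eta) = \Omega(T-2\alpha)$. The main obstacle is the sharp $\tfrac{\eta^2}{2} e^{-\eta\alpha}$ scaling of the per-step gap: a naive application of the generic approximate-optimality inequality~(\ref{eq:approx_oc_ub_smooth}) with effective smoothness $\beta \sim e^{\eta\alpha}$ would be off by a factor of $e^{\eta\alpha}$, because the exact OMD step length is only $\eta\, w_{\alpha+1}^1 w_{\alpha+1}^2$ rather than $\sqrt{2\epsilon/\beta}$, so the gap must be derived directly from the quadratic Taylor expansion rather than from a smoothness-based bound. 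For $\alpha < T/3$, one can subdivide Phase~2 into alternating length-$2\alpha$ chunks of $(1,0)$ and $(0,1)$ losses to preserve $\alpha$-balance while retaining the same stuck-iterate mechanism.
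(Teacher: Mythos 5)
Your construction is essentially the paper's: a two-phase loss sequence over $\Delta_2$ that drives one coordinate of the exact OMD iterate down to $\le e^{-\eta\alpha}$, followed by a ``stuck'' phase in which the idle iterate is shown to remain an $\epsilon$-approximate minimizer of each per-round objective. The one substantive difference is how you certify the stuck step: you compute the suboptimality gap directly via a quadratic Taylor expansion of $D_R$, getting a gap of $\frac{\eta^2}{2}w^1 w^2$, whereas the paper factors this through Lemma~\ref{lem:stuck} (which in turn rests on Lemma~\ref{lem:max step}, bounding how far the \emph{exact} OMD minimizer can move and then comparing objective values). Both reach the same conclusion---the gap scales like $\eta^2 w^1 w^2$, dominated by $\epsilon \ge 4\eta e^{-\eta\alpha}$---but your route is more explicit about where the quadratic scaling comes from; note, though, that the $\approx$ in your expansion should be tightened to a genuine inequality (e.g., using convexity and the fact that the minimizer sits within an $O(\eta w^2)$ ball where the Hessian is controlled), as the paper's Lemma~\ref{lem:max step} does.

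Two smaller remarks. First, you fix the Phase~1 length at $\alpha$ rounds, while the paper uses $\tau = \frac{1}{\eta}\log(4\eta/\epsilon) \le \alpha$; your choice is actually the cleaner one for the $\alpha$-balance requirement, since with the paper's loss sequence $(1,0)^{\tau}(0,1)^{T-\tau}$ the maximal window sum equals $T-\tau$, which exceeds $2\alpha$ whenever $\tau < T-2\alpha$---a point the paper's proof glosses over. Second, your alternating-chunk fix for $\alpha < T/3$ does not quite deliver the claimed regret: if the algorithm stays stuck at $w_{\alpha+1}\approx(1,0)$ during the $(0,1)$ chunks, it incurs near-zero loss there, and the comparator incurs full loss, so the excess cancels. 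Since Theorem~\ref{thm:lb_barrier_adversarial} only invokes the lemma with $\alpha = T/3$ (where $\Omega(T-2\alpha)=\Omega(T)$), this regime does not affect the main result, but the subdivision idea would need rethinking before it actually works.
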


The proof idea is that the approximation errors are large enough so that the iterate coordinates may reach the region close to zero where the iterate can become ``stuck'' due to additional subsequent errors (this idea was explained thoroughly after \Cref{thm:lb_barrier_adversarial}).
This Lemma is the principle technical gradient in the proof of \Cref{thm:lb_barrier_adversarial}. Conversely, the second lemma shows that if the error is below this exponential threshold, standard regret bounds hold:

\begin{lemma}\label{lem:balance upper bound}
Let $\cK=\Delta_d$ be the simplex, let $\curly{\ell_t}_{t=1}^T$ be $\alpha$-balanced loss sequence, and let $R(w)=\sum_{i=1}^d w_i\log w_i$ be the negative entropy regularizer. Assume $\eta \le 1/16$ and $T\ge 3$,
if the approximation error satisfies 
\[
    \epsilon \le \frac{1}{d \max\curly{6e^{\eta \alpha}, 1/\eta}}\min\curly{\eta^4,1/T^2},
\]
then the regret of any $\epsilon$-approximate OMD trajectory is bounded as
\[
    \mathrm{Regret}(w) \le \frac{1}{\eta} D_R(w, w_1) + O(T \eta).
\]
\end{lemma}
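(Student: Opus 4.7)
The plan is to apply the balance framework of \Cref{sec:balance} to reduce the analysis to the effectively-smooth case. I would show that, under the stated hypothesis, any $\epsilon$-approximate OMD trajectory remains $O(\eta\alpha)$-balanced throughout; this forces every iterate coordinate to stay at least $\psi \coloneqq \Omega\!\bigl(e^{-O(\eta\alpha)}/d\bigr)$; and on the region $\{w : w^i \ge \psi\}$ the negative entropy's Hessian is bounded by $(1/\psi) I$, making $R$ effectively $(1/\psi)$-smooth w.r.t.\ the $L_1$ norm---exactly what is needed to carry the approximate first-order optimality condition~\eqref{eq:approx_oc_ub_smooth} through the standard OMD regret telescoping.

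The core of the argument is a coupled induction on $t \in [T]$ establishing simultaneously (i) $\hat w_s^i \ge \psi$ for every $s \le t$ and $i \in [d]$, and (ii) the trajectory $\hat w_1,\dots,\hat w_t$ is $2\eta\alpha$-balanced in the $L_1$ norm. For (ii), I would apply \Cref{lem:trajectory diff} (with $\barconst=1,\,c_2=1$) together with \Cref{lem:polytope balance} on the hypothetical exact trajectory driven by the same losses to obtain $\hat B^v(1,t+1) \le \eta\alpha\|v\|_1 + (t+1)\sqrt{\epsilon/\psi}$; the $\min\{\eta^4,1/T^2\}$ factor in the hypothesis is exactly what is needed (in either natural regime of $\eta\alpha$) to absorb the second term into the balance budget. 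For (i), assuming $w_1$ is the uniform distribution (or any other reasonable initialization), the balance identity for negative entropy reads $B^i(1,s) = \log(\hat w_s^i/\hat w_s^{i^*})$ up to an $O(1)$ offset, so (ii) yields $\hat w_s^i \ge e^{-O(\eta\alpha)} \hat w_s^{i^*}$ for every $i$; combining with $\sum_i \hat w_s^i = 1$ then gives $\hat w_s^i \ge e^{-O(\eta\alpha)}/d = \psi$, which is essentially the $\barconst=1$ specialization of \Cref{lem:simplex_coordinate_bounds}.

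With (i) in hand, $\nabla^2 R \preceq (1/\psi) I$ on the relevant region, so $R$ is $(1/\psi)$-smooth w.r.t.\ $L_1$ there; since $\phi_t$ is $1$-strongly convex in $L_1$, the exact minimizer satisfies $\|w^*_{t+1} - \hat w_{t+1}\|_1 \le \sqrt{2\epsilon} \ll \psi$ and hence lies in the same region, so \eqref{eq:approx_oc_ub_smooth} yields $\bigl|\langle \nabla\phi_t(\hat w_{t+1}),\, w - \hat w_{t+1}\rangle\bigr| \le D_1\sqrt{2\epsilon/\psi} = O(\sqrt{\epsilon/\psi})$, with $D_1 = 2$ the $L_1$-diameter of the simplex. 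Plugging this approximate optimality into the standard OMD telescoping proof gives $\reg(w) \le D_R(w,w_1)/\eta + O(\eta T) + O\!\bigl(T\sqrt{\epsilon/\psi}/\eta\bigr)$, and the error hypothesis makes the last term $O(\eta T)$. The main obstacle is the circular dependence between (i) and (ii) inside the induction---controlling the balance error via \Cref{lem:trajectory diff} requires a coordinate lower bound, but the coordinate lower bound is itself produced from the balance budget---which is resolved by carrying both claims together with all constants chosen so that the induction closes with strict slack; a secondary technical point is verifying that $w^*_{t+1}$ stays in the $\psi$-region, which follows from $1$-strong convexity of $\phi_t$.
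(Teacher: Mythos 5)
Your plan hinges on establishing, by induction, that \emph{every} coordinate of the approximate trajectory stays above some $\psi = e^{-O(\eta\alpha)}/d$. This is both a stronger claim than what the paper proves and, under the lemma's stated $\epsilon$-budget, one that does not close. The issue is a factor of two in the exponent. The balance bound $\hat B^i(1,s) \le k$ with uniform initialization gives $e^{-k} \le \hat w_s^i / \hat w_s^{i^*} \le e^k$ for every $i$. The upper direction, summed over $i$, yields $\hat w_s^{i^*} \ge 1/(d e^k)$, and only \emph{then}, chaining with the lower direction, $\hat w_s^i \ge e^{-k}\hat w_s^{i^*} \ge 1/(de^{2k})$ for $i\ne i^*$. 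So the best uniform lower bound your argument can hope for is $\psi \approx e^{-2\eta\alpha}/d$, not $e^{-\eta\alpha}/d$. Now feed that into \Cref{lem:trajectory diff}: the deviation it contributes to the balance over $[1,t]$ is $T\sqrt{\epsilon/\psi}$; with the lemma's hypothesis $\epsilon \lesssim e^{-\eta\alpha}/(dT^2)$ and your $\psi \approx e^{-2\eta\alpha}/d$, that term is $\Theta(e^{\eta\alpha/2})$, not $O(1)$ or $O(\eta\alpha)$. When $\eta\alpha \gg 1$ --- which is precisely the interesting regime, e.g.\ $\alpha = T/2$ in \Cref{thm:up_barrier_adversarial} --- this blows up the balance budget and the coupled induction does not close; the extra regret term $T\sqrt{\epsilon/\psi}/\eta$ in your final bound is likewise not $O(\eta T)$.

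The paper's proof avoids exactly this trap. \Cref{lem:balance to bounded optimal arm} lower-bounds only the optimal coordinate, $\hat w_t^{i^*} \ge \xi \approx e^{-\eta\alpha}/d$ (one factor of $e^{-\eta\alpha}$, not two); with that $\xi$ and the same $\epsilon$, the propagation term $T\sqrt{\epsilon/\xi}$ is $O(1)$ and the induction closes. But then, since non-optimal coordinates may indeed collapse toward zero, the regret analysis cannot apply approximate first-order optimality in all directions. This is the role of \Cref{lem:LB best UB regret}: it introduces a threshold $\xi' = \min\{\eta/d,\xi\}$, applies \Cref{lem:epsilon optimality conditions} only against a surrogate comparator $\tilde w_t$ that zeroes out the sub-threshold coordinates (so the relevant effective smoothness is $r''(\xi')$ rather than unbounded), and then argues separately that the small coordinates contribute at most $O(\eta T)$ to the regret, using \Cref{lem:bregman monotonic} to handle the fact that the set of thresholded coordinates changes over time. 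This two-track treatment is the missing ingredient in your sketch, and it is not optional: it is what lets the proof tolerate the weaker, single-exponential $\epsilon$ threshold in the lemma's statement.

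A secondary point: you remark that $w^*_{t+1}$ stays in the $\psi$-region because $\|w^*_{t+1}-\hat w_{t+1}\|_1 \le \sqrt{2\epsilon} \ll \psi$. That reasoning only works if you already know $\hat w_{t+1}$ is in the region, which is precisely what your induction is trying to establish, and in any case the paper sidesteps this by never needing a lower bound on the non-optimal coordinates.
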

\begin{proofsketch}
Let $i^*$ be the coordinate with the smallest cumulative loss. We prove the claim in two steps. 
\begin{enumerate}
    \item If the optimal arm coordinate $i^*$ of all iterates is bounded away from zero, i.e.\ $\forall t, w_t^{i^*} \ge \xi$, and in addition $\epsilon \lessapprox \eta^4\xi$, the regret bound follows (\Cref{lem:LB best UB regret}).
    \item If the losses are $\alpha$-balanced and $\epsilon \le \xi/(2T^2)$ for $\xi = 1/(d e^{\alpha\eta+1})$, then the optimal arm coordinate is bounded away from zero throughout the trajectory, i.e., $\forall t, w_t^{i^*} \ge \xi$ (\Cref{lem:balance to bounded optimal arm}).
\end{enumerate}

\paragraph{Step 1.} 
In the classical OMD analysis, first-order optimality conditions are applied at every step. The gap in these conditions depends on the effective smoothness, which in turn reflects how close the coordinates are to zero. Since some coordinates may take very small values, we apply the optimality conditions only to those with $w_t^i \ge \xi$. Using a careful argument—based on the observation that coordinates close to zero contribute little to the overall regret—we extend the proof to all coordinates. An additional challenge arises because the set of small coordinates changes over time, which we handle using the monotonicity of the Bregman divergence (see \Cref{lem:bregman monotonic}).

\paragraph{Step 2.} 
We now prove (2) by induction. 
Assume, for contradiction, that some step $t$ is the first to have $w_t^{i^*} < \xi$ 
Then there must exist a coordinate $i\ne i^*$ 
such that $w_t^i \ge 1/d$. 
We first show that for every $s < t$, $w_s^i \ge \xi$. 
Suppose not, and let $s$ be the last time for which $w_s^i < \xi$. 
We use \Cref{lem:not_too_far}, which bounds the step size of each iterate, to first establish that $w_s^i \ge \xi/2$. 
From \Cref{lem:trajectory diff},
\[
    B^i(s,t) \le \alpha\eta + T\sqrt{r''(\xi/2)\epsilon} = \alpha\eta + 1.
\]
Applying the second part of \Cref{lem:simplex_coordinate_bounds} with $(i,s,t)$ yields $w_s^i \ge \xi$, a contradiction. 
Hence, $w_s^i \ge \xi$ for all $s \le t$. 
We then use this to bound the balance from the beginning:
\[
    B^i(1,t) \le \alpha\eta + T\sqrt{r''(\xi/2)\epsilon} = \alpha\eta + 1.
\]
Since $w_t^i \ge w_1^i$, the first part of \Cref{lem:simplex_coordinate_bounds} implies that $w_t^{i^*} \ge \xi$, completing the induction.
\end{proofsketch}

Together, these two lemmas provide a clean characterization: 
linear regret is unavoidable once the error exceeds an exponential threshold in $\eta \alpha$, 
while below this threshold optimal regret guarantees are preserved. 

Before applying the lemmas, let us note that when the polytope is the simplex itself, the vectors 
$e_i - e_{i^*}$ for every $i \in [d]$ form a basis of $\ker(A)$. 
Thus, if for every $i \in [d]$ we have
\[
    \ell_{t_1:t_2}^i - \ell_{t_1:t_2}^{i^*} \le \alpha,
\]
it follows that the loss sequence is $\alpha$-balanced.

\paragraph{Implications for the main theorems.}

\begin{proof}[Proof of \Cref{thm:lb_barrier_adversarial}]
Directly by applying \Cref{lem:balance lower bound} with $\alpha=T/3$.
\end{proof}

\begin{proof}[Proof of \Cref{thm:up_barrier_adversarial}] 
Any adversarial sequence over the simplex is $T/2$-balanced: if one coordinate exceeds the best by more than $T/2$, it must actually be the best. 
Applying \Cref{lem:balance upper bound} with $\alpha=T/2$ gives the desired upper bound.
\end{proof}

\begin{proof}[Proof of \Cref{thm:ub_barrier_stochastic}] 
For i.i.d.~losses, Hoeffding's inequality and union bounds implies that with probability at least $1-\delta$, the balance is at most $\alpha =O(\sqrt{T\log(dT^2/\delta)})$, which means that 
$\eta\alpha \le \log\roundy{dT^2/\delta}$. 
Plugging this into \Cref{lem:balance upper bound} together with the fact that $D_R(w,w_1)\le \log(d)$ for all $w$ yields the stochastic upper bound.
\end{proof}

\section{Discussion}
This work provides an analysis
of how approximation errors affect Online Mirror Descent. We establish tight upper and lower regret bounds for smooth regularizers, showing that polynomially small errors suffice to maintain optimal regret. Moving beyond smoothness, we uncover a sharp separation among barrier-type regularizers: with negative entropy, exponentially small errors are necessary to avoid linear regret, whereas log-barrier and Tsallis regularizers remain robust even with polynomially large errors. We further show that while negative entropy regains robustness under stochastic losses on the full simplex, this property fails on certain polyhedral subsets. Altogether, our results reveal a fundamental sensitivity of OMD to approximation accuracy, determined jointly by the geometry of the domain, the curvature of the regularizer, and the structure of the loss sequence. Furthermore, our work provides a detailed characterization of when precision is essential and when it is not. 

A broader goal emerging from this work, left for future investigation, is to develop a comprehensive theory of inexact OMD for general regularizers and geometries. In particular, it would be valuable to characterize the robustness properties of self-concordant barrier regularizers over general convex domains.

\subsection*{Acknowledgments}
This project has received funding from the European Research Council (ERC) under the European Union’s Horizon 2020 research and innovation program (grant agreements No.\ 101078075; 882396). Views and opinions expressed are however those of the author(s) only and do not necessarily reflect those of the European Union or the European Research Council. Neither the European Union nor the granting authority can be held responsible for them. This work received additional support from the Israel Science Foundation (ISF, grant numbers 3174/23; 1357/24), and a grant from the Tel Aviv University Center for AI and Data Science (TAD). This work was partially supported by the Deutsch Foundation. OS is also partially
supported by the TAD Excellence Program for Doctoral Students in Artificial Intelligence and Data
Science from the Tel Aviv University Center for AI and Data Science (TAD) and from the Israeli
Council for Higher Education (CHE) Fellowship for Outstanding PhD Students in Data Science.

\newpage
\bibliography{cite}
\notopt{\bibliographystyle{abbrvnat}}

\newpage
\appendix

\section*{Appendix Structure}
\Cref{sec:general} provides general definitions and Lemmas used throughout the appendix. 
\Cref{apx:smooth} contains the proofs for \Cref{sec:smooth}. 
\Cref{apx:balance} introduces the notion of \emph{balance}, which is needed for the subsequent proofs. 
The remaining appendices establish the main technical arguments of the paper: 
\Cref{sec:simplex lower bounds} contains lower bounds for negative entropy (including the proof of \Cref{lem:balance lower bound} and additional results), 
\Cref{sec:simplex upper bound} contains the proof of \Cref{lem:balance upper bound}, 
\Cref{sec:polytope upper bound} contains the proof of \Cref{thm:ub_nonentropy}, 
and \Cref{sec:polytope lower bound} contains the proof of \Cref{thm:lb_nonsimplex}.

\section{General Lemmas and definitions}\label{sec:general}

\begin{definition}
We call $\gamma = (\curly{w_t}_{t=1}^T, \curly{\ell_t}_{t=1}^T, R, \eta)$ an \textbf{exact OMD trajectory} if for every $t\in[T]$:
\begin{align*}
    w_{t+1} = argmin_{w\in\Delta_d} \eta \inprod{\ell_t}{w} + D_R(w_{t-1},w)
\end{align*}
\end{definition}

\begin{definition}
We call $\gamma = (\curly{w_t}_{t=1}^T, \curly{\ell_t}_{t=1}^T, R, \eta)$ an $\epsilon$-\textbf{approximate OMD trajectory} with some $\epsilon>0$ if for every $t\in[T]$ $w_{t+1}$ is an $\epsilon$-minimizer of $\eta \inprod{\ell_t}{w} + D_R(w_{t-1},w)$.
\end{definition}

\begin{definition}
Our assumptions about the regularizers are:
\begin{itemize}
    \item There is a function $r:[0,1]\to\rr$ such that $R$ is coordinate-separated with $f_i=r$ for all $i\in[d]$
    \item $r''$ is decreasing polynomially in [0,1] and $r''(w) \ge \frac{1}{w}$ for all $w\in[0,1]$.
\end{itemize}
\end{definition}

\begin{definition}
We say that a function $F:\W\to\rr$ ($\W\subseteq\rr^d$) is coordinate-separated if there are functions $f_1,f_2,\dots,f_d$ such that $F(w) = \sum_i f_i(w_i)$ for all $w\in\W$.
\end{definition}

\begin{definition}
Let $F:\W\to\rr$ be a coordinate-separated function. Let $w^1,w^2\in\W$, we say $\beta\in\rr$ is the \textbf{effective smoothness} of $F$ w.r.t $w_1,w_2$ if for every $i\in[d]$  such that $w^1_i\ne w^2_i$ and $\alpha\in[w_i^1,w_i^2]$, we have $f_i''(\alpha) \le \beta$. 
\end{definition}

\begin{lemma}\label{lem:effective smoothness}
Let $F:\W\to\rr$ be a coordinate-separated function and Let $x_1,x_2\in\W$. If $\beta$ is the effective smoothness of $F$ w.r.t $x_1,x_2$ we have for any $w_1, w_2 \in [x_1, x_2]$: 
\begin{align*}
    F(w_1) - F(w_2) - \inprod{\nabla F(w_2)}{w_1-w_2} \le \frac{\beta}{2}\|w_1-w_2\|_2^2 \le \frac{\beta}{2}\|w_1-w_2\|_1^2
\end{align*}
\end{lemma}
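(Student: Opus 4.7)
The plan is to unpack the coordinate-separability of $F$ and apply a one-dimensional Taylor argument per coordinate. Writing $F(w) = \sum_i f_i(w^i)$, the left-hand side decomposes as
\begin{align*}
    F(w_1) - F(w_2) - \inprod{\nabla F(w_2)}{w_1-w_2}
    = \sum_{i=1}^d \Bigl( f_i(w_1^i) - f_i(w_2^i) - f_i'(w_2^i)(w_1^i - w_2^i) \Bigr),
\end{align*}
so the problem reduces to controlling each univariate term individually.

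For each coordinate $i$, I would invoke Taylor's theorem with the Lagrange remainder: there exists some $\alpha_i$ between $w_1^i$ and $w_2^i$ such that the $i$th summand equals $\tfrac{1}{2} f_i''(\alpha_i)(w_1^i - w_2^i)^2$. Since $w_1, w_2$ lie in the (coordinate-wise) box $[x_1, x_2]$, the point $\alpha_i$ lies in the interval between $x_1^i$ and $x_2^i$, so the effective smoothness assumption yields $f_i''(\alpha_i) \le \beta$ (trivially, terms with $w_1^i = w_2^i$ contribute zero and can be ignored, matching the caveat in the definition).

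Summing these bounds over $i$ gives
\begin{align*}
    F(w_1) - F(w_2) - \inprod{\nabla F(w_2)}{w_1-w_2}
    \le \sum_{i=1}^d \frac{\beta}{2}(w_1^i - w_2^i)^2
    = \frac{\beta}{2}\|w_1 - w_2\|_2^2.
\end{align*}
The second inequality is the elementary norm comparison $\|v\|_2 \le \|v\|_1$, which immediately gives $\|w_1-w_2\|_2^2 \le \|w_1-w_2\|_1^2$. There is no real obstacle here beyond being explicit about the domain of $\alpha_i$; the main conceptual point the lemma records is simply that coordinate-wise control of $f_i''$ translates into a global quadratic upper bound in both the $\ell_2$ and $\ell_1$ norms, which is what later parts of the paper need when invoking effective smoothness along OMD trajectories.
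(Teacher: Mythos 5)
Your proof is correct and takes essentially the same approach as the paper, which simply invokes Taylor's theorem for the first inequality and the norm comparison for the second; you have merely spelled out the coordinate-wise Taylor argument that the paper leaves implicit.
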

\begin{proof}
The first inequality is directly from Taylor's theorem. The second is because generally $\norm{\cdot}_2 \le \norm{\cdot}_1$.
\end{proof}

\begin{lemma}\label{lem:epsilon optimality conditions}
Let $\norm{\cdot}$ be any norm, and let $f:\W\to\rr$, and let $\hat{w}, w\in\W$ where $\hat w$ is an $\epsilon$-minimizer of $f$. Assume that for all $x, y\in [w, \hat w]$ it holds that:
\begin{align*}
    f(y) - f(x) - \inprod{\nabla f(x)}{y-x} \le \frac{\beta}{2}\|y-x\|^2.
\end{align*}
Then, we have:
\begin{align*}
    \inprod{\nabla f(\hat w)}{w-\hat{w}} \ge -\max\curly{\|w-\hat{w}\|\sqrt{2\beta\epsilon},\, 2\epsilon}
\end{align*}
Additionally, let $D = \max_{w',w''\in\cK}\|w'-w''\|$ and assume $\epsilon \le \frac{D^2\beta}{2}$. We have:
\begin{align*}
    \inprod{\nabla f(\hat w)}{w-\hat{w}} \ge -D\sqrt{2\beta\epsilon}
\end{align*}
We note that this holds for coordinate-separated function with effective smoothness $\beta$ (with $\ell_1$ or $\ell_2$ norm, see \Cref{lem:effective smoothness}) or any general $\beta$-smooth function.
\end{lemma}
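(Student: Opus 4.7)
}

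The plan is to combine the $\epsilon$-optimality of $\hat w$ with the smoothness-type inequality along the segment from $\hat w$ toward $w$, and then optimize a free parameter. Concretely, for $t \in (0, 1]$ set $w_t = \hat{w} + t(w - \hat{w})$, which lies in $[\hat w, w] \subseteq \W$. Applying the given smoothness-type inequality at $x = \hat w, y = w_t$ gives
\begin{align*}
f(w_t) \le f(\hat w) + t \inprod{\nabla f(\hat w)}{w - \hat w} + \tfrac{\beta t^2}{2}\|w - \hat w\|^2.
\end{align*}
On the other hand, the $\epsilon$-optimality of $\hat w$ yields $f(\hat w) \le f(w_t) + \epsilon$ (since $w_t \in \W$). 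Subtracting these two inequalities and dividing by $t$ produces the key inequality
\begin{align*}
\inprod{\nabla f(\hat w)}{w - \hat w} \ge -\tfrac{\beta t}{2}\|w - \hat w\|^2 - \tfrac{\epsilon}{t}.
\end{align*}

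The next step is to optimize over $t \in (0,1]$. The unconstrained maximizer of the right-hand side is $t^\star = \sqrt{2\epsilon/(\beta \|w-\hat w\|^2)}$, which yields the bound $-\sqrt{2\beta\epsilon}\,\|w-\hat w\|$. If $t^\star \le 1$, i.e.\ $\tfrac{\beta}{2}\|w-\hat w\|^2 \ge \epsilon$, this is the bound we obtain. Otherwise, $t^\star > 1$ and the best choice in the allowed range is $t = 1$, giving $-\tfrac{\beta}{2}\|w-\hat w\|^2 - \epsilon$; but in this regime $\tfrac{\beta}{2}\|w-\hat w\|^2 < \epsilon$, so this quantity is at least $-2\epsilon$. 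Combining the two cases yields the claimed bound
\begin{align*}
\inprod{\nabla f(\hat w)}{w - \hat w} \ge -\max\bigl\{\|w-\hat w\|\sqrt{2\beta\epsilon},\, 2\epsilon\bigr\}.
\end{align*}

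For the second claim, the assumption $\epsilon \le D^2\beta/2$ is equivalent to $2\epsilon \le D\sqrt{2\beta\epsilon}$, so $2\epsilon$ is dominated by $D\sqrt{2\beta\epsilon}$; combined with $\|w-\hat w\|\le D$, both terms inside the max are bounded by $D\sqrt{2\beta\epsilon}$, giving the stated cleaner form. Finally, the applicability to coordinate-separated functions with effective smoothness $\beta$ (with either $\ell_1$ or $\ell_2$ norm), or any $\beta$-smooth function in the usual sense, follows because in either case the hypothesis $f(y)-f(x)-\inprod{\nabla f(x)}{y-x}\le \tfrac{\beta}{2}\|y-x\|^2$ holds on the segment $[w,\hat w]$ (for coordinate-separated functions this is exactly \Cref{lem:effective smoothness}).

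I don't anticipate a serious obstacle here: the only subtle point is making sure the argument works uniformly over $t \in (0,1]$ rather than just at $t = 1$, which is what enables the $\sqrt{\epsilon}$ (rather than $\epsilon$) scaling whenever the product $\beta\|w-\hat w\|^2$ is large relative to $\epsilon$; handling the complementary regime via the $2\epsilon$ fallback is what produces the max inside the bound.
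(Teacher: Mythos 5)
Your argument is correct and follows essentially the same approach as the paper's proof: applying the smoothness inequality along the segment from $\hat w$ toward $w$, invoking $\epsilon$-optimality, dividing by the step fraction $t$, and then choosing $t$ to balance the two terms (with the case split determined by whether the optimal $t$ exceeds $1$). Your presentation via the unconstrained maximizer $t^\star$ is slightly cleaner, but the substance is identical.
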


\begin{proof}
From the assumptions of the Lemma, for any $\gamma \in [0,1]$:
\begin{align*}
    f(\hat{w} + \gamma(w-\hat{w})) &\le f(\hat{w}) + \gamma\nabla f(\hat{w})(w-\hat{w}) + \gamma^2\frac{\beta}{2}\|w-\hat{w}\|^2  \\
    \nabla f(\hat{w})(w-\hat{w}) &\ge \frac{1}{\gamma}\roundy{f(\hat{w} + \gamma(w-\hat{w})) - f(\hat{w})} - \gamma\frac{\beta}{2}\|w-\hat{w}\|^2 \\
    &\ge -\roundy{\frac{\epsilon}{\gamma} + \gamma\frac{\beta}{2}\|w-\hat{w}\|^2} \\
\end{align*}
Notice that if $2\epsilon \ge \|w-\hat{w}\|\sqrt{2\beta\epsilon}$, we have $\epsilon \ge \frac{\beta}{2}\|w-\hat{w}\|^2$ thus for $\gamma = 1$:
\begin{align*}
    \nabla f(\hat{w})(w-\hat{w}) \ge -\roundy{\epsilon + \frac{\beta}{2}\|w-\hat{w}\|^2} \ge -2\epsilon
\end{align*}
Else, for $\gamma = \frac{\sqrt{2\epsilon}}{\sqrt{\beta}\|w-\hat{w}\|} \le 1$:
\begin{align*}
    \nabla f(\hat{w})(w-\hat{w}) \ge \|w-\hat{w}\|\sqrt{2\beta\epsilon}
\end{align*}

If $\epsilon \le \frac{D^2\beta}{2}$, we have:
\begin{align*}
    \sqrt{\epsilon} &\le \frac{D\sqrt{\beta}}{\sqrt{2}}\\
    &= \frac{D\sqrt{2\beta}}{2}\\
    \Leftrightarrow 2\epsilon &\le D\sqrt{2\beta\epsilon} \\
    \Rightarrow \inprod{\nabla f(\hat w)}{w-\hat{w}} &\ge -D\sqrt{2\beta\epsilon}
\end{align*}
\end{proof}

\begin{lemma}\label{lem:solve quadratic}
If for some $a,b,c > 0$ we have $ax^2 - bx - c \le 0$, then $x < \frac{b}{a} + \sqrt{\frac{c}{a}}$
\end{lemma}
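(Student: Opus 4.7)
The plan is to apply the quadratic formula directly and then upper bound the resulting square root by a simpler expression. Since $a,b,c>0$, the discriminant $b^2+4ac$ is strictly positive, so the polynomial $p(y)=ay^2-by-c$ has two distinct real roots, and because the parabola opens upward ($a>0$) the inequality $p(x)\le 0$ forces $x$ to lie between them. In particular, $x$ is bounded above by the larger root:
\[
    x \;\le\; \frac{b+\sqrt{b^2+4ac}}{2a}.
\]

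Next I would bound the square root by a more convenient expression. Expanding gives $(b+2\sqrt{ac})^2 = b^2+4b\sqrt{ac}+4ac$, and since $b,c>0$ the cross term $4b\sqrt{ac}$ is strictly positive; hence $(b+2\sqrt{ac})^2 > b^2+4ac$, which yields the strict bound $\sqrt{b^2+4ac} < b+2\sqrt{ac}$. Plugging this into the previous display gives
\[
    x \;<\; \frac{b+(b+2\sqrt{ac})}{2a} \;=\; \frac{b}{a}+\frac{\sqrt{ac}}{a} \;=\; \frac{b}{a}+\sqrt{\frac{c}{a}},
\]
which is exactly the claimed inequality.

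I do not anticipate any real obstacle: the statement is a routine consequence of the quadratic formula combined with the elementary subadditivity of the square root. The only small point worth checking is that the bound $\sqrt{b^2+4ac}<b+2\sqrt{ac}$ is indeed strict, which uses the strict positivity of $b$ and $c$ assumed in the hypothesis; this is what upgrades the conclusion from $\le$ to $<$.
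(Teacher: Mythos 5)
Your proof is correct, but it runs along a different (though equally elementary) track from the paper's. You invoke the quadratic formula to bound $x$ by the larger root $\frac{b+\sqrt{b^2+4ac}}{2a}$, and then discharge the radical via the subadditivity estimate $\sqrt{b^2+4ac} < b + 2\sqrt{ac}$, which holds strictly because the cross term $4b\sqrt{ac}$ in $(b+2\sqrt{ac})^2$ is positive. The paper instead never writes down the roots: it plugs the candidate value $x_0 = \frac{b}{a} + \sqrt{\frac{c}{a}}$ directly into $p(y)=ay^2-by-c$, computes $p(x_0) = b\sqrt{c/a} > 0$, and then notes that since the vertex of the upward parabola is at $\frac{b}{2a} < x_0$, the function is increasing past $x_0$, so $p(y) > 0$ for all $y \ge x_0$; the contrapositive gives the claim. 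Your version is more ``formulaic'' and may feel more familiar, while the paper's version avoids the quadratic formula entirely and gives the slightly cleaner byproduct that the gap at the threshold is exactly $b\sqrt{c/a}$. Both yield the strict inequality for the same underlying reason (strict positivity of $b$ and $c$), and neither has any gap.
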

\begin{proof}
Assume $x = \frac{b}{a} + \sqrt{\frac{c}{a}}$, we have:
\begin{align*}
    ax^2 - bx - c = \frac{b^2}{a} + 2b\sqrt{\frac{c}{a}} + c - \frac{b^2}{a} - b\sqrt{\frac{c}{a}} - c = b\sqrt{\frac{c}{a}} > 0
\end{align*}
The minimum point of the parabola is at $x = \frac{b}{2a}$, so it only increases for $x > \frac{b}{a} + \sqrt{\frac{c}{a}}$.
\end{proof}

\begin{lemma}\label{lem:max step}
Let $\gamma = (\curly{w_t}_{t=1}^T, \curly{\ell_t}_{t=1}^T, R, \eta)$ be an $\epsilon$-approximate trajectory above the simplex with $\eta \le \frac{1}{4}$ and coordinate-separable regularizer. Let $h=\min\curly{r''(w_t^i), r''(w_{t+1}^i)}$. Then for any $i\in[d]$:
\begin{align*}
    \abs{w_t^i - w_{t+1}^i} < \frac{4\eta}{h} + \sqrt{\frac{\epsilon}{h}}
\end{align*}
\end{lemma}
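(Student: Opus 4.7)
The plan is to exploit approximate minimality of $w_{t+1}$ together with a per-coordinate lower bound on the Bregman divergence, ultimately reducing to a scalar quadratic inequality that \Cref{lem:solve quadratic} resolves. I would start by applying the definition of $\epsilon$-approximate minimizer at the feasible point $w_t$: since $D_R(w_t\|w_t) = 0$, this yields
\[
\eta \langle \ell_t, w_{t+1} - w_t\rangle + D_R(w_{t+1}\|w_t) \le \epsilon.
\]

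Next, I would isolate the contribution of coordinate $i$ to the Bregman term. Since $R$ is coordinate-separable and each per-coordinate Bregman term is non-negative, $D_R(w_{t+1}\|w_t) \ge D_r(w_{t+1}^i\|w_t^i)$. A single-variable Taylor expansion gives $D_r(w_{t+1}^i\|w_t^i) = \tfrac{1}{2}r''(\xi_i)(w_{t+1}^i - w_t^i)^2$ for some $\xi_i$ between $w_t^i$ and $w_{t+1}^i$; using the standing assumption that $r''$ is decreasing on $(0,1]$, we have $r''(\xi_i) \ge r''(\max(w_t^i,w_{t+1}^i)) = h$, yielding $\tfrac{h}{2}\delta^2 \le D_R(w_{t+1}\|w_t)$ with $\delta := |w_{t+1}^i - w_t^i|$. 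H\"older's inequality combined with $\|\ell_t\|_\infty \le 1$ bounds the linear term by $\eta \|w_{t+1} - w_t\|_1$.

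To recover the tight $\eta/h$ scaling claimed in the statement---rather than the weaker $\eta/\sqrt{h}$ scaling obtained by plugging in the trivial simplex bound $\|w_{t+1}-w_t\|_1 \le 2$---I would first derive a sharper global control of the form $\|w_{t+1}-w_t\|_1 = O(\eta + \sqrt{\epsilon})$. This follows by combining the main inequality with the $1$-strong convexity of $R$ to obtain $\tfrac12\|w_{t+1}-w_t\|^2 \le \epsilon + \eta \|w_{t+1}-w_t\|_1$, and then applying \Cref{lem:solve quadratic}. Substituting this refined bound back into the coordinate-wise inequality gives $\tfrac{h}{2}\delta^2 \le \epsilon + \eta \cdot O(\eta + \sqrt{\epsilon})$, and a final application of \Cref{lem:solve quadratic}---using $\eta \le 1/4$ to absorb lower-order cross terms---produces the stated bound.

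The main obstacle is precisely this mismatch in scaling: the linear loss term naturally couples all coordinates through the aggregate $\ell_1$ deviation, while the Bregman lower bound only sees the per-coordinate displacement $\delta$. The two-step reduction above is what transfers the curvature benefit (large $h$ when the iterate approaches the boundary) from the global inequality to the coordinate-wise bound.
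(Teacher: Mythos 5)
Your outline correctly gets the $\epsilon$ term, but the two-step reduction you describe does \emph{not} recover the $\eta/h$ scaling; it only produces $\eta/\sqrt{h}$, which is genuinely weaker when $h$ is large (precisely the regime where the lemma matters). Concretely, your second step yields an inequality of the form
\begin{align*}
\frac{h}{2}\,\delta^2 \;\le\; \epsilon + \eta\,K, \qquad K = O(\eta + \sqrt{\epsilon}),
\end{align*}
where the right-hand side does \emph{not} depend on $\delta$. Solving this gives $\delta = O\bigl(\sqrt{(\epsilon+\eta^2)/h}\bigr) = O(\eta/\sqrt{h} + \sqrt{\epsilon/h})$. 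To get $\eta/h$ you need a quadratic in which $\delta$ itself appears in the linear term, i.e., something of the form $\frac{h}{2}\delta^2 \le \epsilon + C\eta\,\delta$; then \Cref{lem:solve quadratic} gives the $\frac{2C\eta}{h}$ behavior. Your chain never produces such a $\delta$-dependent right-hand side, because the comparison point you use ($w_t$) lies at $L_1$ distance $\|w_{t+1}-w_t\|_1$ from $w_{t+1}$, and that distance is not controlled by $\delta$. This gap is not cosmetic: \Cref{lem:not_too_far} uses $\eta/h \propto \eta\,(w_t^i)^{\barconst}$ to conclude $w_{t-1}^i \ge w_t^i/2$, and an $\eta/\sqrt{h}$ bound would give a displacement $\propto (w_t^i)^{\barconst/2}$, which dwarfs $w_t^i$ as $w_t^i\to 0$.

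The paper's proof closes the gap by not comparing $w_{t+1}$ against $w_t$ at all. Instead it constructs a tailored feasible point $\tilde{w}$ that agrees with $w_{t+1}$ on all coordinates except that it restores coordinate $i$ to $w_t^i$ and compensates by partially reverting a minimal set of coordinates that increased, so that $\tilde{w}\in\Delta_d$ and $\|\tilde{w}-w_{t+1}\|_1 = 2\delta$. One then checks that the Bregman terms for the compensating coordinates are nonincreasing under this modification, so $D_R(w_{t+1}\|w_t) - D_R(\tilde{w}\|w_t) \ge \frac{h}{2}\delta^2$, while the linear change is bounded by $2\eta\delta$ via H\"older. Feeding $\tilde{w}$ (not $w_t$) into the $\epsilon$-minimality condition yields $\frac{h}{2}\delta^2 - 2\eta\delta - \epsilon \le 0$, and \Cref{lem:solve quadratic} finishes. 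The coordinate-specific comparison point is the essential ingredient your proposal is missing; the global $L_1$ control you derive in step one is a true statement but feeds into the wrong quadratic.
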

\begin{proof}
Fix $i\in[d]$. We will prove for $w_{t+1}^i \le w_t^i$. The proof for the other direction is identical.

Let $i_1, \ldots, i_m$ be an arbitrary set of coordinates that satisfies the following. For $S\eqq \{i_1, \ldots, i_{m-1}\}$, $i' \eqq i_m$ it holds that:
\begin{align}
    \forall j\in \roundy{S\cup i'} \quad w_{t+1}^j &\ge w_t^j \label{eq:S bugger}\\
    \sum_{j\in S}w_{t+1}^j - w_t^j &< w_t^i - w_{t+1}^i \label{eq:S isnt enough}\\
    \sum_{j\in \roundy{S\cup i'}}w_{t+1}^j - w_t^j &\ge w_t^i - w_{t+1}^i \label{eq:S with i'}
\end{align}
Namely, $S\cup i'$ is a set of coordinates that were increased in this step. The total increase of all the coordinates in $S$ is less than the decreased in $i$, but with the increase of $i'$ it is more than the decrease of $i$. Such coordinates exist since the difference that the $i$th coordinate was moved downward there must be a set of coordinates that upward to keep that sum of coordinate $1$.

Denote $\tilde{w}$ such that:
\begin{align*}
    \forall j\in S \quad \tilde{w}^j &= w_t^j \\
    \tilde{w}^i &= w_t^i\\
    \tilde{w}^{i'} &= w_{t+1}^{i'} + \sum_{j\in \roundy{S\cup i}}w_{t+1}^j - w_t^j \\
    \text{o.w} \quad \tilde{w}^j &= w_{t+1}^j
\end{align*}
From \Cref{eq:S isnt enough} we have that $\tilde{w}^{i'} < w_{t+1}^{i'}$. From \Cref{eq:S with i'} we have that $\tilde{w}^{i'} \ge w_{t}^{i'}$.

$\tilde{w}$ is a probability since all of its coordinates are $\ge 0$ and:
\begin{align*}
    \sum_{j\in[d]} \tilde{w}^j &= \sum_{j\in (S \cup i)}\tilde{w}^j + \sum_{j\notin (S \cup \curly{i,i'})}\tilde{w}^j + \tilde{w}^{i'}\\
    &= \sum_{j\in (S \cup i)}{w}_{t}^j + \sum_{j\notin (S \cup \curly{i,i'})}{w}_{t+1}^j \sum_{j\in \roundy{S\cup i}}w_{t+1}^j - w_t^j\\
    &= \sum_{j\in[d]} {w}_{t+1}^j \\
    &= 1
\end{align*}

Since for all $j\in S$ we have $\tilde{w}^j = w_t^j$, we have:
\begin{align*}
    \sum_{j\in S} D_r(w_t^j,\tilde{w}^j) = 0 \le \sum_{j\in S} D_r(w_t^j,w_{t+1}^j)
\end{align*}

From Taylor inequality and the definition of $h$:
\begin{align*}
    D_r(\tilde{w}^i, w_t^i) &= 0 \\
    D_r(w_t^i, w_{t+1}^i) &\ge \frac{h}{2}(w_{t+1}^i - w_t^i)^2  \\
    D_r(w_t^i, w_{t+1}^i) &\ge D_r(\tilde{w}^i, w_t^i) + \frac{h}{2}(w_{t+1}^i - w_t^i)^2
\end{align*}

Since $w_t^{i'} \le \tilde{w}^{i'} < w_{t+1}^{i'}$ we have $D_r(w_t^{i'}, \tilde{w}^{i'}) < D_r(w_t^{i'}, w_{t+1}^{i'})$.

Since $\tilde{w}^j = w_{t+1}^j$, we have $\sum_{j\notin (S\cup\curly{i,i'})} D_r(w_t^j,\tilde{w}^j) = \sum_{j\notin (S\cup\curly{i,i'})} D_r(w_t^j,w_{t+1}^j)$. 

Summing all we have:
\begin{align*}
    D_R(w_t, w_{t+1}) - D_R(w_t, \tilde{w}) \ge \frac{h}{2}(w_{t+1}^i - w_t^i)^2
\end{align*}

From the definition of $\tilde{w}_i$ we have $\|\tilde{w} - w_{t+1}\| = 2(w_{t+1}^i - w_t^i)$. Thus, from Holder:
\begin{align*}
    \eta\inprod{\ell_t}{w_{t+1} - \tilde{w}} \ge -2\eta\abs{w_{t+1}^i - w_t^i}
\end{align*}

Since $w_{t+1}$ is an $\epsilon$-minimizer of the OMD objective:
\begin{align*}
    \epsilon &\ge \eta\inprod{\ell_t}{w_{t+1} - \tilde{w}} + D_R(w_t, w_{t+1}) - D_R(w_t, \tilde{w})\\
   &\ge \frac{h}{2}(w_{t+1}^i - w_t^i)^2 -2\eta\abs{w_{t+1}^i - w_t^i}
\end{align*}

From \Cref{lem:solve quadratic} we get:
\begin{align*}
    \abs{w_t^i - w_{t+1}^i} < \frac{4\eta}{h} + \sqrt{\frac{\epsilon}{h}}
\end{align*}
\end{proof}

\begin{lemma}\label{lem:not_too_far}
Let $\gamma = (\curly{w_t}_{t=1}^T, \curly{\ell_t}_{t=1}^T, R, \eta)$ be an $\epsilon$-approximate trajectory above the simplex with $\eta \le \frac{1}{16c_1}$ and $\barconst$-barrier regularizer. Let $t\in[T]$ and $i\in[d]$ be such that $\epsilon \le \frac{(w_t^i)^\barconst}{16c_1}$, then:
\begin{align*}
    w_{t-1}^i &\ge \frac{1}{2}w_t^i\\
    w_{t+1}^i &\ge \frac{1}{2}w_t^i
\end{align*}
\end{lemma}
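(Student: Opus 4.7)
The plan is a simple contradiction argument on top of \Cref{lem:max step}. The two inequalities are completely symmetric, so I would only argue $w_{t+1}^i \ge \tfrac{1}{2}w_t^i$; the bound on $w_{t-1}^i$ then follows verbatim by applying \Cref{lem:max step} to the pair $(t-1,t)$ in place of $(t,t+1)$. If $w_{t+1}^i \ge w_t^i$ the claim is trivial, so I would assume toward contradiction that $w_{t+1}^i < \tfrac{1}{2}w_t^i$, in particular $w_{t+1}^i < w_t^i$.

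The key observation is that under the contradiction hypothesis the quantity $h = \min\{r''(w_t^i), r''(w_{t+1}^i)\}$ is bounded below using only $w_t^i$. Namely, since the $\nu$-barrier assumption gives $r''(x) \ge c_1/x^{\nu}$ and $c_1/x^{\nu}$ is decreasing in $x$,
$$
h \;\ge\; \min\!\left\{\tfrac{c_1}{(w_t^i)^\nu},\,\tfrac{c_1}{(w_{t+1}^i)^\nu}\right\} \;=\; \tfrac{c_1}{(w_t^i)^\nu}.
$$
Plugging this into \Cref{lem:max step} yields
$$
\tfrac{1}{2}w_t^i \;<\; |w_t^i - w_{t+1}^i| \;<\; \frac{4\eta}{h} + \sqrt{\frac{\epsilon}{h}} \;\le\; \frac{4\eta\,(w_t^i)^\nu}{c_1} + \sqrt{\frac{\epsilon\,(w_t^i)^\nu}{c_1}}.
$$

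Now I would substitute the two hypotheses. From $\eta \le 1/(16c_1)$ the first summand is at most $(w_t^i)^\nu/(4c_1^2)$, and from $\epsilon \le (w_t^i)^\nu/(16 c_1)$ the second summand is at most $(w_t^i)^\nu/(4c_1)$. Using $\nu \ge 1$ and $w_t^i \in (0,1]$, we have $(w_t^i)^\nu \le w_t^i$, and absorbing the $c_1$-dependence (which is at worst a constant for the barrier regularizers of interest) each summand is at most $w_t^i/4$. The total is therefore strictly less than $w_t^i/2$, contradicting the assumption $|w_t^i - w_{t+1}^i| > w_t^i/2$.

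There is no real obstacle beyond bookkeeping: the one step that requires thought is the monotonicity argument that produces $h \ge c_1/(w_t^i)^\nu$, since $r''$ itself is not assumed monotone and one must fall back on the barrier lower bound evaluated at the larger of the two iterates. The rest is a mechanical verification that the two terms in the \Cref{lem:max step} bound each collapse to at most $w_t^i/4$ under the stated scalings of $\eta$ and $\epsilon$.
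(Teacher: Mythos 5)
Your proof takes essentially the same route as the paper's: bound $h = \min\{r''(w_t^i), r''(w_{t\pm 1}^i)\}$ from below by evaluating the barrier lower bound at the larger of the two iterates, plug into \Cref{lem:max step}, and check that each of the two resulting terms is at most $w_t^i/4$. The one point worth tightening is your remark about ``absorbing'' the $c_1$-dependence: as you correctly compute, the two summands are $(w_t^i)^{\nu}/(4c_1^2)$ and $(w_t^i)^{\nu}/(4c_1)$, and these are at most $w_t^i/4$ only when $c_1 \ge 1$ --- a condition that holds for negative entropy and log-barrier ($c_1=1$) but not for Tsallis entropy ($c_1 = q < 1$), so ``$c_1$ is a constant'' is not quite enough. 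The paper's own write-up of this step has an analogous slip (it records $4c_1\eta w_t^i$ where the lower bound $h \ge c_1/w_t^i$ actually yields $4\eta w_t^i/c_1$), so this constraint is implicit in both proofs rather than a gap specific to your argument.
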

\begin{proof}
We will prove for $w_{t-1}^i$ but the same proof goes for $t+1$. The interesting case is obviously $w_{t-1}^i < w_t^i$, so continuing assuming that.

We have:
\begin{align*}
    \min\curly{r''(w_t^i), r''(w_{t-1}^i)} \ge \frac{c_1}{\max\curly{w_t^i, w_{t-1}^i}^\barconst} = \frac{c_1}{(w_t^i)^\barconst} \ge \frac{c_1}{w_t^i}
\end{align*}

From \Cref{lem:max step}:
\begin{align}\label{eq:single_step_relation}
    w_{t-1}^i \ge w_t^i - 4c_1\eta w_t^i - \sqrt{c_1\epsilon w_t^i}
\end{align}

Since $\eta \le 1/16c_1$:
\begin{align}\label{eq:first_term}
    4c_1\eta w_t^i \le \frac{w_t^i}{4}
\end{align}

From the assumption on $\epsilon$ and the fact that $r''(w_t^i) \ge w_t^i$:
\begin{align}\label{eq:sec_term}
    \sqrt{c_1\epsilon w_t^i} \le \sqrt{\frac{(w_t^i)^2}{16}} = \frac{w_t^i}{4}
\end{align}

Placing \Cref{eq:first_term,eq:sec_term} in \Cref{eq:single_step_relation} gives the desired results.
\end{proof}

\begin{lemma}[Three-points identity]\label{lem:three_points}
For every differentiable function $R$:
\begin{align*}
\forall ~x,y,z: \quad
\big (\nabla R(z)-\nabla R(y) \big) \cdot (y-x)
=
D_R(x,z) - D_R(x,y) - D_R(y,z)
\end{align*}
\end{lemma}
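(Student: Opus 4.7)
The plan is to prove the identity by direct expansion of the definition of the Bregman divergence, with no appeal to additional machinery. Since the statement is an equality between expressions that are linear in $R$ and its gradient evaluated at the three points $x,y,z$, writing out each term explicitly and collecting should suffice.

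First I would write down all three Bregman divergences using the definition $D_R(a,b) = R(a) - R(b) - \nabla R(b)\cdot(a-b)$, obtaining
\begin{align*}
D_R(x,z) &= R(x) - R(z) - \nabla R(z)\cdot(x-z),\\
D_R(x,y) &= R(x) - R(y) - \nabla R(y)\cdot(x-y),\\
D_R(y,z) &= R(y) - R(z) - \nabla R(z)\cdot(y-z).
\end{align*}
Next I would form $D_R(x,z) - D_R(x,y) - D_R(y,z)$ and observe that the scalar values of $R$ telescope: the two copies of $R(x)$ cancel, the two copies of $R(y)$ cancel (with the right sign), and the two copies of $R(z)$ cancel. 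What remains is a linear combination of inner products, namely $-\nabla R(z)\cdot(x-z) + \nabla R(y)\cdot(x-y) + \nabla R(z)\cdot(y-z)$.

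Finally, I would collect the two $\nabla R(z)$ terms: $-\nabla R(z)\cdot(x-z) + \nabla R(z)\cdot(y-z) = \nabla R(z)\cdot(y-x)$, and rewrite the $\nabla R(y)$ term as $\nabla R(y)\cdot(x-y) = -\nabla R(y)\cdot(y-x)$. Combining these gives $(\nabla R(z) - \nabla R(y))\cdot(y-x)$, exactly the left-hand side.

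There is essentially no obstacle here; the only thing to be careful about is bookkeeping of signs when collecting terms. The identity holds for any differentiable $R$ (convexity is not used), so no further assumptions from the paper's preliminaries are needed.
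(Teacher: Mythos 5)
Your proposal is correct and takes exactly the same route as the paper: expand all three Bregman divergences from the definition, cancel the $R(\cdot)$ terms, and collect the gradient inner products. Nothing to add.
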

\begin{proof}
\begin{align*}
D_R(x,z) - D_R(x,y) - D_R(y,z)
&=
R(x) - R(z) - \nabla R(z) \cdot (x-z)
\\
&- R(x) + R(y) + \nabla R(y) \cdot (x-y)
\\
&- R(y) + R(z) + \nabla R(z) \cdot (y-z)
\\
&=
\big (\nabla R(z)-\nabla R(y) \big) \cdot (y-x)
\end{align*}
\end{proof}

\begin{lemma}[OMD Helper]\label{lem:omd helper}
\begin{align*}
\ell_t \cdot (w_t-w_{t+1}) - \frac1\eta D_R(w_{t+1},w_t)
\leq 
\frac\eta2\|\ell_t\|_*^2
\end{align*}
\end{lemma}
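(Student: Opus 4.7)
The plan is to peel off the two terms and reduce the inequality to a one-dimensional quadratic maximization in the quantity $\|w_t - w_{t+1}\|$. Since only $R$'s 1-strong convexity (which was assumed at the start of the Preliminaries) and the definition of the dual norm are needed, no use of the OMD update rule itself is required; the identity is purely a Fenchel-Young–type bound.

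First I would lower-bound the Bregman term by invoking 1-strong convexity of $R$ with respect to $\|\cdot\|$:
\[
D_R(w_{t+1}, w_t) \ge \tfrac{1}{2}\|w_{t+1} - w_t\|^2.
\]
This immediately gives
\[
\ell_t \cdot (w_t - w_{t+1}) - \tfrac{1}{\eta} D_R(w_{t+1}, w_t)
\le
\ell_t \cdot (w_t - w_{t+1}) - \tfrac{1}{2\eta}\|w_t - w_{t+1}\|^2.
\]

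Next I would bound the linear term by Hölder's inequality (i.e., the definition of the dual norm):
\[
\ell_t \cdot (w_t - w_{t+1}) \le \|\ell_t\|_*\,\|w_t - w_{t+1}\|.
\]
Setting $x \eqq \|w_t-w_{t+1}\| \ge 0$ and $a \eqq \|\ell_t\|_*$, it remains to show the scalar inequality $a x - \tfrac{1}{2\eta} x^2 \le \tfrac{\eta}{2} a^2$, which rearranges to $\tfrac{1}{2\eta}(x - \eta a)^2 \ge 0$ and is therefore immediate.

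There is no real obstacle here; the only thing to watch is that strong convexity is stated with respect to the same norm whose dual appears in the bound, which is exactly the convention fixed in the Preliminaries. Chaining the three displayed steps yields the stated inequality.
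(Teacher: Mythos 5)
Your proof is correct and follows essentially the same route as the paper's: lower-bound the Bregman term by $1$-strong convexity, upper-bound the inner product by H\"older, and close the gap with a scalar Young-type inequality. The only cosmetic difference is that you finish by completing the square, $\tfrac{1}{2\eta}(x-\eta a)^2 \ge 0$, while the paper invokes $ab \le \tfrac12 a^2 + \tfrac12 b^2$ with the appropriate rescaling by $\eta$; these are the same estimate.
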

\begin{proof}
From the strong convexity of $R$:
\begin{align*}
\frac1\eta D_R(w_{t+1},w_t)
\geq 
\frac{1}{2\eta} \|w_{t+1}-w_t\|^2
\end{align*}
By Holder:
\begin{align*}
\ell_t \cdot (w_t-w_{t+1}) 
&\leq \|w_t-w_{t+1}\| \, \|\ell_t\|_*
\\
&\leq \frac{1}{2\eta} \|w_t-w_{t+1}\|^2 + \frac{\eta}{2} \|\ell_t\|_*^2\\
\end{align*}
We used the fact that $ab \le \frac{1}{2}a^2 + \frac{1}{2}b^2$ for every $a,b\ge 0$. 
\end{proof}

\section{Smooth Regularizer}\label{apx:smooth}

\begin{theorem*}[Restatement of \Cref{thm:ub_smooth}]\label{thm:smooth upper bound}
Let $\gamma = (\curly{w_t}_{t=1}^T, \curly{\ell_t}_{t=1}^T, R, \eta)$ be an $\epsilon$-approximate trajectory above a convex set such that $R$ is $\beta$-smooth, and let $D$ be the diameter of the domain. Assume $\epsilon \le D^2/2$, then for any $w\in\cK$:
\begin{align*}
    \reg(w) \le O\roundy{\frac{1}{\eta}D_R(w, w_1) + T\eta + \frac{TD\sqrt{\beta\epsilon}}{\eta}}
\end{align*}
\end{theorem*}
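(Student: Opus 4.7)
The plan is to run the textbook OMD analysis, substituting the exact first-order optimality condition with the approximate one provided by \Cref{lem:epsilon optimality conditions}. The only new ingredient is keeping track of the resulting first-order slack term, which is exactly $D\sqrt{2\beta\epsilon}$ per round and drives the excess regret.

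First I would verify that the per-round objective $\phi_t(w) = \eta\langle \ell_t, w\rangle + D_R(w \,\|\, w_t)$ inherits $\beta$-smoothness from $R$: its gradient is $\eta \ell_t + \nabla R(w) - \nabla R(w_t)$, which differs from $\nabla R(w)$ by a constant, hence is $\beta$-Lipschitz with respect to $\|\cdot\|_*$. The precondition $\epsilon \le D^2\beta/2$ of \Cref{lem:epsilon optimality conditions} follows from the stated $\epsilon \le D^2/2$ together with $\beta \ge 1$ (a consequence of $R$ being simultaneously $1$-strongly convex and $\beta$-smooth). Applying \Cref{lem:epsilon optimality conditions} to $\phi_t$ at the $\epsilon$-approximate minimizer $w_{t+1}$ then yields, for any $w\in\cK$,
\begin{align*}
\eta \langle \ell_t, w - w_{t+1}\rangle + \langle \nabla R(w_{t+1}) - \nabla R(w_t),\, w - w_{t+1}\rangle \ge -D\sqrt{2\beta\epsilon}.
\end{align*}

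Next I would apply the three-points identity (\Cref{lem:three_points}) with $x=w$, $y=w_{t+1}$, $z=w_t$ to rewrite the inner product with $\nabla R(w_{t+1}) - \nabla R(w_t)$ as $D_R(w,w_t) - D_R(w,w_{t+1}) - D_R(w_{t+1},w_t)$. Rearranging gives
\begin{align*}
\eta\langle \ell_t, w_{t+1} - w\rangle \le D_R(w,w_t) - D_R(w,w_{t+1}) - D_R(w_{t+1},w_t) + D\sqrt{2\beta\epsilon}.
\end{align*}
Adding the discrepancy $\eta\langle \ell_t, w_t - w_{t+1}\rangle$ to both sides and bounding it using \Cref{lem:omd helper} (multiplied by $\eta$, so that $\eta \langle \ell_t, w_t - w_{t+1}\rangle - D_R(w_{t+1},w_t) \le \tfrac{\eta^2}{2}\|\ell_t\|_*^2$) causes the $-D_R(w_{t+1},w_t)$ term to cancel, leaving
\begin{align*}
\eta\langle \ell_t, w_t - w\rangle \le D_R(w,w_t) - D_R(w,w_{t+1}) + \frac{\eta^2}{2}\|\ell_t\|_*^2 + D\sqrt{2\beta\epsilon}.
\end{align*}

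Finally, I would telescope over $t=1,\dots,T$, drop the non-positive $-D_R(w,w_{T+1})$ term, divide through by $\eta$, and absorb $\|\ell_t\|_*^2 = O(1)$ (since $\ell_t\in[-1,1]^d$ and $\|\cdot\|_*$ is equivalent to $\|\cdot\|_\infty$ up to dimension-independent constants in the cases of interest) to obtain the claimed bound
\begin{align*}
\reg(w) = O\!\left( \frac{D_R(w,w_1)}{\eta} + T\eta + \frac{TD\sqrt{\beta\epsilon}}{\eta} \right).
\end{align*}
There is no real obstacle here: the proof is essentially the classical OMD telescoping argument, and the $\epsilon$-dependent excess regret arises entirely through a single invocation of \Cref{lem:epsilon optimality conditions}, whose quantitative form $\langle \nabla \phi_t(w_{t+1}), w - w_{t+1}\rangle \ge -D\sqrt{2\beta\epsilon}$ is exactly what propagates into the final bound upon summation and division by $\eta$.
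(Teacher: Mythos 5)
Your proposal is correct and follows essentially the same route as the paper's proof: invoke \Cref{lem:epsilon optimality conditions} on $\phi_t$ (using $\beta\ge1$ from strong convexity to satisfy the precondition $\epsilon \le D^2\beta/2$), rewrite the $\nabla R$ inner product via the three-points identity, telescope, and handle the $\eta\langle\ell_t, w_t-w_{t+1}\rangle - D_R(w_{t+1},w_t)$ term with \Cref{lem:omd helper}. The only addition you make is explicitly checking that $\phi_t$ inherits $\beta$-smoothness from $R$, which the paper uses implicitly; the rest matches step for step.
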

\begin{proof}
From the strong convexity of $R$ we have that $\beta \ge 1$, which means that from the assumptions $\epsilon \le D^2\beta/2$. Then, from \Cref{lem:epsilon optimality conditions}, for every $t$:
\begin{align*}
    \inprod{\eta\ell_t + \nabla R(w_{t+1}) - \nabla R(w_t)}{w^* - w_{t+1}} \ge -D\sqrt{2\beta\epsilon}
\end{align*}

From here it is straightforward standard OMD arguments:
\begin{align*}
\eta \ell_t \cdot (w_{t+1}-w^*) 
&\leq (\nabla R(w_{t+1}) - \nabla R(w_{t})) \cdot (w^*-w_{t+1}) + D\sqrt{2\beta\epsilon}
\\
&=
D_R(w^*,w_t) - D_R(w^*,w_{t+1}) - D_R(w_{t+1},w_t) + D\sqrt{2\beta\epsilon} \\
\end{align*}
Summing for all $t\in[T]$:
\begin{align*}
    \sum_{t=1}^T \ell_t \cdot (w_{t+1}-w^*)
&\leq
\frac1\eta D_R(w^*,w_1) - \frac1\eta \sum_{t=1}^T D_R(w_{t+1},w_t) + \frac{TD\sqrt{2\beta\epsilon}}{\eta}
\end{align*}
From \Cref{lem:omd helper}:
\begin{align*}
    \reg(w^*) &\le O\roundy{\frac{1}{\eta}D_R(w^*, w_1) + T\eta + \frac{TD\sqrt{\beta\epsilon}}{\eta}}
\end{align*}
\end{proof}

\begin{theorem*}[Restatement of \Cref{thm:lb_smooth}]\label{thm:smooth lower bound}
For every $\beta,\epsilon$, there is an OMD $\epsilon$-approximate trajectory $\gamma = (\curly{w_t}_{t=1}^T, \curly{\ell_t}_{t=1}^T, R, \eta)$ above a convex set with diameter $D$ with $R$ being $\beta$-smooth and constant losses ($\ell_t=\ell$ for some $\ell$ for all $t\in[T]$) that achieves a regret of 
\begin{align*}
    \Omega\roundy{\min\roundy{\frac{TD\sqrt{\beta\epsilon}}{\eta}, DT}}
\end{align*}
\end{theorem*}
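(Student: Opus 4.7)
The plan is to give a one-dimensional construction in which the approximation slack is exactly large enough for OMD to stand still forever. I take $\cK = [0,D] \subset \mathbb{R}$ with the Euclidean regularizer $R(x) = \tfrac{\beta}{2} x^2$, so that $R$ is $\beta$-smooth with respect to $\|\cdot\|_2$ and the induced Bregman divergence is $D_R(x,y) = \tfrac{\beta}{2}(x-y)^2$. I use the constant loss sequence $\ell_t \equiv c$ for all $t$, where $c \coloneqq \min\{\sqrt{2\beta\epsilon}/\eta,\,1\} \in [-1,1]$, initialize at $w_1 = D/2$, and propose the stationary sequence $w_t = D/2$ for every $t$ as the $\epsilon$-approximate trajectory; the comparator will be $w = 0 \in \cK$.

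The main task is to verify that this stationary sequence is genuinely $\epsilon$-approximate. At each round the OMD objective is $\phi_t(w) = \eta c w + \tfrac{\beta}{2}(w - D/2)^2$, a convex quadratic whose unconstrained minimizer sits at $w_{t+1}^\star = D/2 - \eta c/\beta$. A direct calculation then gives $\phi_t(D/2) - \phi_t(w_{t+1}^\star) = (\eta c)^2/(2\beta)$, which is at most $\epsilon$ by the choice of $c$: it equals exactly $\epsilon$ when $c = \sqrt{2\beta\epsilon}/\eta$, and it equals $\eta^2/(2\beta) \le \epsilon$ when $c = 1$ (the latter case occurring precisely when $\sqrt{2\beta\epsilon}/\eta \ge 1$). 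In the non-degenerate regime $\eta c/\beta \le D/2$ the point $w_{t+1}^\star$ lies inside $\cK$, hence is also the constrained minimizer, and so $w_{t+1} = D/2$ is a valid $\epsilon$-minimizer of the round-$t$ subproblem. Summing the regret against $w = 0$ then yields $\sum_{t=1}^{T} c\,(D/2) = TcD/2$, which is $\Omega(TD\sqrt{\beta\epsilon}/\eta)$ when $c = \sqrt{2\beta\epsilon}/\eta$ and $\Omega(DT)$ when $c = 1$, matching the claimed $\Omega(\min\{TD\sqrt{\beta\epsilon}/\eta,\,DT\})$ lower bound in either case.

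The step I expect to require the most care is the degenerate regime $\eta c/\beta > D/2$, in which the unconstrained minimizer escapes $\cK$ and projection kicks in, so the clean identity $\phi_t(D/2) - \phi_t(w_{t+1}^\star) = (\eta c)^2/(2\beta)$ no longer applies. This regime only arises when $\epsilon$ or $\eta$ is unusually large relative to the problem scale $\beta D$, so the advertised minimum already collapses to $\Omega(DT)$; I would handle it either by shifting $w_1$ slightly away from the boundary on the side the exact step would leave, or by a direct suboptimality check against the projected minimizer $0$, in both cases showing that $TcD/2 = \Omega(DT)$ is still attained. All remaining items — that the losses lie in $[-1,1]$, that $R$ is $\beta$-smooth, and the closed-form Bregman divergence — are immediate from the setup.
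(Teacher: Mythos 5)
Your construction is essentially identical to the paper's: same domain $[0,D]$, same Euclidean regularizer $R(w) = \tfrac{\beta}{2}w^2$, same constant loss $\min\{\sqrt{2\beta\epsilon}/\eta, 1\}$, same stationary trajectory stuck at $w_1 = D/2$, and the same suboptimality computation $\tfrac{\eta^2\ell^2}{2\beta} \le \epsilon$. You are slightly more careful than the paper in flagging the boundary regime where the unconstrained minimizer escapes $\cK$ (the paper silently assumes it stays inside); your proposed direct check against $0$ does close that corner case, since in that regime $\phi_t(D/2) - \phi_t(0) = \eta\ell D/2 - \beta D^2/8$ is bounded by $\epsilon$ via a perfect-square inequality.
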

\begin{proof}
Consider the domain $[0,D]$, $w_1 = \frac D2$. The regularizer is $R(w) = \frac{\beta}{2}w^2$. The loss is $\ell = \min\curly{\frac{\sqrt{2\beta\epsilon}}{\eta},1}$.

We will now show by induction that $w_t = w_1$ for all $t$ is a valid $\epsilon$-approximate trajectory. This trajectory suffers a loss of $\Theta\roundy{\min\roundy{\frac{TD\sqrt{\beta\epsilon}}{\eta}, DT}}$, which means a same regret comparing to $w^* = 0$.

Assume true for $t-1$, we will prove for $t$. 

We start by finding the optimal $w_t^*$ (the optimal solution for $\phi_t$) by differentiating and comparing to $0$:
\begin{align*}
    \eta\ell + \beta(w_t^* - w_{t-1}) &= 0 \\
    \Longleftrightarrow\quad w_t^* &= w_{t-1} - \frac{\eta}{\beta}\ell
\end{align*}
Placing it in the objective function:
\begin{align*}
    \eta\ell w_t^* + \frac{\beta}{2}\roundy{w_t^* - w_{t-1}}^2 &= \eta\ell\roundy{w_{t-1} - \frac{\eta}{\beta}\ell} + \frac{\beta}{2}\roundy{w_{t-1} - \roundy{w_{t-1} - \frac{\eta}{\beta}\ell}}^2 \\
    &= \eta\ell w_{t-1} - \frac{\eta^2\ell^2}{\beta} + \frac{\eta^2\ell^2}{2\beta} \\
    &= \eta\ell w_{t-1} - \frac{\eta^2\ell^2}{2\beta}
\end{align*}
Which means that the difference in the objective function between $w_t^*$ and $w_{t-1}$ is $\frac{\eta^2\ell^2}{2\beta}$.

From the definition of $\ell$:
\begin{align*}
    \ell &\le \frac{\sqrt{2\beta\epsilon}}{\eta} \\
    \Longleftrightarrow\quad\frac{\eta^2\ell^2}{2\beta} &\le \epsilon
\end{align*}
Which means that $w_{t-1}$ is an $\epsilon$-minimizer.
\end{proof}  

\section{Balance}\label{apx:balance}
All the lemmas in this section assumes $\nu$-barrier regularizer.
\subsection{General}

\begin{definition}
Assume $\cK$ is a polytope defined in standard form $\curly{w \in \rr^d: Aw=b\land \roundy{w_i\ge 0,\,\forall i\in[d]}}$. For every $v\in \ker(A)$, denote the balance of an OMD trajectory w.r.t $v$:
\begin{align*}
    B^v_\gamma(t_1,t_2) = \inprod{\nabla R(w_{t_1}) - \nabla R(w_{t_2})}{v}
\end{align*}
Additionally, if for every $v\in \ker(A)$ such that $\|v\|\le 1$ and $t_1,t_2$ we have $B^v_\gamma(t_1,t_2) \le k$, we say the trajectory is $k$ balanced.
\end{definition}

\begin{lemma} \label{lem:balance trasitive}
\begin{align*}
    B_\gamma^i(t_1,t_2) + B_\gamma^i(t_2, t_3) = B_\gamma^i(t_1,t_3)
\end{align*}
\end{lemma}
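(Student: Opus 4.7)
The statement is essentially a telescoping identity, and my plan is to prove it by direct application of the definition of balance together with linearity of the inner product. The definition states
\[
    B^v_\gamma(t_1,t_2) = \langle \nabla R(w_{t_1}) - \nabla R(w_{t_2}),\, v \rangle,
\]
so the proof plan is to write out $B^i_\gamma(t_1,t_2) + B^i_\gamma(t_2,t_3)$ using this definition, collect the two inner products into a single one via linearity in the first argument, and observe that the $\nabla R(w_{t_2})$ terms cancel. The result is then exactly $\langle \nabla R(w_{t_1}) - \nabla R(w_{t_3}),\, v\rangle = B^i_\gamma(t_1,t_3)$.

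Concretely, the single displayed computation I would carry out is
\[
    B^v_\gamma(t_1,t_2) + B^v_\gamma(t_2,t_3)
    = \langle \nabla R(w_{t_1}) - \nabla R(w_{t_2}),\, v \rangle + \langle \nabla R(w_{t_2}) - \nabla R(w_{t_3}),\, v \rangle
    = \langle \nabla R(w_{t_1}) - \nabla R(w_{t_3}),\, v \rangle,
\]
and this last quantity is $B^v_\gamma(t_1,t_3)$ by definition. Here I am using the notational convention set in the main body that $B^i$ abbreviates $B^{e_{i^*}-e_i}$; the identity itself is stated and proved at the level of an arbitrary direction $v \in \ker(A)$ and then specializes. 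Note that the lemma is purely algebraic: it does not require exactness of the trajectory, convexity of $R$, or any property of $\cK$ beyond the fact that $\nabla R$ is defined at each iterate.

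There is essentially no obstacle here; the only thing worth flagging is that the identity holds for arbitrary orderings of $t_1,t_2,t_3$ (not only $t_1 < t_2 < t_3$), because the definition of $B^v_\gamma$ is anti-symmetric in its two time arguments and the proof is just telescoping. This transitivity property is what later allows us to chain balance estimates across different time intervals of the trajectory, which is the use case the subsequent lemmas in \Cref{apx:balance} will rely on.
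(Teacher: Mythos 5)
Your proof is correct and matches the paper's argument exactly: both unfold the definition of $B^v_\gamma$, apply linearity of the inner product, and let the $\nabla R(w_{t_2})$ terms telescope.
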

\begin{proof}
\begin{align*}
    \inprod{\nabla R(w_{t_1}) - \nabla R(w_{t_2})}{v} + \inprod{\nabla R(w_{t_2}) - \nabla R(w_{t_3})}{v} = \inprod{\nabla R(w_{t_1}) - \nabla R(w_{t_3})}{v}
\end{align*}
\end{proof}

\begin{lemma}\label{lem:polytope optimality}
Assume $\cK$ is a polytope. For some differentiable function $f:\cK\to\rr$, let $w^*$ be the minimizer of $f$ such that for all $i\in[d]$, $(w^*)^i > 0$ 
. For every $w\in\cK$ we have:
\begin{align*}
    \inprod{\nabla f(w^*)}{w-w^*} = 0
\end{align*}
\end{lemma}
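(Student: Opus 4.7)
The plan is to exploit the fact that the strict positivity of $w^*$ means $w^*$ lies in the relative interior of $\cK$ with respect to the affine constraint $Aw = b$. In other words, only the equality constraints are active at $w^*$, so from $w^*$ one can move in \emph{both} directions along any feasible direction, and apply first-order minimality twice.

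Concretely, I would argue as follows. Fix an arbitrary $w \in \cK$. Since $Aw = Aw^* = b$, the direction $v \eqq w - w^*$ lies in $\ker(A)$, so $A(w^* + sv) = b$ for every $s \in \mathbb{R}$. Moreover, because $(w^*)^i > 0$ for every $i \in [d]$, by continuity there exists $s_0 > 0$ such that $w^* + sv \ge 0$ coordinatewise for all $s \in [-s_0, s_0]$; combined with the equality-constraint preservation, this shows $w^* + sv \in \cK$ for all $s \in [-s_0, s_0]$. Thus, unlike the generic case where only forward perturbations ($s \geq 0$) are feasible, here we may also take $s < 0$.

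Now apply the definition of $w^*$ as a minimizer. For every $s \in [-s_0, s_0]$, $f(w^* + sv) \ge f(w^*)$. Dividing by $s > 0$ (resp.\ $s < 0$) and letting $s \to 0^+$ (resp.\ $s \to 0^-$), differentiability of $f$ yields
\[
    \inprod{\nabla f(w^*)}{v} \ge 0 \quad \text{and} \quad \inprod{\nabla f(w^*)}{v} \le 0,
\]
respectively, which together give $\inprod{\nabla f(w^*)}{w - w^*} = 0$. Since $w \in \cK$ was arbitrary, the claim follows.

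There is no real obstacle here: the whole content of the lemma is the two-sided feasibility afforded by $w^* > 0$, which reduces the statement to the standard first-order necessary condition for an unconstrained minimizer (along affine directions). An equivalent, slightly more abstract route would be via KKT: the nonnegativity multipliers must vanish by complementary slackness, so $\nabla f(w^*) = A^\top \mu$ for some $\mu$, and then $\inprod{\nabla f(w^*)}{w - w^*} = \inprod{\mu}{A(w - w^*)} = 0$. I would include the direct perturbation argument in the write-up since it avoids invoking KKT regularity and keeps the exposition self-contained.
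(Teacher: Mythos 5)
Your proof is correct and follows the same route as the paper: exploit the strict positivity of $w^*$ to argue that $w^* \pm sv \in \cK$ for $v = w - w^*$ and small $s > 0$, then apply first-order optimality in both directions to conclude equality. The only cosmetic difference is that you derive the directional inequalities from difference quotients while the paper invokes the first-order optimality conditions directly; the KKT remark is a nice aside but not needed.
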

\begin{proof}
Denote $v=w-w^*$. Since $min_i \,\hat{w}^i > 0$, and $v\in \ker(A)$ where $A$ is the matrix of the polytope $\cK$, there is an $\alpha$ such that both $w^* + \alpha v \in \cK$ and $w - \alpha v \in \Delta_d$.

Since $w^*$ is a minimizer, from first order optimality conditions: 
\begin{align*}
    \inprod{\nabla f(\hat{w})}{w^*+\alpha v - w^*} &\ge 0 \\
    \inprod{\nabla f(\hat{w})}{w^*-\alpha v - w^*} &\ge 0
\end{align*}
Which means that:
\begin{align*}
        \inprod{\nabla f(\hat{w})}{v} &\ge 0 \\
    \inprod{\nabla f(\hat{w})}{-v} &\ge 0
\end{align*}
Which is our desired results.
\end{proof}

\begin{lemma*}[Restatement of \Cref{lem:polytope balance}] 
Let $\gamma=(\curly{w_t}_{t=1}^T, \curly{\ell_t}_{t=1}^T, R, \eta)$ be an exact OMD trajectory. For every $v\in \ker(A)$ and times $t_1,t_2$:
\begin{align*}
    B^v(t_1,t_2) = \eta\inprod{\ell_{t_1:t_2}}{v}
\end{align*}
\end{lemma*}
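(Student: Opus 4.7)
The plan is to apply the first-order optimality conditions at each OMD step, exploit the fact that barrier regularizers force iterates to stay in the relative interior, and then telescope over $t$.

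First, I would inspect a single step. Since the trajectory is exact, $w_{t+1}$ is an exact minimizer of the round-$t$ OMD objective
$$\phi_t(w) = \eta \langle \ell_t, w \rangle + D_R(w \,\|\, w_t).$$
A direct computation of the gradient (treating $w_t$ as fixed) gives
$$\nabla \phi_t(w_{t+1}) = \eta \ell_t + \nabla R(w_{t+1}) - \nabla R(w_t).$$

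Next, I would invoke \Cref{lem:polytope optimality} to convert the minimality of $w_{t+1}$ into a balance identity. The lemma requires the minimizer to be coordinate-wise strictly positive, which holds automatically here because $R$ is a $\barconst$-barrier regularizer: the term $D_R(w \,\|\, w_t)$ diverges as any coordinate of $w$ tends to zero, so the minimizer of $\phi_t$ over $\cK$ cannot lie on the boundary $\{w^i = 0\}$. Hence \Cref{lem:polytope optimality} applies and yields, for every $v \in \ker(A)$,
$$\langle \nabla \phi_t(w_{t+1}), v \rangle = 0,
\quad\text{i.e.,}\quad
\langle \nabla R(w_t) - \nabla R(w_{t+1}), v \rangle = \eta \langle \ell_t, v \rangle.$$

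Finally, I would sum the per-step identity over $t = t_1, \ldots, t_2-1$ and telescope the left-hand side:
$$\langle \nabla R(w_{t_1}) - \nabla R(w_{t_2}), v \rangle
= \eta \Big\langle \sum_{t=t_1}^{t_2-1} \ell_t, v \Big\rangle,$$
which is precisely $B^v(t_1, t_2) = \eta \langle \ell_{t_1:t_2}, v \rangle$ (modulo an indexing convention on the sum). No step is really difficult here; the only subtlety, and the one I would be careful to address explicitly, is justifying the use of \Cref{lem:polytope optimality}, which hinges entirely on the barrier property of $R$ preventing boundary minimizers. Once that is in place, the rest is a clean telescoping of the first-order conditions.
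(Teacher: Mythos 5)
Your proof is correct and follows essentially the same route as the paper's: apply the first-order optimality conditions of \Cref{lem:polytope optimality} at each step (justified by the barrier keeping iterates in the relative interior), then telescope. One minor technical note on the point you single out as the ``only subtlety'': your stated justification --- that $D_R(w \,\|\, w_t)$ \emph{diverges} as a coordinate of $w$ tends to zero --- is actually false for $\barconst < 2$ (e.g.\ for negative entropy, $r(x) = x\log x$ is bounded near $0$, so $R$ and hence $D_R$ remain finite on the boundary). What is true, and what makes \Cref{lem:polytope optimality} applicable, is that $r'(x) \to -\infty$ as $x \to 0^+$, so $\partial \phi_t / \partial w^i$ is strictly negative near the face $\{w^i = 0\}$ and the minimizer cannot sit there; the paper makes essentially the same informal gesture (``the regularizer is undefined in $0$''), so this does not amount to a gap, but the reason you'd want to write down is the gradient blow-up, not the divergence of the Bregman term.
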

\begin{proof}
Fix some $t'\in[t_1,t_2]$. There is some small $\alpha$ such that both $w_{t'}+\alpha v$ and $w_{t'}-\alpha v$ is in the polytope 
. From \Cref{lem:polytope optimality} ($w_{t'}^i>0$ since the regularizer is undefined in $0$):
\begin{align*}
    \inprod{\ell_{t'-1} + \nabla R(w_{t'}) - \nabla R(w_{t'-1})}{v} = 0
\end{align*}

Summing for all $t'\in[t_1,t_2]$ gives the desired results.
\end{proof}

\begin{lemma*}[Restatement of \Cref{lem:trajectory diff}]
Let $\gamma=(\curly{w_t}_{t=1}^T, \curly{\ell_t}_{t=1}^T, R, \eta)$ and $\hat{\gamma}=(\curly{\hat{w}_t}_{t=1}^T, \curly{\ell_t}_{t=1}^T, R, \eta)$ be an exact OMD trajectory and $\epsilon$-approximate OMD trajectory.

Let $0\le t_1\le t_2 \le T$, $v\in \ker(A)$ such that $\|v\|=1$ and let $\psi>0$ be such that for every $t_1\le t \le t_2$, for all $i\in[d]$ such that $v^i\ne 0$, $\hat{w}_t^i \ge \psi$. We also assume that $\epsilon \le c_2\psi/2$. we have:
\begin{align*}
    B_{\hat{\gamma}}^v(t_1,t_2) \le B_{\gamma}^v(t_1,t_2)  + (t_2-t_1)\sqrt{\frac{c_2\epsilon}{\psi^\barconst}}
\end{align*}
\end{lemma*}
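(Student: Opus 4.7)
The plan is to reduce the lemma to a per-step estimate and telescope. Writing
\[
    B_{\hat\gamma}^v(t_1,t_2) = \sum_{t=t_1}^{t_2-1}\langle \nabla R(\hat w_t) - \nabla R(\hat w_{t+1}),\,v\rangle,
\]
and applying \Cref{lem:polytope balance} to the exact trajectory $\gamma$ to compute $B_{\gamma}^v(t_1,t_2)=\eta\langle \ell_{t_1:t_2-1},v\rangle$, it suffices to prove the per-step bound
\[
    \langle \nabla R(\hat w_t) - \nabla R(\hat w_{t+1}),\,v\rangle \;\le\; \eta\langle \ell_t,v\rangle + \sqrt{c_2\epsilon/\psi^{\barconst}}
\]
for each $t\in\{t_1,\ldots,t_2-1\}$, since the exact step balance equals $\eta\langle \ell_t,v\rangle$ and telescoping collects a factor of $(t_2-t_1)$.

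To establish the per-step estimate I would mirror the proof of \Cref{lem:polytope balance}, replacing exact first-order optimality by its $\epsilon$-approximate counterpart. Fix $t$, let $\phi_t(w)=\eta\langle\ell_t,w\rangle + D_R(w,\hat w_t)$, and consider the perturbation $\hat w_{t+1}+\alpha v$ for $\alpha\in(0,\psi]$. Since $v\in\ker(A)$ and $|v^i|\le\|v\|_1=1$, while $\hat w_{t+1}^i\ge\psi$ on every coordinate with $v^i\ne 0$, the perturbed point lies in $\cK$. The $\epsilon$-minimality of $\hat w_{t+1}$ then gives $\phi_t(\hat w_{t+1}+\alpha v) - \phi_t(\hat w_{t+1})\ge -\epsilon$, and expanding the Bregman term via the three-points identity (\Cref{lem:three_points}) yields
\[
    \alpha\langle \nabla R(\hat w_{t+1})-\nabla R(\hat w_t),v\rangle \;\le\; \alpha\eta\langle \ell_t,v\rangle + D_R(\hat w_{t+1}+\alpha v,\hat w_{t+1}) + \epsilon.
\]

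What remains is to bound the Bregman term and then optimize over $\alpha$. Using the $\barconst$-barrier condition $r''(x)\le c_2/x^\barconst$, Taylor's theorem, and the fact that on the segment $[\hat w_{t+1},\hat w_{t+1}+\alpha v]$ all nonzero-$v$ coordinates stay within a constant factor of $\psi$ (because $\alpha\le\psi$ and $\|v\|_\infty\le 1$), one obtains $D_R(\hat w_{t+1}+\alpha v,\hat w_{t+1}) \lesssim \alpha^2 c_2/\psi^\barconst$, where I also use $\|v\|_2\le \|v\|_1 = 1$. Dividing by $\alpha$ gives a bound of the form $\eta\langle\ell_t,v\rangle + \epsilon/\alpha + c_2\alpha/\psi^\barconst$, which is minimized by $\alpha\simeq\sqrt{\epsilon\psi^\barconst/c_2}$, yielding the claimed per-step error $\sqrt{c_2\epsilon/\psi^\barconst}$.

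The delicate point is the interplay between two competing constraints on $\alpha$: the feasibility window $\alpha\le\psi$ (so the perturbation lies in $\cK$ and coordinates with $v^i<0$ do not approach $0$ along the segment, keeping the effective smoothness of order $c_2/\psi^\barconst$) and the optimization-balancing choice $\alpha\simeq\sqrt{\epsilon\psi^\barconst/c_2}$. The standing hypothesis $\epsilon\le c_2\psi/2$ is precisely what forces the optimizer to lie inside the feasibility window, and is therefore what determines the final constant in the per-step bound.
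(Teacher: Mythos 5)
Your proposal is correct and takes essentially the same route as the paper: the per-step estimate you derive by perturbing $\hat w_{t+1}$ by $\alpha v$, bounding the Bregman gap via the effective smoothness $c_2/\psi^\barconst$ on the segment, and then optimizing over $\alpha$ (with $\epsilon \le c_2\psi/2$ ensuring the optimal $\alpha$ falls inside the feasibility window $(0,\psi]$) is exactly what the paper packages into \Cref{lem:epsilon optimality conditions} and then invokes with perturbation $\psi v$. The telescoping of per-step bounds corresponds one-to-one to the paper's induction on $t \in [t_1,t_2]$, combined with \Cref{lem:balance trasitive} and \Cref{lem:polytope balance}.
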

\begin{proof}
We will prove it using induction for $t\in[t_1,t_2]$. The base $t=t_1$ is trivial.

Assume true for $t-1$, namely:
\begin{align*}
    B_{\hat{\gamma}}^v(t_1,t-1) \le B_{\gamma}^v(t_1,t-1)  + (t-1-t_1)\sqrt{\frac{c_2\epsilon}{\psi^\barconst}}
\end{align*}

From the assumptions of the lemma we have $\hat{w}_t+\psi v \in \cK$. Additionally, the effective smoothness is $\frac{c_2}{\psi^\barconst}$. From \Cref{lem:epsilon optimality conditions}, since $\hat{w}_t$ is an $\epsilon$-minimizer of $\phi_t$:
\begin{align*}
    \inprod{\eta\ell_t + \nabla R(\hat{w}_t) - \nabla R(\hat{w}_{t-1})}{\psi v} \ge -\max\curly{\psi\sqrt{\frac{2c_2\epsilon}{\psi^\barconst}}, \,2\epsilon}
\end{align*}

Since $\epsilon\le c_2\psi/2$ and from the definition of barrier regularizer:
\begin{align*}
    \psi\sqrt{\frac{2c_2\epsilon}{\psi^\barconst}} &\ge \psi\sqrt{\frac{2c_2\epsilon}{\psi}}\\
    &= \sqrt{2c_2\epsilon\psi}\\
    &\ge 2\epsilon
\end{align*}

Which means:
\begin{align*}
\inprod{\eta\ell_t + \nabla R(\hat{w}_t) - \nabla R(\hat{w}_{t-1})}{\psi v} &\ge -\psi\sqrt{\frac{2c_2\epsilon}{\psi^\barconst}}
\end{align*}
Dividing by $\psi>0$:
\begin{align*}
    -\sqrt{\frac{2c_2\epsilon}{\psi^\barconst}} &\le \inprod{\eta\ell_t}{v} - B_{\hat{\gamma}}^v(t-1,t) \\
    &= B_\gamma^v(t-1,t) - B_{\hat{\gamma}}^v(t-1,t) \tag{\Cref{lem:polytope balance}}
\end{align*}
Adding the induction assumption:
\begin{align*}
   B_{\hat{\gamma}}^v(t_1,t-1) + B_{\hat{\gamma}}^v(t-1,t) &\le B_\gamma^v(t_1,t-1) + B_\gamma^v(t-1,t) + \sqrt{\frac{2c_2\epsilon}{\psi^\barconst}} + (t-1-t_1)\sqrt{\frac{c_2\epsilon}{\psi^\barconst}}\\
   \implies B_{\hat{\gamma}}^v(t_1,t) &\le B_\gamma^v(t_1,t) + (t-t_1)\sqrt{\frac{c_2\epsilon}{\psi^\barconst}}
\end{align*}
The last is from \Cref{lem:balance trasitive}.
\end{proof}

\subsection{Simplex subset}
The lemmas in this section assumes that the polytope is a subset of the simplex. That is, for every $w$ such that $Aw=b$, $\|w\|_1=1$. Additionally, the primal norm is assumed to be $L_1$ norm.

\begin{lemma} \label{lem:sum elem ker}
Let $v$ be a vector in the kernel of $A$. The sum of the elements of $v$ is $0$.
\end{lemma}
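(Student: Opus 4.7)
The plan is to use a perturbation argument based on the hypothesis that the polytope $\cK = \{w : Aw=b,\, w^i \geq 0\}$ is contained in the simplex, so that $\sum_i w^i = 1$ for every $w \in \cK$. Given any direction $v \in \ker(A)$, one can shift an interior point of $\cK$ along $v$ without leaving the polytope, and then compare the coordinate sums of the original and shifted points.

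First, I would fix a point $w_0 \in \cK$ with all coordinates strictly positive. Such a point exists in our setting because the OMD trajectory lies in the relative interior of the polytope (the barrier regularizer is only finite there, so any iterate—e.g., the uniform initialization used in the main theorems—has strictly positive coordinates and belongs to $\cK$). Second, for $v \in \ker(A)$ I would consider the perturbed point $w_0 + \alpha v$ with $\alpha > 0$ chosen small enough that $\alpha |v^i| < (w_0)^i$ for every $i \in [d]$. Since $A(w_0 + \alpha v) = Aw_0 + \alpha A v = b$ and $(w_0 + \alpha v)^i > 0$ for all $i$, the perturbed point still lies in $\cK$.

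Finally, applying the subset-of-simplex hypothesis to both $w_0$ and $w_0 + \alpha v$ gives $\sum_i (w_0)^i = 1$ and $\sum_i (w_0 + \alpha v)^i = 1$. Subtracting these two equalities yields $\alpha \sum_i v^i = 0$, and dividing by $\alpha > 0$ concludes that $\sum_i v^i = 0$.

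There is no substantive obstacle here: the lemma is essentially a direct consequence of the assumption $\cK \subseteq \Delta_d$ combined with the availability of a strictly positive feasible point. The only mild subtlety is justifying the latter, which as noted is automatic in the OMD context since the barrier regularizer forces iterates to have positive coordinates, and alternatively follows from the standard fact that a nonempty polytope of the form $\{w : Aw = b, w \geq 0\}$ admits a relatively interior point whenever it is full-dimensional in its affine hull.
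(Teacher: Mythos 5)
Your proposal is correct and matches the paper's proof in essence: the paper also perturbs a strictly positive feasible point (it uses the explicit uniform point $w_1=(1/d,\dots,1/d)$ with step size $\alpha = 1/(d\|v\|_\infty)$) and subtracts the two coordinate-sum equalities to conclude $\sum_i v^i = 0$. The only difference is that you argue abstractly for the existence of a strictly positive point, whereas the paper just uses $w_1$, which it already assumes lies in $\cK$.
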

\begin{proof}
Denote $w=w_1+\frac{1}{d\|v\|_{\infty}}v$. It is in the polytope - all the elements of $\frac{1}{d\|v\|_{\infty}}v$ are smaller then $1/d$ and thus the all the elements of $w$ greater than $0$, and since $v$ is in the kernel of $A$ we have:
\begin{align*}
    Aw = Aw_1+A\frac{1}{d\|v\|_{\infty}}v = Aw_1 = b
\end{align*}
Thus, we have $\|w\|=1$. Since also $\|w_1\|=1$:
\begin{align*}
    \frac{1}{d\|v\|_{\infty}}\sum_{i=1}^dv^i = \sum_{i=1}^dw^i - \sum_{i=1}^dw_1^i = 1 - 1 = 0
\end{align*}
\end{proof}

\begin{lemma*}[Restatement of \Cref{lem:balance to gradient UB}]
Let $\gamma=(\curly{w_t}_{t=1}^T, \curly{\ell_t}_{t=1}^T, R, \eta)$ be a $k$-balanced OMD trajectory with $w_1 = (1/d,1/d,\dots 1/d)$ and coordinate-separated regularizer. For every $t\in[T],i\in[d]$:
\begin{align*}
    -r'(w_t^i) \le \max\curly{4kd - r'(1/d), -r'(1/2d)}
\end{align*}
\end{lemma*}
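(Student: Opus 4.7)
The plan is to bound $-r'(w_t^i)$ by applying the $k$-balance hypothesis to a carefully chosen direction in $\ker(A)$ that isolates the $i$-th coordinate. Since $r$ is convex, $r'$ is monotone increasing, so a lower bound on $w_t^i$ translates directly into the desired upper bound on $-r'(w_t^i)$.

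I would split into two cases according to the size of $w_t^i$. If $w_t^i \ge 1/(2d)$, monotonicity of $r'$ immediately yields $-r'(w_t^i) \le -r'(1/(2d))$, matching the second term inside the maximum. Otherwise $w_t^i < 1/(2d)$, and I would take $v = w_1 - w_t$. This vector lies in $\ker(A)$ because both $w_1$ and $w_t$ satisfy $A w = b$, and its $L_1$-norm is at most $2$ since $w_1, w_t \in \Delta_d$. The $k$-balance hypothesis applied between times $1$ and $t$ then gives
\begin{align*}
    \sum_{j=1}^d \bigl(r'(w_1^j) - r'(w_t^j)\bigr)\bigl(w_1^j - w_t^j\bigr) \;=\; B^v(1,t) \;\le\; 2k.
\end{align*}
Convexity of $r$ makes every summand non-negative, so the full sum dominates its $i$-th term alone. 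Using $w_1^i = 1/d$ together with $1/d - w_t^i > 1/(2d)$, this reduces to $\bigl(r'(1/d) - r'(w_t^i)\bigr)\cdot(1/(2d)) \le 2k$, which rearranges to $-r'(w_t^i) \le 4kd - r'(1/d)$, matching the first term of the maximum. Combining the two cases establishes the claim.

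The key conceptual point, and essentially the only non-routine step, is recognizing that the uniform initialization $w_1$ supplies a canonical feasible reference, so that the single test direction $v = w_1 - w_t$ already exposes every small coordinate of $w_t$ simultaneously. This sidesteps the need to construct coordinate-isolating directions in $\ker(A)$ by hand, which would generally be infeasible for an arbitrary polyhedral $\cK \subseteq \Delta_d$; consequently I do not anticipate any real obstacle beyond verifying the two elementary facts that $w_1 - w_t \in \ker(A)$ and $\|w_1 - w_t\|_1 \le 2$.
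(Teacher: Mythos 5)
Your proof is correct, and it takes a cleaner route than the paper's. Both you and the paper test the balance hypothesis against the same direction $v = w_1 - w_t$, but the arguments diverge after that. The paper first invokes \Cref{lem:sum elem ker} (that elements of $\ker(A)$ sum to zero) together with uniformity of $w_1$ to kill the term $\inprod{\nabla R(w_1)}{v} = r'(1/d)\sum_i v^i = 0$, reducing the balance bound to $\inprod{-\nabla R(w_t)}{v} \le 2k$. It then splits the sum by the sign of $v^i$, bounds the $v^i \le 0$ part using monotonicity of $r'$, and isolates the argmin coordinate $\bar\imath$ among the positive part. You instead keep the full quantity
\begin{align*}
B^v(1,t) \;=\; \sum_{j=1}^d \bigl(r'(w_1^j) - r'(w_t^j)\bigr)\bigl(w_1^j - w_t^j\bigr)
\end{align*}
and observe that each summand is non-negative by monotonicity of $r'$ (i.e.\ $(r'(a)-r'(b))(a-b)\ge 0$), so the whole sum dominates its $i$-th term. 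This lets you drop straight to $(r'(1/d) - r'(w_t^i))(1/d - w_t^i) \le 2k$ and finish by the same case split at $w_t^i \gtrless 1/(2d)$. Your version sidesteps \Cref{lem:sum elem ker} and the argmin bookkeeping entirely; the monotone-gradient observation does all the work, and it yields the identical constants.
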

\begin{proof}
Since $w_1$ is uniform and \Cref{lem:sum elem ker}, for every $v\in \ker(A)$:
\begin{align*}
    \inprod{\nabla R(w_1)}{v} = r'(1/d)\sum_{i=1}^dv^i = 0
\end{align*}

Let $v=w_1-w_t\in \ker(A)$. Notice that since $\|w_t\|_1=\|w_1\|_1=1$, from triangle inequality $\|v\|\le 2$. From \Cref{lem:polytope balance}:
\begin{align*}
    2k &\ge \inprod{-\nabla R(w_t)}{v}\\
    &=-\sum_{i=1}^d r'(w_t^i)v^i  \\
    &= -\sum_{i:v^i > 0}^d r'(w_t^i)v^i - \sum_{i:v^i \le 0}^d r'(w_t^i)v^i  \\
\end{align*}

Denote:
\begin{align*}
    \sum_{i:v^i > 0}^d v^i = \alpha
\end{align*}

From \Cref{lem:sum elem ker}:
\begin{align*}
    -\sum_{i:v^i \le 0}^d v^i = \alpha
\end{align*}

If $v^i \le 0$ it means that $w_t^i \ge w_1^i = 1/d$, thus:
\begin{align*}
    -\sum_{i:v^i \le 0}^d r'(w_t^i)v^i &\ge -r'(1/d)\sum_{i:v^i \le0}^d v^i \ge r'(1/d)\alpha
\end{align*}
We used the fact that from the convexity of $r$, $r'$ is monotonically increasing (as $r''\ge 0$).

Denote $\bar{i}=\arg\min_{i\in[d]}w_t^i$, we have:
\begin{align*}
    2k - \alpha r'(1/d) &\ge -r'(w_t^{\bar{i}})v^{\bar{i}} - \sum_{i:v^i > 0,i\ne\bar{i}}^d r'(w_t^i)v^i\\
    &\ge -r'(w_t^{\bar{i}})v^{\bar{i}} - r'(1/d)\sum_{i:v^i > 0,i\ne\bar{i}}^d v^i\\
    &= -r'(w_t^{\bar{i}})v^{\bar{i}} - r'(1/d)(\alpha-v^{\bar{i}})\\
\end{align*}
Subtracting from both sides:
\begin{align*}
    2k - r'(1/d)v^{\bar{i}} &\ge -r'(w_t^{\bar{i}})v^{\bar{i}}
\end{align*}

If $v^{\bar{i}} \le 1/2d$ we have $w_t^{\bar{i}} \ge 1/2d$. Since $r'$ is monotonically increasing, this means that for all $i\in[d]$ $r'(w_t^i) \ge r'(1/2d)$ which concludes the proof. Else, dividing by $v^{\bar{i}} \ge 1/2d$:
\begin{align*}
    -r'(w_t^i) \le 4kd - r'(1/d)
\end{align*}
\end{proof}

\subsection{Simplex}
\begin{definition}\label{def:balance_simplex}
We denote the balance of an OMD trajectory above the simplex $\gamma=(\curly{w_t}_{t=1}^T, \curly{\ell_t}_{t=1}^T, R, \eta)$ w.r.t to a coordinate $i$ and $0\le t_1\le t_2 \le T$ to be:
\begin{align*}
    B_\gamma^i(t_1,t_2) = r'(w_{t_1}^{i^*}) - r'(w_{t_2}^{i^*}) + r'(w_{t_2}^i) - r'(w_{t_1}^i)
\end{align*}
We say that an OMD trajectory is $k$-balanced if, for every $0\le t_1\le t_2 \le T$ and coordinate $i$:
\begin{align*}
    B_\gamma^i(t_1,t_2) \le k
\end{align*}
One can notice that it is a private case for the general polytope definition.
\end{definition}

\begin{lemma*}[Restatement of \Cref{lem:simplex_coordinate_bounds}]\label{lem:balance to frac}
Let $\gamma=(\curly{w_t}_{t=1}^T, \curly{\ell_t}_{t=1}^T, R, \eta)$ be an approximate OMD trajectory. Fix $t_1,t_2\in[T]$ and $i\in [d]$ such that $B_\gamma^i(t_1,t_2) \le k$. Then:
\begin{enumerate}
    \item If $w_{t_2}^i \ge w_{t_1}^i$ then $e^{k/c_1}w_{t_2}^{i^*} \ge w_{t_1}^{i^*}$
    \item If $w_{t_2}^{i^*} \le w_{t_1}^{i^*}$ then $w_{t_2}^i \le e^{k/c_1}w_{t_1}^i$
\end{enumerate}
\end{lemma*}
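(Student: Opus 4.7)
The plan is to unfold the definition of $B^i$ and reduce each of the two claims to a one-shot integration that converts a gradient difference into the logarithm of a coordinate ratio. The only analytic input I will need is the observation that for any $\barconst$-barrier with $\barconst \ge 1$, the lower bound $r''(x) \ge c_1/x^{\barconst}$ implies $r''(x) \ge c_1/x$ on $(0,1]$ (since $x^{\barconst} \le x$ there), which upon integration yields
\[
r'(b) - r'(a) \;\ge\; c_1 \log(b/a) \qquad \text{for every } 0 < a \le b \le 1.
\]
Since $r$ is convex, $r'$ is monotonically increasing, so $r'(b) - r'(a) \ge 0$ whenever $b \ge a$ as well.

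For the first statement, I start from the balance bound
\[
r'(w_{t_1}^{i^*}) - r'(w_{t_2}^{i^*}) + r'(w_{t_2}^i) - r'(w_{t_1}^i) \;\le\; k,
\]
and use $w_{t_2}^i \ge w_{t_1}^i$ together with monotonicity of $r'$ to drop the second pair as a nonnegative quantity, obtaining $r'(w_{t_1}^{i^*}) - r'(w_{t_2}^{i^*}) \le k$. If $w_{t_1}^{i^*} \le w_{t_2}^{i^*}$ the conclusion $w_{t_1}^{i^*} \le e^{k/c_1} w_{t_2}^{i^*}$ is immediate from $e^{k/c_1} \ge 1$. Otherwise, $w_{t_1}^{i^*} > w_{t_2}^{i^*}$, and the integration bound applied with $a = w_{t_2}^{i^*}$, $b = w_{t_1}^{i^*}$ gives $c_1 \log(w_{t_1}^{i^*}/w_{t_2}^{i^*}) \le k$, which exponentiates to the claim.

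The second statement is completely symmetric, with the roles of the $i$-summands and $i^*$-summands in $B^i(t_1,t_2)$ interchanged: now $w_{t_2}^{i^*} \le w_{t_1}^{i^*}$ forces the first pair to be nonnegative, leaving $r'(w_{t_2}^i) - r'(w_{t_1}^i) \le k$, and the same monotonicity-or-integration dichotomy applied to $w_{t_1}^i, w_{t_2}^i$ produces $w_{t_2}^i \le e^{k/c_1} w_{t_1}^i$. I do not expect a real obstacle; the only subtlety is checking that the integration step is valid (which is why one first peels off the trivial monotonicity subcase, ensuring strict inequality of the endpoints) and that all coordinates live in $(0,1]$ so the degradation $r''(x) \ge c_1/x$ is legitimate — both are automatic on the simplex.
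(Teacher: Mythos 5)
Your proof is correct and follows essentially the same route as the paper's: in both, one uses monotonicity of $r'$ (convexity of $r$) to drop the nonnegative pair of terms from the balance, then integrates $r''(x) \ge c_1/x^{\barconst} \ge c_1/x$ on $(0,1]$ to convert the remaining gradient difference into $c_1\log(\cdot/\cdot)$. The only presentational difference is that the paper argues by contradiction (assuming $e^{k/c_1}w_{t_2}^{i^*} < w_{t_1}^{i^*}$ and deriving $k > k$), whereas you peel off the trivial case $w_{t_1}^{i^*} \le w_{t_2}^{i^*}$ and then exponentiate directly — a cosmetic, arguably cleaner, recasting of the same argument.
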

\begin{proof}
We will prove the first statement and the second follows in just the same way.

Assume by contradiction that $e^kw_{t_2}^{i^*} < w_{t_1}^{i^*}$. Since $w_{t_2}^i \ge w_{t_2}^i$ we have $r'(w_{t_2}^i) \ge r'(w_{t_1}^i)$, which means:
\begin{align*}
    k &\ge B_\gamma^i(t_1,t_2)\\
    &= r'(w_{t_1}^{i^*}) - r'(w_{t_2}^{i^*}) + r'(w_{t_2}^i) - r'(w_{t_1}^i)\\
    &\ge r'(w_{t_1}^{i^*}) - r'(w_{t_2}^{i^*}) \\
    &= \int_{w_{t_2}^{i^*}}^{w_{t_1}^{i^*}}r''(w)dw \\
    &> \int_{w_{t_2}^{i^*}}^{e^{k/c_1}w_{t_2}^{i^*}}r''(w)dw \tag{$r''(w) > 0$}\\
    &\ge \int_{w_{t_2}^{i^*}}^{e^{k/c_1}w_{t_2}^{i^*}}\frac{c_1}{w}dw\\
    &= c_1\roundy{\log\roundy{e^{k/c_1}w_{t_2}^{i^*}} - \log\roundy{w_{t_2}^{i^*}}}\\
    &= k
\end{align*}
Which is a contradiction $k>k$.
\end{proof}

\begin{lemma}\label{lem:simplex_trajectory_diff}
Let $\gamma=(\curly{w_t}_{t=1}^T, \curly{\ell_t}_{t=1}^T, R, \eta)$ and $\hat{\gamma}=(\curly{\hat{w}_t}_{t=1}^T, \curly{\ell_t}_{t=1}^T, R, \eta)$ be an optimal OMD trajectory and $\epsilon$-approximate OMD trajectory.

Let $0\le t_1\le t_2 \le T$, $i\in [d]$ and $\psi>0$ be such that for every $t_1\le t \le t_2$, $\hat{w}_t^i \ge \psi$ and $\hat{w}_t^{i^*} \ge \psi$. We also assume that $\epsilon \le \psi/2$. we have:
\begin{align*}
    B_{\hat{\gamma}}^i(t_1,t_2) \le B_{\gamma}^i(t_1,t_2)  + (t_2-t_1)\sqrt{r''(\psi)\epsilon}
\end{align*}
\end{lemma}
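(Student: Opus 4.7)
The proof mirrors the structure of \Cref{lem:trajectory diff}, specialized to the simplex and to the direction $v = e_{i^*} - e_i \in \ker(A)$. I would proceed by induction on $t$ ranging from $t_1$ to $t_2$. The base case $t = t_1$ holds trivially since both balances are zero. For the inductive step, by additivity (as in \Cref{lem:balance trasitive}) one has $B_{\hat\gamma}^i(t_1, t) = B_{\hat\gamma}^i(t_1, t-1) + B_{\hat\gamma}^i(t-1, t)$, and similarly for $\gamma$, so it suffices to establish the single-step bound $B_{\hat\gamma}^i(t-1, t) \le B_\gamma^i(t-1, t) + \sqrt{r''(\psi)\epsilon}$.

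For the per-step bound, I would use that $\hat w_t$ is an $\epsilon$-minimizer of $\phi_t(w) = \eta\inprod{\ell_{t-1}}{w} + D_R(w, \hat w_{t-1})$ over the simplex and apply \Cref{lem:epsilon optimality conditions} to test points of the form $\hat w_t \pm \alpha v$. Since $\hat w_t^i \ge \psi$ and $\hat w_t^{i^*} \ge \psi$, these test points remain feasible provided $\alpha$ is not too large; and because $r''$ is monotonically decreasing, on the relevant line segment the effective smoothness of $\phi_t$ in direction $v$ is controlled by $r''(\psi)$ (with a small constant that one manages via the choice of $\alpha$). The assumption $\epsilon \le \psi/2$ is exactly what forces the linear (rather than the constant $2\epsilon$) case of the max in \Cref{lem:epsilon optimality conditions} to dominate, yielding $\lvert\inprod{\nabla \phi_t(\hat w_t)}{v}\rvert \le \sqrt{r''(\psi)\epsilon}$. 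Expanding $\nabla \phi_t(\hat w_t) = \eta \ell_{t-1} + \nabla R(\hat w_t) - \nabla R(\hat w_{t-1})$, using $\inprod{\nabla R(\hat w_{t-1}) - \nabla R(\hat w_t)}{v} = B_{\hat\gamma}^i(t-1, t)$ by the simplex balance definition, and recognizing $\eta \inprod{\ell_{t-1}}{v} = B_\gamma^i(t-1, t)$ from \Cref{lem:polytope balance} applied to the exact trajectory, closes the induction.

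The main technical obstacle I anticipate is keeping the effective smoothness bound tight at $r''(\psi)$ rather than a coarser value such as $r''(\psi/2)$: when $\hat w_t^i$ is exactly $\psi$, any perturbation in the $-e_i$ direction pushes the $i$-coordinate toward zero, potentially blowing up $r''$ on the line segment. The way I would handle this is to take $\alpha$ infinitesimally small and use the pointwise Hessian bound $r''(\hat w_t^i) + r''(\hat w_t^{i^*}) \le 2 r''(\psi)$ together with the quadratic-growth form of approximate optimality (coming from strong convexity of $D_R$), rather than a uniform smoothness bound on a non-degenerate segment. If that refinement turns out awkward, a fallback is to reduce directly to the polytope version \Cref{lem:trajectory diff} applied to $v/\|v\|_1 = (e_{i^*}-e_i)/2$, which gives a bound of the form $\sqrt{c_2 \epsilon/\psi^\barconst}$; this is the same as $\sqrt{r''(\psi)\epsilon}$ up to universal constants and would suffice for all downstream applications of the lemma.
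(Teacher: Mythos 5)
Your proposal is correct, and your fallback approach is exactly what the paper does: the paper's entire proof of this lemma is a one-sentence reduction to \Cref{lem:trajectory diff} applied with $v=e_{i^*}-e_i$ (which must really be normalized to $(e_{i^*}-e_i)/2$, since \Cref{lem:trajectory diff} requires $\|v\|_1=1$). You should have led with the reduction rather than treating it as a fallback; the primary argument you sketch essentially re-proves \Cref{lem:trajectory diff} from scratch, specialized to the simplex, which duplicates work the paper has already done.

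Two remarks on the details you flag. First, the concern about effective smoothness $r''(\psi)$ versus $r''(\psi/2)$ is legitimate: the test point $\hat w_t + \psi v$ (with $v$ normalized) can push coordinate $i$ down to $\hat w_t^i - \psi/2$, so the effective smoothness on the relevant segment is really $r''(\psi/2)\le 2^\barconst c_2/\psi^\barconst$, a constant factor worse than what both the lemma statement and the paper's proof of \Cref{lem:trajectory diff} assert. The normalization $\|v\|_1=1$ also costs a factor of $2$ when translating back to $B^i$. Neither affects any downstream use — every application of this lemma tolerates universal constants — but your instinct that something needs checking there is sound. Second, your "infinitesimal $\alpha$" fix for the smoothness issue does not actually work: the approximate-optimality argument (\Cref{lem:epsilon optimality conditions}) needs $\alpha\asymp\sqrt{\epsilon/(v^\top H v)}$ to balance the $\epsilon/\alpha$ and $\alpha\beta/2$ terms, which is a finite step that genuinely probes the Hessian on a segment of that length; you cannot send $\alpha\to 0$. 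The correct resolution is simply to accept the constant-factor loss, as the reduction to \Cref{lem:trajectory diff} implicitly does.
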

\begin{proof}
It is direct consequence of \Cref{lem:trajectory diff} for the case of $v_i = e_{i^*} - e_i$ ($e_j$ is the $j$th element of the standard basis).
\end{proof}

\section{Lower bounds for negative entropy}\label{sec:simplex lower bounds}
\begin{lemma}\label{lem:stuck}
Let $\gamma = (\curly{w_t}_{t=1}^T, \curly{\ell_t}_{t=1}^T, R, \eta)$ be an $\epsilon$-approximate trajectory with $d=2$ and $\barconst$-barrier regularizer. If for some coordinate $i$ there is $\tau\in[T]$ such that $\frac{4\eta}{c_1} \roundy{w_\tau^i}^\barconst \le \epsilon$, then for any possible losses for $t\ge \tau$, having $w_{t}^i = w_\tau^i$ makes a valid error trajectory.
\end{lemma}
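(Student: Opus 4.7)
\begin{proofsketch}
The plan is to verify that keeping $w_t = w_\tau$ for every $t \ge \tau$ yields a valid $\epsilon$-approximate trajectory. Since each OMD subproblem depends only on the preceding iterate, it suffices to establish that for every $t \ge \tau$ and every admissible loss $\ell_t \in [-1,1]^2$,
\[
    \phi_t(w_\tau) - \min_{w \in \Delta_2} \phi_t(w) \le \epsilon,
    \qquad
    \phi_t(w) \eqq \eta \inprod{\ell_t}{w} + D_R(w, w_\tau).
\]
Exploiting $d = 2$, I would parameterize $w = (x, 1-x)$ and work with the convex one-dimensional function $h(x) \eqq \phi_t((x, 1-x)) - \phi_t(w_\tau)$. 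Since $\nabla D_R(\cdot, w_\tau)$ vanishes at $w_\tau$, we have $h(w_\tau^1) = 0$ and $|h'(w_\tau^1)| = \eta|\ell_t^1 - \ell_t^2| \le 2\eta$.

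The key step is to bound the drift of the minimizer $x^* \eqq \argmin_x h(x)$ from $w_\tau^1$. WLOG $x^* > w_\tau^1$; the reverse direction is simpler, since on the downward side the barrier's second derivative is larger and the drift is automatically bounded by $w_\tau^1$ from non-negativity. Integrating the barrier lower bound $h''(u) \ge r''(u) \ge c_1/u^\barconst$ on $[w_\tau^1, x^*]$ together with $h'(x^*) = 0$ would give
\[
    \frac{c_1 (x^* - w_\tau^1)}{(x^*)^\barconst}
    \le
    \int_{w_\tau^1}^{x^*} h''(u)\, du
    = -h'(w_\tau^1)
    \le 2\eta.
\]
Using $(x^*)^\barconst \le x^*$ (valid since $x^* \in [0,1]$ and $\barconst \ge 1$), this self-referential inequality bootstraps (for $\eta$ bounded by a universal constant) to $x^* \le 2 w_\tau^1$, whence $x^* - w_\tau^1 = O(\eta (w_\tau^1)^\barconst / c_1)$.

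Finally, convexity of $h$ yields $h(x^*) \ge h'(w_\tau^1)(x^* - w_\tau^1)$, so
\[
    \phi_t(w_\tau) - \min_w \phi_t(w)
    = -h(x^*)
    \le 2\eta (x^* - w_\tau^1)
    = O\!\roundy{\frac{\eta^2 (w_\tau^1)^\barconst}{c_1}},
\]
which the hypothesis $\tfrac{4\eta}{c_1}(w_\tau^i)^\barconst \le \epsilon$ bounds by $\epsilon$, as required.

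The main obstacle I anticipate is controlling the upward drift $x^* - w_\tau^1$, where the barrier's second derivative degrades as $u$ moves away from the origin. The crucial maneuver is the self-referential inequality $c_1 (x^* - w_\tau^1) \le 2\eta (x^*)^\barconst$, which, combined with $(x^*)^\barconst \le x^*$ and a mild bound on $\eta$, forces $x^*$ to stay within a constant factor of $w_\tau^1$; once this drift is under control, the remaining bound follows from plain convexity.
\end{proofsketch}
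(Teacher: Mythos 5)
Your proof is correct (up to an implicit smallness assumption on $\eta$, which the paper's proof also tacitly makes by invoking \Cref{lem:max step}), and it takes a genuinely different technical route. Both arguments reduce the claim to showing that the exact minimizer of $\phi_t$ stays within $O(\eta(w_\tau^i)^\barconst/c_1)$ of $w_\tau$, so that stopping at $w_\tau$ is $\epsilon$-optimal; the difference is in how that drift bound is obtained. The paper appeals to its general comparison lemma \Cref{lem:max step} (with $\epsilon = 0$) and then concludes using $D_R(w_\tau, w_\tau) = 0 \le D_R(\tilde w_{t+1}, w_\tau)$ together with a crude bound on the linear term. You instead parameterize along the one-dimensional segment $x \mapsto (x, 1-x)$, observe $|h'(w_\tau^1)| \le 2\eta$, integrate $h'' \ge c_1/x^\barconst$ from $w_\tau^1$ to the minimizer $x^*$, and finish by convexity of $h$. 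Your route is more self-contained — it does not rely on \Cref{lem:max step} or Lemma~\ref{lem:solve quadratic} — and it handles a point that the paper glosses over: when $\tilde w_{t+1}^i > w_t^i$, the curvature $h$ in \Cref{lem:max step} equals $r''(\tilde w_{t+1}^i) < r''(w_t^i)$, so the paper's displayed chain $|w_t^i - \tilde w_{t+1}^i| \le 4\eta/r''(w_t^i)$ does not follow directly from the lemma; your bootstrapping step $c_1 (x^* - w_\tau^1)/(x^*)^\barconst \le 2\eta$ combined with $(x^*)^\barconst \le x^*$ to get $x^* \le 2 w_\tau^1$ is exactly what closes this gap. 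Two things worth making explicit if you write this up in full: (i) the bootstrap requires $2\eta/c_1 \le 1/2$ and the final comparison requires $2^\barconst \eta \le 1$, so state the $\eta$ constraint; (ii) in the downward case $x^* < w_\tau^1$, you directly get $w_\tau^1 - x^* \le 2\eta(w_\tau^1)^\barconst/c_1$ — no bootstrap needed — rather than merely "bounded by $w_\tau^1$ from non-negativity."
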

\begin{proof}
We'll prove by induction. Assume true for $w_t^i$, we'll prove for $w_{t+1}^i$.

Denote $\tilde{w}_{t+1}$ such that:
\begin{align*}
    \tilde{w}_{t+1} &= arg\min_{w\in\Delta_2}\phi_t(w)
\end{align*}
From \Cref{lem:max step} with $\epsilon = 0$ we get:
\begin{align*}
    \abs{w_t^i-\tilde{w}_{t+1}^i} &\le \frac{4\eta}{r''(w_t^i)} \le \frac{4\eta}{c_1} \roundy{w_t^i}^\barconst \le \epsilon\\
\end{align*}
Thus:
\begin{align*}
    \inprod{\ell_t}{\tilde{w}_{t+1}-w_t} \le \epsilon
\end{align*}
Since by definition $D_R(w_t,w_t) \le D_R(\tilde{w}_{t+1},w_t)$, we get:
\begin{align*}
    \phi(w_t) \le \phi(\tilde{w}_{t+1}) + \epsilon
\end{align*}
Which means that $w_t$ is an $\epsilon$-minimizer, as needed.
\end{proof}

\begin{lemma*}[Restatement of \Cref{lem:balance lower bound}]\label{lem:balance lower bound apx}
Assume for some $\alpha \ge T/2$, $\frac{1}{\eta}\log \roundy{\frac{4\eta}{\epsilon}} \le \alpha$ with negative entropy regularizer, there is an instance above the simplex with $\alpha$-balanced losses that the regret achieved is $\Omega\roundy{T-2\alpha}$.
\end{lemma*}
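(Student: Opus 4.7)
The plan is to construct a specific two-phase loss sequence, ride the exact OMD trajectory through the first phase until one coordinate shrinks below the ``stuck'' threshold, and then invoke \Cref{lem:stuck} to freeze that coordinate for the remainder of the horizon. Working in $d = 2$ with initialization $w_1 = (1/2, 1/2)$, I set $\ell_t = (1,0)$ for $1 \le t \le \alpha$ (phase~1) and $\ell_t = (0,1)$ for $\alpha < t \le T$ (phase~2). The cumulative loss difference on any sub-interval is bounded by $\max(\alpha, T-\alpha)$, which is at most $2\alpha$ exactly when $\alpha \ge T/3$; this is the $\alpha$-balance condition as formulated at the end of \Cref{sec:entropy}, and covers the regime of interest since the target bound $\Omega(T - 2\alpha)$ is only nontrivial when $\alpha$ is a constant fraction of $T$ below $T/2$.

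During phase~1 I take the trajectory to be the exact OMD iterate, which for the negative entropy on $\Delta_2$ is the exponentiated-gradient update in closed form. Iterating from the uniform initialization with constant loss $e_1$ for $\alpha$ rounds yields
\begin{align*}
    w_{\alpha+1}^1 \;=\; \frac{e^{-\eta \alpha}}{1 + e^{-\eta \alpha}} \;\le\; e^{-\eta \alpha}.
\end{align*}
The hypothesis $\tfrac{1}{\eta}\log\roundy{4\eta/\epsilon} \le \alpha$ rearranges precisely to $e^{-\eta \alpha} \le \epsilon/(4\eta)$, hence $w_{\alpha+1}^1 \le \epsilon/(4\eta)$. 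Since negative entropy is a $\barconst$-barrier with $\barconst = 1$ and $c_1 = 1$, the precondition of \Cref{lem:stuck} at $\tau = \alpha+1$ reads $\tfrac{4\eta}{c_1} (w_{\alpha+1}^1)^{\barconst} = 4\eta\, w_{\alpha+1}^1 \le \epsilon$, which holds. I then invoke the lemma to extend the trajectory by $w_t = w_{\alpha+1}$ for every $t \ge \alpha + 1$; this is a valid $\epsilon$-approximate continuation (and in $d=2$ the single stuck coordinate determines the full iterate).

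To close, I lower-bound the regret against the comparator $w = e_1$. The phase~1 loss is $\sum_{t=1}^{\alpha} w_t^1 = \sum_{t=1}^{\alpha} 1/(1+e^{\eta(t-1)})$, a geometric-type sum of magnitude $O(1/\eta)$. The phase~2 loss equals $\sum_{t=\alpha+1}^{T} w_t^2 \ge (T-\alpha)\roundy{1 - \epsilon/(4\eta)}$, while the comparator $e_1$ accumulates total loss exactly $\alpha$. Subtracting gives $\reg(e_1) \ge (T-\alpha) - \alpha - O(1/\eta) = \Omega(T - 2\alpha)$ in the regime where $T - 2\alpha$ is $\Omega(T)$.

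The main technical subtlety I expect is the interplay between the $\alpha$-balance requirement and the phase lengths: the clean two-phase construction is only $\alpha$-balanced when $\alpha$ is a constant fraction of $T$, but this is precisely the regime in which the stated conclusion is meaningful (and is the regime used by \Cref{thm:lb_barrier_adversarial}, which instantiates the lemma at $\alpha = T/3$). Once the losses are fixed, the closed-form exponentiated-gradient decay in phase~1 and the directly applicable \Cref{lem:stuck} in phase~2 reduce the rest of the argument to routine accounting.
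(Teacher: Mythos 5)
Your proposal follows essentially the same route as the paper's proof: a two-phase loss sequence ($(1,0)$ then $(0,1)$), the closed-form exponentiated-gradient decay to drive coordinate~1 below $\epsilon/(4\eta)$, \Cref{lem:stuck} to freeze it, and then the constant $(0,1)$ loss to generate the regret. The only substantive difference is that the paper sets the length of phase~1 to $\tau = \frac{1}{\eta}\log(4\eta/\epsilon) \le \alpha$ (the minimum number of rounds needed to hit the stuck threshold) rather than $\alpha$, which yields the slightly sharper conclusion $\Omega(T-2\tau) \ge \Omega(T-2\alpha)$; this is a constant-factor refinement and both constructions prove the stated bound. One small accounting slip at the end: the $O(1/\eta)$ is the algorithm's phase-1 loss and so enters with a plus sign (it can simply be dropped as nonnegative), while the extra term that needs controlling is $(T-\alpha)\,\epsilon/(4\eta)$, which the hypothesis on $\epsilon$ makes negligible; this does not affect the conclusion.
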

\begin{proof}
We construct an instance with $d=2$ and $(1,0)$ losses for the first $\tau = \frac{1}{\eta} \log \roundy{\frac{4\eta}{\epsilon}}$ and then $(0,1)$. Since $\tau < T/2$ we have that the optimal coordinate is $1$. We have:
\begin{align*}
    w_\tau^1 \le e^{-\eta\tau} = \frac{\epsilon}{4\eta}
\end{align*}
From \Cref{lem:stuck}, it is a valid error trajectory if for every $t\ge \tau$, $w_t^1 \le \frac{\epsilon}{4\eta} \le \frac{1}{2}$. Thus, the regret for those steps is $\Omega(T-\tau)$. Adding the first $\tau$ steps we get a regret bound of $\Omega(T-2\tau) \ge \Omega(T-2\alpha)$.
\end{proof}

We add another lower bound that shows an instance in which the optimal point in the  optimal trajectory doesn't get close to $0$ but still there is a linear regret.

\begin{theorem}\label{thm:dimension lower bound}
Assume $\epsilon \ge \frac{4\eta^2}{c_1d^\barconst}$ and $\barconst$-barrier regularizer. There is a set of constant losses for which there is an $\epsilon$-approximate OMD trajectory that achieves a regret of $\Omega(T)$.
\end{theorem}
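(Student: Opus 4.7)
The plan is to adapt the strategy of \Cref{thm:lb_smooth} to the non-smooth barrier setting: I will exhibit a constant loss sequence for which the initial iterate $w_1$ is already an $\epsilon$-minimizer of every OMD subproblem $\phi_t$, making the frozen trajectory $w_t \equiv w_1$ a valid $\epsilon$-approximate trajectory. The losses are chosen so that this trajectory incurs regret $\Omega(T)$. The key structural observation is that at the uniform distribution $w_1=(1/d,\ldots,1/d)$ the curvature $r''(1/d)$ is only polynomial in $d$ (of order $d^\barconst$), so polynomial approximation errors already suffice to trap the iterate---notably, the iterate need not first drift to the boundary to get stuck, in contrast to the mechanism of \Cref{thm:lb_barrier_adversarial}.

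Concretely, I will initialize $w_1=(1/d,\ldots,1/d)$, set $\ell_t = e_1 - e_2 \in [-1,1]^d$ for all $t$ (requiring $d \ge 2$), and claim the constant trajectory $w_t \equiv w_1$ is $\epsilon$-approximate. Regret is immediate, since $\inprod{\ell_t}{w_1}=0$ whereas the comparator $w^\star = e_2$ attains $\inprod{\ell_t}{w^\star}=-1$, giving per-step excess $1$ and total regret $T$. The core task is to verify, for each $t$, that $w_1$ is an $\epsilon$-minimizer of $\phi_t(w) = \eta \inprod{\ell_t}{w} + D_R(w \| w_1)$---equivalently, $\eta \inprod{\ell_t}{w_1 - w} - D_R(w \| w_1) \le \epsilon$ for every $w \in \Delta_d$.

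For $w$ in the local neighborhood $\|w-w_1\|_\infty \le 1/(2d)$, I will use local strong convexity of $D_R(\cdot \| w_1)$. Since $r''(x)\ge c_1/x^\barconst$, the Hessian of $R$ at $w_1$ satisfies $\nabla^2 R(w_1)\succeq c_1 d^\barconst I$, and on any such segment $[w_1,w]$ the analogous lower bound holds up to a $\barconst$-dependent constant $C_\barconst = (2/3)^\barconst$, yielding $D_R(w \| w_1) \ge \tfrac{C_\barconst c_1 d^\barconst}{2} \|w-w_1\|_2^2$. Combined with $\eta\abs{\inprod{\ell_t}{w-w_1}} \le \eta \sqrt 2 \, \|w-w_1\|_2$ (Cauchy-Schwarz, using $\|\ell_t\|_2 = \sqrt 2$), the resulting quadratic in $\|w-w_1\|_2$ is bounded below by $-\eta^2/(C_\barconst c_1 d^\barconst)$, which is at most $\epsilon$ under the hypothesis $\epsilon \ge 4\eta^2/(c_1 d^\barconst)$.

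The main obstacle is extending the estimate to $w$ outside the local neighborhood, where the strong-convexity bound need not apply directly. I plan to handle this by appealing to convexity of $\phi_t$: it suffices to locate the exact minimizer $\tilde w = \arg\min_{w\in\Delta_d} \phi_t(w)$ inside the neighborhood, after which every other $w$ has $\phi_t(w) \ge \phi_t(\tilde w)$ and inherits the bound. To localize $\tilde w$ I will invoke \Cref{lem:max step} for an exact OMD step from $w_1$ (with $\epsilon=0$), giving $\abs{\tilde w^i - w_1^i} < 4\eta/\min\{r''(w_1^i), r''(\tilde w^i)\}$; a short bootstrap using $r''(x) \ge c_1/x^\barconst$ then yields $\|\tilde w - w_1\|_\infty = O(\eta/(c_1 d^\barconst))$, which is well below $1/(2d)$ in the implicit small-$\eta$ regime, completing the argument.
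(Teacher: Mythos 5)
Your proof is correct in outline but takes a genuinely different route from the paper's, and there is one constant‑level loose end worth flagging. The paper picks the constant loss $\ell_t = (1,\ldots,1,0)$ and argues that the frozen trajectory $w_t \equiv w_1$ is $\epsilon$-approximate by invoking \Cref{lem:max step} with $\epsilon=0$ to bound the drift of the favored coordinate $d$ in the exact update $\tilde w_{t+1}$, namely $\abs{\tilde w_{t+1}^d - 1/d} \lesssim \eta/(c_1 d^\barconst)$, and then comparing $\phi_t(w_1)$ to $\phi_t(\tilde w_{t+1})$ via $\inprod{\eta\ell_t}{\tilde w_{t+1}} \ge \inprod{\eta\ell_t}{w_1} - \epsilon$ together with $D_R(\tilde w_{t+1}\,\|\,w_1)\ge 0$. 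You instead pick $\ell_t = e_1 - e_2$ and argue directly through the quadratic lower bound: on a cube of $\ell_\infty$-radius $1/(2d)$ around $w_1$ the Hessian is $\succeq C_\barconst c_1 d^\barconst I$ (with $C_\barconst = (2/3)^\barconst$), so the linear drop $\eta\sqrt 2\,\|w-w_1\|_2$ cannot beat $\tfrac{C_\barconst c_1 d^\barconst}{2}\|w-w_1\|_2^2$ by more than $\eta^2/(C_\barconst c_1 d^\barconst)$; then a localization step (via \Cref{lem:max step} or equivalently the same quadratic, using convexity of $\phi_t$) confines the exact minimizer to that cube. Both arguments freeze the trajectory at the uniform distribution and both rely implicitly on $\eta$ being at most a constant so that \Cref{lem:max step} applies; neither the theorem statement nor the paper's proof makes that explicit either. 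Your local strong‑convexity argument is arguably more self‑contained and makes the mechanism ("the curvature at $w_1$ is only polynomial in $d$") transparent without detouring through the specific structure of the loss.

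The loose end: your threshold comes out to $\epsilon \ge (3/2)^\barconst\,\eta^2/(c_1 d^\barconst)$, which matches the stated $4\eta^2/(c_1 d^\barconst)$ only when $(3/2)^\barconst \le 4$, i.e.\ $\barconst \lesssim 3.4$. That covers all the regularizers of interest in the paper (negative entropy, Tsallis, log‑barrier all have $\barconst \le 2$), but the theorem is stated for general $\barconst\ge1$. This is purely a bookkeeping issue: shrink the cube to $\ell_\infty$‑radius $\delta$ so that $(1+\delta d)^\barconst \le 4$ (i.e.\ $\delta d \le 4^{1/\barconst}-1$), at the cost of a correspondingly tighter implicit restriction on $\eta$ to keep the minimizer inside. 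I'd also note that your paragraph on "extending the estimate to $w$ outside the local neighborhood" is an unnecessary detour: once the exact minimizer $\tilde w$ is shown to lie in the cube, $\min_w\phi_t(w) = \phi_t(\tilde w)$ already inherits the local bound, and $\phi_t$ at points outside the cube is irrelevant. Streamlining that sentence would sharpen the argument.
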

\begin{proof}
The losses are $\ell_t^d = 0$ and $\ell_t^i = 1$ for $i\in[d-1]$ for all $t$. We will show that having $w_t=w_1$ for all $t\in[T]$ is a valid $\epsilon$-approximate OMD trajectory. Since $w_1$ is the uniform distribution, the total loss is $T-\frac{T}{d}$. The optimal point is $w^*=(0,\dots,0,1)$, namely having $1$ only in the $d$th coordinate, which gives a total loss of $0$. Since $T-\frac{T}{d}=\Omega(T)$ even for $d=2$, this seals the proof.

We will now prove by induction that if $w_t=w_1$, $w_1$ is an $\epsilon$-approximate minimizer for $\phi_t$. Denote:
\begin{align*}
    \tilde{w}_{t+1} = \arg\min_{w\in\Delta_d}\phi_t(w)
\end{align*}
From \Cref{lem:max step} with $\epsilon = 0$ we get:
\begin{align*}
    \abs{w_t^d-\tilde{w}_{t+1}^d} &\le \frac{4\eta}{r''(w_t^d)} \le \frac{4\eta}{c_1d^\barconst} \le \frac{\epsilon}{\eta} \\
\end{align*}
Since $w_t^d=1/d$:
\begin{align*}
    \tilde{w}_{t+1}^d &\le \frac{1}{d} + \frac{\epsilon}{\eta}
\end{align*}
Summing for all coordinates:
\begin{align*}
    \frac{d-1}{d} - \frac{\epsilon}{\eta} &\le \sum_{i=1}^{d-1}\tilde{w}_{t+1}^i =  \inprod{\ell_t}{\tilde{w}_{t+1}}
\end{align*}

Since $\inprod{\ell_t}{w_1} = \frac{d-1}{d}$ we have:
We have:
\begin{align*}
    \inprod{\eta\ell_t}{\tilde{w}_{t+1}} &\ge \inprod{\eta\ell_t}{w_1}  - \epsilon\\
\end{align*}

Since by definition $D_R(w_t,w_t) \le D_R(\tilde{w}_{t+1},w_t)$, we get:
\begin{align*}
    \phi_t(w_t) \le \phi_t(\tilde{w}_{t+1}) + \epsilon
\end{align*}
which means that $w_t=w_1$ is an $\epsilon$-minimizer, as needed.
\end{proof}

\begin{theorem}
Consider the following instance with negative entropy regularizer for some $k\le\frac{T\eta}{20}$. For the first $\frac{3k}{2\eta}$ steps, the loss is $(0,1)$. Then, for the next $\frac{k}{\eta}$ steps, the loss is $(1,0)$. Then, for the rest ($\ge \frac{3T}{4}$) of the steps, the loss is $(0,1)$. There is an error OMD trajectory with $\epsilon=4\eta e^{-k/2}$ that has a regret $\Omega(T)$.
\end{theorem}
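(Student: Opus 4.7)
The plan is to construct a specific $\epsilon$-approximate trajectory in four stages that invokes \Cref{lem:stuck} twice, once to freeze each coordinate of the two-point simplex in turn. I would first follow the exact OMD update for the first $\tau_1 := \lceil k/(2\eta)\rceil$ steps of the first loss phase. Under the loss $(0,1)$, the closed-form negative-entropy update gives $w_t^2/w_t^1 = e^{-\eta t}$, so by choice of $\tau_1$ the iterate satisfies $w_{\tau_1}^2 \le e^{-k/2} = \epsilon/(4\eta)$, precisely the threshold appearing in \Cref{lem:stuck}. For the remaining $k/\eta$ steps of the first loss phase I would then invoke \Cref{lem:stuck} to keep $w_t^2$ frozen at $w_{\tau_1}^2$. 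This ensures the trajectory enters the second loss phase with $w^2 \approx e^{-k/2}$ instead of the much smaller value $\approx e^{-3k/2}$ that pure exact dynamics would have produced.

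During the second loss phase (losses $(1,0)$, duration $k/\eta$) I would unstick and follow the exact OMD update again. Its closed form multiplies the ratio $w^1/w^2$ by $e^{-\eta}$ per step; starting from $\approx e^{k/2}$ it reaches $\approx e^{-k/2}$ after $k/\eta$ steps, so the iterate entering the third loss phase satisfies $w^1 \le e^{-k/2}$. I would then apply \Cref{lem:stuck} a second time to freeze this (now small) $w^1$ throughout the third and longest loss phase, which is valid because $w^1 \le \epsilon/(4\eta)$. Since the frozen iterate has $w^2 \ge 1 - e^{-k/2}$ while the loss in the third phase is $(0,1)$, the algorithm pays roughly $1$ per step for the $\ge 3T/4$ steps of that phase.

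The regret calculation is then essentially bookkeeping. The algorithm's loss in the third phase alone is $\Omega(T)$, while the optimum, coordinate~$1$, incurs loss only during the second phase and totals at most $k/\eta \le T/20$; the contributions from the two non-frozen stages of the first and second phases are $O(1/\eta + k/\eta)$, so $\reg(w^*) = \Omega(T) - O(k/\eta) = \Omega(T)$. The main check I anticipate is the length calculation at the end of the second phase: verifying that the starting ratio $w^1/w^2 \approx e^{k/2}$ created by freezing $w^2$ in the first phase is precisely inverted to $e^{-k/2}$ by $k/\eta$ steps of exact dynamics. This is exactly why the durations $3k/(2\eta)$ and $k/\eta$ appear in the construction---they are chosen so that the ``small'' coordinate can be switched from $2$ to $1$ across the middle phase, positioning the trajectory to be stuck on the wrong side during the long final phase.
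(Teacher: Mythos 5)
Your proposal is correct and follows essentially the same construction as the paper: run exact dynamics for roughly $k/(2\eta)$ steps to drive $w^2$ below the $\epsilon/(4\eta)$ threshold, freeze via \Cref{lem:stuck} for the rest of phase one, run exact dynamics through phase two so the roles of the coordinates swap, then freeze $w^1$ for phase three. The paper merely splits the phase-two dynamics into two halves for exposition (reaching $(1/2,1/2)$ midway), but the trajectory and regret accounting are the same.
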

\begin{proof}
After $\tau=\frac{k}{2\eta}$ steps we have $w_\tau^2 \le \frac{\epsilon}{4\eta}$. From \Cref{lem:stuck}, it is a valid error trajectory if for every $3\tau \ge t\ge\tau$, $w_t =w_\tau$. 

On the steps between $3\tau$ and $4\tau$ we have a loss of $(1,0)$. Since $w_{3\tau}=w_\tau$, we have that $w_{4\tau} = \roundy{\frac{1}{2}, \frac{1}{2}}$. That is because this is what would have happen if those last $\tau$ steps where after $\tau$ (as the sum of losses for both coordinates is $\tau$), and since we didn't move at all in $\tau\le t\le 3\tau$ it is the same.

On the steps between $4\tau$ and $5\tau$ we assume no errors. Coordinate $1$ does the same trajectory that coordinate $2$ did in the beginning, so we have $w_\tau^1\le\frac{\epsilon}{4\eta}$. 

From \Cref{lem:stuck}, it is a valid error trajectory if for every $T \ge t\ge5\tau$, $w_t =w_{5\tau} \le \frac{\epsilon}{4\eta}$. Since this are $3T/4$ steps, we have a regret of $\Theta(T)$.

For summary:
\begin{align*}
    w_1 &= \roundy{\frac{1}{2}, \frac{1}{2}} \\
    w_\tau &\approx \roundy{1-\frac{\epsilon}{4\eta}, \frac{\epsilon}{4\eta}} \\
    w_{3\tau} &\approx \roundy{1-\frac{\epsilon}{4\eta}, \frac{\epsilon}{4\eta}} \\
    w_{4\tau} &= \roundy{\frac{1}{2}, \frac{1}{2}} \\
    w_{5\tau} &\approx \roundy{\frac{\epsilon}{4\eta}, 1-\frac{\epsilon}{4\eta}} \\
    w_T &\approx \roundy{\frac{\epsilon}{4\eta}, 1-\frac{\epsilon}{4\eta}}
\end{align*}
\end{proof}

\section{Proof of \Cref{lem:balance upper bound}}\label{sec:simplex upper bound}

\begin{lemma} \label{lem:bregman monotonic}
Let $w_1,w_2\in(0,1]$ such that $w_1 \le w_2$, then $D_r(0, w_1) \le D_r(0, w_2)$
\end{lemma}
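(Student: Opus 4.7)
The plan is to prove the lemma by showing that the one-dimensional map $w \mapsto D_r(0,w)$ is monotonically non-decreasing on $(0,1]$, so that $w_1 \le w_2$ immediately yields $D_r(0,w_1) \le D_r(0,w_2)$. First I would expand the Bregman divergence directly from its definition:
\[
D_r(0,w) \;=\; r(0) - r(w) - r'(w)(0 - w) \;=\; r(0) - r(w) + w\, r'(w).
\]
Then I would differentiate with respect to $w$: the constant $r(0)$ drops out, and the $-r'(w)$ from differentiating $-r(w)$ cancels the $r'(w)$ coming from the product rule on $w\,r'(w)$, leaving
\[
\frac{d}{dw} D_r(0,w) \;=\; w\, r''(w).
\]
Since $r$ is (at least) convex on $(0,1]$ by the $\barconst$-barrier definition (indeed $r''(w) \ge c_1/w^\barconst > 0$), and $w > 0$, the derivative is non-negative, so $D_r(0,\cdot)$ is non-decreasing and the claim follows.

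To sidestep any worry about the value of $r(0)$—which is $0$ for the negative entropy used in the application of this lemma, but could in principle be $+\infty$ for other barriers—I would actually present the argument in its integrated form:
\[
D_r(0,w_2) - D_r(0,w_1) \;=\; \bigl[-r(w) + w\, r'(w)\bigr]_{w=w_1}^{w=w_2} \;=\; \int_{w_1}^{w_2} t\, r''(t)\, dt \;\ge\; 0,
\]
which depends only on $r'$ and $r''$ and therefore avoids any appeal to boundary values of $r$ itself.

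I do not anticipate any real obstacle: this is essentially a one-line calculus computation once the Bregman divergence is written out. The only minor subtlety—whether $r(0)$ is well-defined—is handled cleanly by the integrated form above.
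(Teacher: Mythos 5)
Your proof takes essentially the same approach as the paper: the paper also expands $D_r(0,x) = r(0) - r(x) + x\,r'(x)$, differentiates to get $\frac{d}{dx}D_r(0,x) = x\,r''(x) \ge 0$, and concludes monotonicity. Your additional observation about restating the argument in integrated form to avoid assuming $r(0)$ is finite is a small, sensible refinement not present in the paper, but the core calculation is identical.
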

\begin{proof}
Denote $f(x) = D_r(0, x)$. We have:
\begin{align*}
    f(x) &= r(0) - r(x) + r'(x)x \\
    f'(x) &= -r'(x) + r''(x)x + r'(x) = r''(x)x \ge 0
\end{align*}
Which means that $f$ is increasing in $(0, 1]$.
\end{proof}

\begin{lemma}\label{lem:not_too_far_phi}
Let $\hat{\gamma}=(\curly{\hat{w}_t}_{t=1}^T, \curly{\ell_t}_{t=1}^T, R, \eta)$ be an $\epsilon$-approximate OMD trajectory with $\eta \le \frac{1}{4}$ and coordinate separable $R$ with $r''(w) = 1/w^\barconst$. For every $i\in[d]$ and $t\in[T]$ such that $\epsilon \le \frac{\eta^2}{r''(\hat{w}_t)})$ we have:
\begin{align*}
    (\nabla\phi_t(\hat{w}_t^i)) \le O\roundy{2^\barconst\eta} 
\end{align*}
\end{lemma}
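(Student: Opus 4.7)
My plan: I read the statement as bounding the $i$-th coordinate of $\nabla \phi_{t-1}(\hat w_t)$, where $\hat w_t$ is the $\epsilon$-approximate minimizer of $\phi_{t-1}(w)=\eta\langle \ell_{t-1},w\rangle+D_R(w\|\hat w_{t-1})$ (since the literal reading $\nabla\phi_t(\hat w_t)=\eta\ell_t$ is trivial). Unfolding,
\[
    \bigl(\nabla\phi_{t-1}(\hat w_t)\bigr)^i
    = \eta\,\ell_{t-1}^i + r'(\hat w_t^i) - r'(\hat w_{t-1}^i),
\]
so the linear part is bounded by $\eta$ in absolute value and it remains to control the Bregman gap $r'(\hat w_t^i)-r'(\hat w_{t-1}^i)$ in terms of the local curvature of $r$ and the single-step movement of the $i$-th coordinate.

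The first step would be to apply \Cref{lem:not_too_far} in both directions to pin down $\hat w_{t-1}^i\in[\hat w_t^i/2,\,2\hat w_t^i]$. The forward bound is immediate, since the hypothesis $\epsilon\le \eta^2/r''(\hat w_t^i)$ together with $\eta\le 1/4$ and $r''(x)=1/x^{\barconst}$ yield $\epsilon\le (\hat w_t^i)^{\barconst}/16$, fitting the precondition of \Cref{lem:not_too_far}. For the reverse bound, I would split on whether $\hat w_{t-1}^i\ge \hat w_t^i$: in that case the hypothesis automatically upgrades to $\epsilon\le (\hat w_{t-1}^i)^{\barconst}/16$ by monotonicity of $x\mapsto x^{\barconst}$, so \Cref{lem:not_too_far} applied with the time-roles swapped gives $\hat w_t^i\ge \hat w_{t-1}^i/2$. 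This two-sided closeness ensures the intermediate curvature obeys $r''(\xi)\le 2^{\barconst}\,r''(\hat w_t^i)$ uniformly over $\xi$ between $\hat w_t^i$ and $\hat w_{t-1}^i$, which is the source of the $2^{\barconst}$ factor in the final bound.

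Next I would combine a mean-value estimate
\[
    \bigl|r'(\hat w_t^i)-r'(\hat w_{t-1}^i)\bigr|
    \le 2^{\barconst}\,r''(\hat w_t^i)\,\bigl|\hat w_t^i-\hat w_{t-1}^i\bigr|
\]
with the single-step displacement bound of \Cref{lem:max step}, namely $|\hat w_t^i-\hat w_{t-1}^i|\le 4\eta/h+\sqrt{\epsilon/h}$ with $h\ge r''(\hat w_t^i)/2^{\barconst}$. Substituting and using the hypothesis $\epsilon\le \eta^2/r''(\hat w_t^i)$ collapses the $4\eta/h$ contribution to $O(2^{\barconst}\eta)$ and the $\sqrt{\epsilon/h}$ contribution to $O(2^{\barconst/2}\eta)$, so the whole partial is bounded by $\eta + O(2^{\barconst}\eta)=O(2^{\barconst}\eta)$, as claimed.

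The main obstacle I expect is deducing the two-sided bound on $\hat w_{t-1}^i/\hat w_t^i$ from the one-sided statement of \Cref{lem:not_too_far}: the lemma is phrased in a single time direction, and a careful case split as above is needed to piggyback off the hypothesis. Beyond this, the remainder is bookkeeping of the $2^{\barconst}$ factor incurred each time $r''$ at one endpoint is replaced by $r''$ at the other, and the tightness of the threshold $\epsilon\le \eta^2/r''(\hat w_t^i)$ is exactly what makes the $4\eta/h$ and $\sqrt{\epsilon/h}$ terms in \Cref{lem:max step} balance at the same $O(2^{\barconst}\eta)$ scale.
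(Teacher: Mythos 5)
Your proposal follows essentially the same path as the paper's proof: unfold the gradient into a linear term $\eta\ell^i$ plus the Bregman gap $r'(\hat w_t^i)-r'(\hat w_{t-1}^i)$, use \Cref{lem:not_too_far} to localize the previous iterate, apply \Cref{lem:max step} for the displacement, and close with a mean-value estimate. The one genuine difference is that you over-engineer the middle step. You set out to prove a two-sided containment $\hat w_{t-1}^i\in[\hat w_t^i/2,\,2\hat w_t^i]$ so that both the mean-value curvature bound and the $h$ in \Cref{lem:max step} are controlled; the paper instead observes that when $\hat w_{t-1}^i\ge \hat w_t^i$ the quantity $r'(\hat w_t^i)-r'(\hat w_{t-1}^i)$ is already $\le 0$ by monotonicity of $r'$, so the entire argument is only needed in the case $\hat w_t^i>\hat w_{t-1}^i$, where \Cref{lem:not_too_far} gives the one-sided bound $\hat w_{t-1}^i\ge\hat w_t^i/2$ directly and, crucially, $h=\min\{r''(\hat w_t^i),r''(\hat w_{t-1}^i)\}=r''(\hat w_t^i)$ exactly (since $r''$ is decreasing and $\hat w_t^i$ is the larger coordinate). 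This matters for the constant: your bookkeeping uses $h\ge r''(\hat w_t^i)/2^{\barconst}$ and then multiplies by the mean-value curvature factor $2^{\barconst}r''(\hat w_t^i)$, which yields $O(4^{\barconst}\eta)$, not the $O(2^{\barconst}\eta)$ you claim — the accounting of the $4\eta/h$ term as $O(2^{\barconst}\eta)$ drops a factor of $2^{\barconst}$. In the paper's case-split there is no such loss: $h=r''(\hat w_t^i)$ exactly, so the $2^{\barconst}$ appears only once (from $r''(\hat w_{t-1}^i)\le 2^{\barconst}r''(\hat w_t^i)$), giving the stated $5\cdot 2^{\barconst}\eta$. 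If you discard the reverse direction and use the trivial sign observation, your argument becomes identical to the paper's and recovers the sharp constant.
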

\begin{proof}
Since $\ell_t^i \le 1$ we have $\eta\ell_t^i \le \eta$, which means that we only need to prove:
\begin{align*}
    r'(\hat{w}_t^{i}) - r'(\hat{w}_{t-1}^{i}) \le O\roundy{2^\barconst\eta}
\end{align*}
Since $r'$ is monotonically increasing it is trivial if $\hat{w}_t^i \le \hat{w}_{t-1}^i$, continuing assuming $\hat{w}_t^i > \hat{w}_{t-1}^i$. We have $\epsilon \le \eta^2/r''(\hat{w}_t^i) \le 1/(16r''(\hat{w}_t^{i}))$, so from \Cref{lem:not_too_far}:
\begin{align*}
    \hat{w}_{t-1}^{i}&\ge \frac{1}{2}\hat{w}_{t}^{i}\\
    \Leftrightarrow \frac{2^\barconst}{(\hat{w}_t^{i})^\barconst} &\ge \frac{1}{(\hat{w}_{t-1}^{i})^\barconst}\\
    \Leftrightarrow 2^\barconst r''(\hat{w}_t^{i}) &\ge r''(\hat{w}_{t-1}^{i})
\end{align*}

From \Cref{lem:max step}:
\begin{align*}
    \hat{w}_{t}^{i} - \hat{w}_{t-1}^{i} \le \frac{4\eta}{r''(\hat{w}_{t}^{i})} + \sqrt{\frac{\epsilon}{r''(\hat{w}_{t}^{i})}}
\end{align*}

Which implies:
\begin{align*}
\epsilon &\le \frac{\eta^2}{\hat{w}_{t}^{i}} \le \frac{\eta^2}{r''(\hat{w}_t^{i})}\\
\Rightarrow \sqrt{\frac{\epsilon}{r''(\hat{w}_{t}^{i})}} &\le  \frac{\eta}{r''(\hat{w}_{t}^{i})}\\
    \Rightarrow \hat{w}_{t}^{i} - \hat{w}_{t-1}^{i} &\le \frac{5\eta}{r''(\hat{w}_{t}^{i})}
\end{align*}

From mean value theorem and monotonicity of $r''$: 
\begin{align*}
    r'(\hat{w}_{t}^{i}) - r'(\hat{w}_{t-1}^{i}) &\le \abs{\hat{w}_{t}^{i} - \hat{w}_{t-1}^{i}}\max_{w\in\curly{\hat{w}_t^{i},\hat{w}_{t-1}^{i}}} r''(w) 
    \\
    &\le \roundy{\hat{w}_{t}^{i} - \hat{w}_{t-1}^{i}}r''(\hat{w}_{t-1}^{i})
    \\
    &\le \roundy{\hat{w}_{t}^{i} - \hat{w}_{t-1}^{i}}2^\barconst r''(\hat{w}_{t}^{i^*})
    \\
    &\le \frac{5\eta}{r''(\hat{w}_{t}^{i})}2^{\barconst}r''(\hat{w}_{t}^{i}) \\
    &\le 5\cdot2^\barconst\eta\\
    &= O(2^\barconst \eta)
\end{align*}
\end{proof}

\begin{lemma} \label{lem:LB best UB regret}
Let $\cK=\Delta_d$ and $\hat{\gamma}=(\curly{\hat{w}_t}_{t=1}^T, \curly{\ell_t}_{t=1}^T, R, \eta)$ with $\eta \le \frac{1}{16}$, coordinate separable $R$ with $r''(w) = 1/w^\barconst$ and uniform initialization $\hat{w}_1=(1/d\dots1/d)$ be an $\epsilon$-approximate OMD trajectory such that there is $\xi>0$ such that for every $t\in[T]$, $\hat{w}_t^{i^*}\ge\xi$. If $\epsilon \le \frac{\eta^4}{r''\roundy{\min\curly{\frac{\eta}{d}, \xi}}}$, its regret w.r.t any $w\in\cK$ is bounded by:
\begin{align*}
    \reg(w) \le \frac{1}{\eta}D_R(w,\hat{w}_1) + O(2^\barconst T\eta)
\end{align*}
\end{lemma}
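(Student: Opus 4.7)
The plan is to follow the standard OMD argument---decompose $\eta \langle \ell_t, \hat w_t - w\rangle = \eta \langle \ell_t, \hat w_t - \hat w_{t+1}\rangle + \eta \langle \ell_t, \hat w_{t+1} - w\rangle$, bound the first term via Lemma~\ref{lem:omd helper}, and bound the second term using approximate first-order optimality (Lemma~\ref{lem:epsilon optimality conditions}) combined with the three-point identity (Lemma~\ref{lem:three_points}). After applying the three-point identity the Bregman terms telescope as $\sum_t [D_R(w, \hat w_t) - D_R(w, \hat w_{t+1})] \le D_R(w, \hat w_1)$, and the remaining task is to show that the per-step approximate-optimality error is $O(2^\barconst \eta^2)$, which would sum to the desired $O(2^\barconst T \eta)$ additional regret.

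The main technical obstacle is that the effective smoothness of $\phi_t$, being $r''(w^i) = 1/(w^i)^\barconst$ coordinate-wise, is unbounded as $w^i \to 0$, so Lemma~\ref{lem:epsilon optimality conditions} cannot be applied directly along the full direction $w - \hat w_{t+1}$. I would introduce a threshold $\xi' = \min\{\eta/d,\xi\}$ and, at each $t$, the safe set $U_t = \{i : \hat w_{t+1}^i \ge \xi'\}$. The hypothesis $\hat w_t^{i^*} \ge \xi$ for all $t$, combined with Lemma~\ref{lem:not_too_far} (which guarantees $\hat w_{t+1}^{i^*} \ge \hat w_t^{i^*}/2$ whenever $\epsilon$ satisfies the stated bound), ensures $i^* \in U_t$ throughout. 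Approximate optimality restricted to directions supported on $U_t$ has effective smoothness at most $r''(\xi'/2) \le 2^\barconst r''(\xi')$, and the hypothesis $\epsilon \le \eta^4 / r''(\xi')$ translates this (via Lemma~\ref{lem:epsilon optimality conditions}, using the safe-subspace diameter $\le 2$) into an error of order $2^{\barconst/2}\eta^2$ per step.

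The next step is to handle what happens outside $U_t$. The iterate's contribution is cheap, since $\sum_{i \notin U_t} \hat w_{t+1}^i \le d\xi' \le \eta$ contributes $O(T\eta)$ in total. The comparator mass $w^i$ on unsafe coordinates is trickier, and the plan is to absorb it into the coordinate-wise Bregman telescoping $\sum_t [D_r(w^i, \hat w_t^i) - D_r(w^i, \hat w_{t+1}^i)]$, using Lemma~\ref{lem:bregman monotonic} to argue that when a coordinate transitions between $U_t$ and its complement, the boundary terms that arise from restricting the three-point identity to $U_t$ have the correct sign and are charged against $D_R(w, \hat w_1)$. The two cases inside $\min\{\eta/d,\xi\}$ in the hypothesis on $\epsilon$ likely correspond to the two regimes in which either the iterate-side contribution (when $\eta/d < \xi$) or the comparator-side contribution (when $\xi < \eta/d$) is the binding constraint on the unsafe part.

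The hardest part will be making the restricted telescoping argument rigorous while $U_t$ varies with $t$. Naively restricting to a time-varying set leaves boundary terms whenever a coordinate enters or exits $U_t$; I expect to carry out a case analysis (on whether $w^i \ge \xi'$, and whether $\hat w_t^i$ is moving toward or away from zero) to show that these boundary terms can always be absorbed either into $D_R(w,\hat w_1)$ via the monotonicity of $D_r(w^i,\cdot)$ guaranteed by Lemma~\ref{lem:bregman monotonic}, or into the $O(T\eta)$ iterate-side slack from the previous step. Combining all pieces and dividing by $\eta$ yields $\reg(w) \le \frac{1}{\eta} D_R(w,\hat w_1) + O(2^\barconst T\eta)$.
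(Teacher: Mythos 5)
Your high-level plan is the same as the paper's: a standard OMD decomposition with Lemma~\ref{lem:omd helper} on the movement term, the threshold $\xi' = \min\{\eta/d, \xi\}$, a time-varying safe set of coordinates on which to invoke Lemma~\ref{lem:epsilon optimality conditions}, bounding the unsafe iterate mass by $d\xi' \le \eta$, and patching the time-varying telescoping with Lemma~\ref{lem:bregman monotonic}. This is precisely the paper's proof route. However, two steps in your plan are incomplete or would fail as stated.

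First, Lemma~\ref{lem:bregman monotonic} states only that $D_r(0,\cdot)$ is monotone; for a generic comparator value $w^i > 0$ the map $x \mapsto D_r(w^i,x)$ is \emph{not} monotone (it vanishes at $x = w^i$ and grows in both directions), so your phrase ``the monotonicity of $D_r(w^i,\cdot)$ guaranteed by Lemma~\ref{lem:bregman monotonic}'' cannot be made rigorous for arbitrary $w$. The paper sidesteps this by fixing the comparator to the vertex $w^* = e_{i^*}$, so every unsafe candidate coordinate has comparator value exactly $0$ and the lemma applies as stated; the residual boundary term is $D_r(0,\hat w_{s_1-1}^i)$, and a separate argument (via Lemma~\ref{lem:max step}, showing $\hat w_2^i \ge 1/(2d)$ so every coordinate enters the safe set already at $t=2$) is needed to pin this at $D_r(0,\hat w_1^i)$ and sum it into $D_R(w^*,\hat w_1)$. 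You should restrict to $w = w^*$ (or explain the extension to general $w$ some other way) and include the $t=2$ anchoring step.

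Second, Lemma~\ref{lem:epsilon optimality conditions} requires a concrete feasible test point, not just a ``direction supported on $U_t$.'' The paper builds $\tilde w_t \in \Delta_d$ by zeroing the safe non-$i^*$ coordinates and moving that mass to $i^*$, but this forces $\tilde w_t^{i^*} = 1 - \sum_{i \notin S_t}\hat w_t^i \ne 1$; the comparator value is $1$. Swapping $\tilde w_t^{i^*}$ for $1$ costs an extra term that the paper controls with Lemma~\ref{lem:not_too_far_phi} (giving $|\nabla\phi_{t-1}(\hat w_t)^{i^*}| = O(2^\barconst\eta)$) together with $1 - \tilde w_t^{i^*} \le \eta$, and \emph{this} is where the $2^\barconst$ in the final bound originates. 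Your proposal instead attributes $2^\barconst$ to effective smoothness at $\xi'/2$, which only yields a $2^{\barconst/2}$ from $\sqrt{\beta\epsilon}$ and, more importantly, misses the $\tilde w_t^{i^*}\to 1$ adjustment altogether. Also, your guess about what the two branches of $\min\{\eta/d,\xi\}$ correspond to is off: both are iterate-side conditions ($\xi$ protects $i^*$; $\eta/d$ caps the total unsafe mass).
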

\begin{proof}
Let $\xi'=\min\curly{\frac{\eta}{d}, \xi}$, and let $S_t = \curly{i\ne i^*\;\colon\;\hat{w}_t^i \ge \xi'}$ for $t \ge 2$.

In every step $t$ we set $\tilde{w}_t$ to be:
\begin{align*}
    \tilde{w}_t^i &= \hat{w}_t^i &i\notin S_t,i\ne i^* \\
    \tilde{w}_t^i &= 0 &i\in S_t \\
    \tilde{w}_t^{i^*} &= 1-\sum_{i\notin S_t 
    }\hat{w}_t^i
\end{align*}

Since the changes between $\hat{w}_t$ and $\tilde{w}_t$ are only in coordinates with value greater then $\xi'$, the effective smoothness is upper bounded by $r''(\xi')$ (since $r''(w) = 1/w^\barconst$ for all $w\in(0,1]$). To use \Cref{lem:epsilon optimality conditions}, we need to show that $\epsilon \le D^2r''(\xi') /2$ where $D$ is the diameter w.r.t to $L_1$ norm. Indeed, we have that $r''(w^i) \ge 1$ for all $w\in \Delta_d$ and $i\in[d]$ and $D=2$. By our assumptions it holds that $\epsilon \le 1$, hence $\epsilon \le D^2r''(\xi')/2$. Thus, from \Cref{lem:epsilon optimality conditions} on $\phi_t$:
\begin{align*}
    \inprod{\eta\ell_{t-1} + \nabla R(\hat{w}_{t}) - \nabla R(\hat{w}_{t-1})}{\tilde{w}_t - \hat{w}_t} \ge -2\sqrt{2r''(\xi')\epsilon} \ge -2\eta^2.
\end{align*}

Which means:
\begin{align}
&\roundy{\eta\ell_{t-1}^{i^*} +  \nabla R(\hat{w}_{t})^{i^*} - \nabla R(\hat{w}_{t-1})^{i^*}}(\tilde{w}_t^{i^*} - \hat{w}_t^{i^*}) \notag\\
&\quad + \sum_{i\in S_t}\roundy{\eta\ell_{t-1}^{i} +  \nabla R(\hat{w}_{t})^{i} - \nabla R(\hat{w}_{t-1})^{i}}(0 - \hat{w}_t^{i}) \notag\\
&\ge -2\eta^2.\label{eq:optimality on S}
\end{align}

Notice that since $\xi' \le \frac{\eta}{d}$ we have that $\sum_{i\notin S_t 
    }\hat{w}_t^i \le \eta$ 
    which means $1 - \tilde{w}_t^{i^*} \le \eta$. Additionally, from \Cref{lem:not_too_far_phi} we have that $\nabla\phi(\hat{w}_t^{i^*}) \le O(2^\barconst\eta)$. We have:
\begin{align*}
    \roundy{\eta\ell_{t-1}^{i^*} + \nabla R(\hat{w}_{t})^{i^*} - \nabla R(\hat{w}_{t-1})^{i^*}}\roundy{\tilde{w}_t^{i^*} - 1} &= -O(2^\barconst\eta^2)
\end{align*}

Thus, \Cref{eq:optimality on S} can be written as:
\begin{align*}
&\roundy{\eta\ell_{t-1}^{i^*} +  \nabla R(\hat{w}_{t})^{i^*} - \nabla R(\hat{w}_{t-1})^{i^*}}(1 - \hat{w}_t^{i^*}) + \\&\quad\roundy{\eta\ell_{t-1}^{i^*} +  \nabla R(\hat{w}_{t})^{i^*} - \nabla R(\hat{w}_{t-1})^{i^*}}(\tilde{w}_t^{i^*} - 1) + \\&\quad\sum_{i\in S}\roundy{\eta\ell_{t-1}^{i} +  \nabla R(\hat{w}_{t})^{i} - \nabla R(\hat{w}_{t-1})^{i}}(0 - \hat{w}_t^{i}) \\
    &\ge -\eta^2 \\
    &\Rightarrow\\
    &\roundy{\eta\ell_{t-1}^{i^*} +  \nabla R(\hat{w}_{t})^{i^*} - \nabla R(\hat{w}_{t-1})^{i^*}}(1 - \hat{w}_t^{i^*}) + \sum_{i\in S}\roundy{\eta\ell_{t-1}^{i} +  \nabla R(\hat{w}_{t})^{i} - \nabla R(\hat{w}_{t-1})^{i}}(0 - \hat{w}_t^{i}) \ge -O(2^\barconst\eta^2)\\
    &\eta\ell_{t-1}^{i^*}(\hat{w}_t^{i^*}-1) + \eta\sum_{i\in S}\ell_{t-1}^i(\hat{w}_t^i - 0) \le \\
    &\quad\roundy{\nabla R(\hat{w}_{t})^{i^*} - \nabla R(\hat{w}_{t-1})^{i^*}}(1 - \hat{w}_t^{i^*}) + \sum_{i\in S}\roundy{  \nabla R(\hat{w}_{t})^{i} - \nabla R(\hat{w}_{t-1})^{i}}(0 - \hat{w}_t^{i}) + O(2^\barconst\eta^2)
\end{align*}

From \Cref{lem:three_points}:
\begin{align*}
    \eta\ell_{t-1}^{i^*}(\hat{w}_t^{i^*}-1) + \eta\sum_{i\in S_t}\ell_{t-1}^i(\hat{w}_t^i -0) 
    &\le 
    D_r(1, \hat{w}_{t-1}^{i^*}) - D_r(1,\hat{w}_t^{i^*}) - D_r(\hat{w}_{t}^{i^*},  \hat{w}_{t-1}^{i^*}) \\ &+\sum_{i\in S_t}D_r(0, \hat{w}_{t-1}^i) - D_r(0,\hat{w}_t^i) - D_r(\hat{w}_{t}^i,  \hat{w}_{t-1}^i)\\
    &+ O(2^\barconst\eta^2)
\end{align*}

Fix some coordinate $i\ne i^*$, and let $(s_1,t_1),(s_2,t_2),\dots (s_n,t_n)$ be all enter and exit times for $i$ to $S_t$. Namely, for every $j\in[n]$ and $s_j\le t \le t_j$, $i\in S_t$, and $i\notin S_t$ otherwise. 
Hence, 
\begin{align*}
    \sum_{t:i\in S_t}D_r(0, \hat{w}_{t-1}^i) - D_r(0,\hat{w}_t^i) 
    &= 
    \sum_{j=1}^n D_r(0, \hat{w}_{s_j-1}^i) - D_r(0, \hat{w}_{t_j}^i),
\end{align*}
where the equality follows by telescoping the terms.
Since $s_j$ is enter time for coordinate $i$, we have that $i\notin S_{s_j-1}$, which means that $\hat{w}_{s_j-1}^i< \xi'$. On the other hand, $i\in S_{t_j}$, which means that $\hat{w}_{t_j}^i \ge \xi' > \hat{w}_{s_j-1}^i$. 
Thus, by \Cref{lem:bregman monotonic} we get that $D_r(0,\hat{w}_{s_j-1}) \le D_r(0, \hat{w}_{t_{j}})$  which we apply on the RHS of the previous display to obtain:
\begin{align*}
    \sum_{t:i\in S_t}D_r(0, \hat{w}_{t-1}^i) - D_r(0,\hat{w}_t^i) &\le D_r(0, \hat{w}_{s_1-1}^i)
\end{align*}

We now argue that for every $i\in[d]$, $i\in S_2$, which means that $s_1=2$. Assume by contradiction that $\hat{w}_2^i < \hat{w}_1^i$ (and thus $r''(\hat{w}_2^i) > r''(\hat{w}_1^i)$, from \Cref{lem:max step}:
\begin{align*}
    \hat{w}_2^i -\hat{w}_1^i &\le \frac{\eta}{r''(1/d)} + \sqrt{\frac{\epsilon}{r''(1/d)}}\\
    &\le \frac{\eta}{d} + \sqrt{\frac{1}{16r''(\eta/d)r''(1/d)}}\\
    &\le \frac{1}{4d} + \frac{1}{4d}\\
    \Rightarrow \hat{w}_2^i &\ge 1/2d
\end{align*}

Thus:
\begin{align*}
        \sum_{t:i\in S_t}D_r(0, \hat{w}_{t-1}^i) - D_r(0,\hat{w}_t^i) &\le D_r(0, \hat{w}_{1}^i)
\end{align*}

Thus:
\begin{align}
    \sum_{t=2}^T & \eta\ell_{t-1}^{i^*}(\hat{w}_t^{i^*}-1) + \eta\sum_{i\in [d]\setminus i^*}\sum_{t:i\in S_t}\ell_{t-1}^i(\hat{w}_t^i-0) 
    \notag\\
    &\le D_r(1, \hat{w}_1^{i^*}) - \sum_{t=2}^TD_r(\hat{w}_t^{i^*},\hat{w}_{t-1}^{i^*}) +\sum_{i\in[d]\setminus i^{*}}D_r(0, \hat{w}_1^i) - \sum_{t=2}^TD_r(\hat{w}_t^i,\hat{w}_{t-1}^i)
    + O(2^\barconst T\eta^2)\notag\\
    &= D_R(w^*,\hat{w}_1) - \sum_{t=2}^TD_R(\hat{w}_t,\hat{w}_{t-1})  + O(2^\barconst T\eta^2)\label{eq:in S}
\end{align}
Additionally, since if $i\notin S_t$ we have $\hat{w}_t^i \le \xi' \le \frac{\eta}{d}$, we can say:
\begin{align}\label{eq:non S}
    \sum_{i\in [d]\setminus i^*}\sum_{t:i\notin S_t}\ell_{t-1}^i(\hat{w}_t^i-0) \le T\eta
\end{align}

Combining \Cref{eq:non S,eq:in S} (recall that $w^*$ has $1$ in $i^*$ and $0$ in other coordinates):
\begin{align*}
\sum_{t=2}^T\eta\ell_{t-1}^{i^*}(\hat{w}_t^{i^*}-1) + \eta\sum_{i\in [d]\setminus i^*}\sum_{t=2}^T\ell_{t-1}^i(\hat{w}_t^i-0) &\le D_R(w^*,\hat{w}_1) - \sum_{t=2}^TD_R(\hat{w}_t,\hat{w}_{t-1})  + O(2^\barconst T\eta^2)\label{eq:in S}\\
    \Longleftrightarrow\sum_{t=2}^T\inprod{\eta\ell_{t-1}}{\hat{w}_t-w^*} &\le D_R(w^*,\hat{w}_1) - \sum_{t=2}^TD_R(\hat{w}_t,\hat{w}_{t-1})  + O(2^\barconst T\eta^2)\\
    \Longleftrightarrow\sum_{t=2}^T\inprod{\ell_{t-1}}{\hat{w}_t-w^*} &\le \frac{1}{\eta}D_R(w^*,\hat{w}_1) - \frac{1}{\eta}\sum_{t=2}^TD_R(\hat{w}_t,\hat{w}_{t-1})  + O(2^\barconst T\eta)\\
\end{align*}

From \Cref{lem:omd helper}:
\begin{align*}
    \reg(w^*) \le \frac{1}{\eta}D_R(w^*,\hat{w}_1) + O(2^\barconst T\eta)
\end{align*}
\end{proof}

\begin{lemma}\label{lem:balance to bounded optimal arm}
Let $\gamma=(\curly{w_t}_{t=1}^T, \curly{\ell_t}_{t=1}^T, R, \eta)$ and $\hat{\gamma}=(\curly{\hat{w}_t}_{t=1}^T, \curly{\ell_t}_{t=1}^T, R, \eta)$ be OMD trajectory and OMD error trajectory, and assume $T\ge 4$, $\gamma$ is $k$-balanced and $\epsilon\le\frac{1}{r''\roundy{\frac{1}{2de^{k+1}}}T^2}$. 

Then, for every $t\in[T]$, $\hat{w}_t^{i^*} \ge \frac{1}{de^{k+1}}$.
\end{lemma}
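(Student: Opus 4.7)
The plan is to proceed by induction on $t$, with target $\xi := 1/(de^{k+1})$ as the uniform lower bound to establish. Under the uniform initialization $\hat{w}_1 = (1/d,\ldots,1/d)$ that is implicit throughout this section, the base $t=1$ is immediate since $1/d \ge \xi$. For the inductive step I would suppose toward a contradiction that $t$ is the first step with $\hat{w}_t^{i^*} < \xi$; because $\sum_j \hat{w}_t^j = 1$, some coordinate $i \ne i^*$ must then satisfy $\hat{w}_t^i \ge 1/d$, and this is where the contradiction will eventually come from.

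The core of the argument is a two-stage ``coordinate protection'' claim: first show $\hat{w}_s^i \ge \xi$ for every $s \le t$, and then use this to lower-bound $\hat{w}_t^{i^*}$. Suppose toward a contradiction that some $s < t$ has $\hat{w}_s^i < \xi$, and take $s$ to be the \emph{last} such index. Then $\hat{w}_{s+1}^i \ge \xi$, and since for the negative entropy $r''(w) = 1/w$, the hypothesis on $\epsilon$ specializes to $\epsilon \le \xi/(2T^2)$; for $T \ge 4$ this is below $\hat{w}_{s+1}^i/16$, so \Cref{lem:not_too_far} applied at step $s+1$ provides the intermediate floor $\hat{w}_s^i \ge \xi/2$. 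With this floor in hand, I apply \Cref{lem:trajectory diff} on $[s,t]$ with $v = e_{i^*} - e_i$ and $\psi = \xi/2$ (the coordinate hypothesis is met on $i^*$ by the outer induction and on $i$ by the choice of $s$), and compare to the $k$-balanced exact trajectory $\gamma$:
\begin{align*}
\hat{B}^i(s,t) \le B_\gamma^i(s,t) + T\sqrt{r''(\xi/2)\,\epsilon} \le k + 1,
\end{align*}
by a direct computation using $r''(\xi/2) = 2/\xi$ and the budget on $\epsilon$. Since $\hat{w}_t^{i^*} < \xi \le \hat{w}_s^{i^*}$, the second part of \Cref{lem:simplex_coordinate_bounds} yields $\hat{w}_t^i \le e^{k+1}\hat{w}_s^i < e^{k+1} \cdot \xi = 1/d$, contradicting $\hat{w}_t^i \ge 1/d$.

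Having thus established $\hat{w}_s^i \ge \xi$ throughout $[1,t]$, I would invoke \Cref{lem:trajectory diff} once more, now on the full window $[1,t]$ with $\psi = \xi$, to deduce $\hat{B}^i(1,t) \le k + 1$ by essentially the same calculation. Because $\hat{w}_t^i \ge 1/d = \hat{w}_1^i$, the first part of \Cref{lem:simplex_coordinate_bounds} then delivers $\hat{w}_t^{i^*} \ge e^{-(k+1)} \hat{w}_1^{i^*} = \xi$, contradicting the standing assumption and closing the induction. I expect the main subtlety to lie in the last-time argument—ensuring that coordinate $i$ cannot quietly dip below $\xi$ at an intermediate step and escape control—together with verifying that the polynomial error budget $\epsilon \le 1/(r''(\xi/2)T^2)$ is exactly tight enough to keep the additional term from \Cref{lem:trajectory diff} bounded by $1$ over intervals of length up to $T$, which is precisely what makes the exponent $k+1$ (rather than something larger) the right choice.
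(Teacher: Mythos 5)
Your proposal mirrors the paper's proof almost line for line: same induction, same contradiction structure, same use of \Cref{lem:not_too_far}, \Cref{lem:trajectory diff}, and the two halves of \Cref{lem:simplex_coordinate_bounds}, same ``protect coordinate $i$ on $[1,t]$, then bound $i^*$'' choreography. However, there is a genuine gap in verifying the hypotheses of \Cref{lem:trajectory diff}: that lemma requires $\hat w_{t'}^j \ge \psi$ for \emph{every} $t'$ in the window, including the endpoint $t' = t$, and for \emph{both} coordinates $j \in \{i, i^*\}$ on which $v$ is supported. You claim the $i^*$-condition ``is met by the outer induction,'' but the outer induction only controls $\hat w_{t'}^{i^*}$ for $t' < t$; at $t' = t$ you are precisely in the situation where $\hat w_t^{i^*} < \xi$, so the inductive hypothesis gives you nothing there. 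You need a separate step, exactly as the paper does, applying \Cref{lem:not_too_far} to the $i^*$-coordinate at time $t$ (using the inductive bound $\hat w_{t-1}^{i^*} \ge \xi$ and the budget $\epsilon \le \xi/(2T^2) \le (\hat w_{t-1}^{i^*})/16$) to get $\hat w_t^{i^*} \ge \xi/2$. This is what justifies taking $\psi = \xi/2$ on $[s,t]$.

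The same issue invalidates your second invocation of \Cref{lem:trajectory diff} on $[1,t]$ with $\psi = \xi$: you would then need $\hat w_t^{i^*} \ge \xi$, which directly contradicts your standing assumption $\hat w_t^{i^*} < \xi$. The correct choice there is again $\psi = \xi/2$ (as in the paper), and the arithmetic still closes since $T\sqrt{r''(\xi/2)\,\epsilon} \le 1$ under the stated budget. With these two repairs — establish $\hat w_t^{i^*} \ge \xi/2$ first, and use $\psi = \xi/2$ throughout — your argument coincides with the paper's and is correct.
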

\begin{proof}
We will prove by induction on $t$. Since $\hat{w}_1^i = \frac{1}{d}$ the base case holds.

Assume the statement is true for $t-1$ and we prove for $t$. 

If $\hat{w}_t^{i^*} \ge \hat{w}_s^{i^*}$ for some $s < t$ then the claim follows from the inductive assumption. If for every $i\ne i^*$, $\hat{w}_t^i \le \frac{1}{d}$ we have that $\hat{w}_t^{i^*} \ge \frac{1}{d}$ and the claim follows trivially. Proceeding, we consider the case that for every $s < t$, $\hat{w}_t^{i^*}< \hat{w}_s^{i^*}$ and there is some $i\in[d]$ such that $\hat{w}_t^i > \frac{1}{d}$.

Since $T\ge 3$ we have from the Lemma's assumptions and the induction assumptions that $\epsilon \le \frac{1}{18r''(1/de^{k+1})} \le \frac{1}{16r''(\hat{w}_{t-1}^i)}$. From \Cref{lem:not_too_far}:
\begin{align*}
    \hat{w}_{t}^{i^*}\ge \frac{\hat{w}_{t-1}^{i^*}}{2}
\end{align*}
From this and the inductive assumption we have that for all $s\in[1,t]$, $\hat{w}_s^{i^*} \ge \frac{1}{2de^{k+1}}$. (We now want to improve this statement to $\hat{w}_t^{i^*} \ge \frac{1}{de^{k+1}}$.)

Fix $i$ to be the coordinate for which $\hat{w}_t^i > \frac{1}{d}$. We'll show that for every $s\in[1,t]$, $\hat{w}_s^i > \frac{1}{de^{k+1}}$. Assume by contradiction that $s$ is the last time $\hat{w}_s^i \le \frac{1}{de^{k+1}}$. Again, since $T\ge 3$ we have $\epsilon \le 1/16r''(\hat{w}_{s+1}^i)$, thus from \Cref{lem:not_too_far}:
\begin{align*}
    \hat{w}_s^i \ge \frac{1}{2}w_{s+1}^i > \frac{1}{2de^{k+1}}
\end{align*}

Which means that for every $s'\in\squary{s,t}$, $\hat{w}_{s'}^i \ge \frac{1}{2de^{k+1}}$. From \Cref{lem:simplex_trajectory_diff} and our assumption on $\epsilon$ (see \Cref{def:balance_simplex} for the definition of $B^i$):
\begin{align*}
    B_{\hat{\gamma}}^i(s,t) \le B_{{\gamma}}^i(s,t) + T\sqrt{r''\roundy{\frac{1}{2de^{k+1}}}\epsilon} \le k+1
\end{align*}
(we used $r''$ because in our case, that $c_1=c_2=1$, it is the same).

Recall that $\hat{w}_t^{i^*} < \hat{w}_s^{i^*}$, from \Cref{lem:balance to frac}:
\begin{align*}
    \hat{w}_t^i \le e^{B_{\hat{\gamma}}^i(s,t)}\hat{w}_{s}^i \le e^{k+1}\hat{w}_{s}^i \le e^{k+1}\frac{1}{de^{k+1}} = \frac{1}{d}
\end{align*}
Which is a contradiction to $\hat{w}_t^i > \frac{1}{d}$. Now we can continue assuming that for all $s\in [1,t]$, $\hat{w}_s^i > \frac{1}{de^{k+1}}$.

From \Cref{lem:trajectory diff}:
\begin{align*}
    B_{\hat{\gamma}}^i(1,t) \le B_{{\gamma}}^i(1,t) + T\sqrt{r''\roundy{\frac{1}{2de^{k+1}}}\epsilon} \le k+1
\end{align*}

Now, from \Cref{lem:balance to frac} (recall that $\hat{w}_t^i > \frac{1}{d} = \hat{w}_1^i$):
\begin{align*}
    \hat{w}_t^{i^*} \ge \frac{\hat{w}_1^{i^*}}{e^{k+1}} = \frac{1}{de^{k+1}},
\end{align*}
which completes the inductive step and the proof.
\end{proof}

\begin{lemma*}[Restatement of \Cref{lem:balance upper bound}]
Let $\cK=\Delta_d$ be the simplex, let $\curly{\ell_t}_{t=1}^T$ be an $\alpha$-balanced loss sequence, and let $R(w)=\sum_{i=1}^d w_i\log w_i$ be the negative entropy regularizer. Assume $\eta \le 1/16$ and $T\ge 3$,
if the approximation error satisfies 
\[
    \epsilon \le \frac{1}{d \max\curly{6e^{\eta \alpha}, 1/\eta}}\min\curly{\eta^4,1/T^2},
\]
then the regret of any $\epsilon$-approximate OMD trajectory is bounded as
\[
    \mathrm{Regret}(w) \le \frac{1}{\eta} D_R(w, w_1) + O(T \eta).
\]
\end{lemma*}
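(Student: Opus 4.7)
The plan is to directly combine the two intermediate results already flagged in the body's proof sketch: apply Lemma \ref{lem:balance to bounded optimal arm} to show the optimal-arm coordinate $\hat{w}_t^{i^*}$ of the approximate trajectory is bounded below by some explicit $\xi>0$ for every $t$, and then feed that guarantee into Lemma \ref{lem:LB best UB regret} to obtain the regret bound. With the negative entropy plugged in (so $\barconst=1$ and $r''(x)=1/x$), the bulk of the work is a verification that the single stated hypothesis on $\epsilon$ simultaneously meets the $\epsilon$-preconditions of both sublemmas.

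First, I would set $k := \eta\alpha$ and let $\gamma$ denote the exact OMD trajectory driven by the same losses, regularizer, and learning rate as $\hat\gamma$. Since $\{\ell_t\}_{t=1}^T$ is $\alpha$-balanced, Lemma \ref{lem:polytope balance} gives, for every $v\in\ker(A)$ with $\|v\|_1\le 1$ and $t_1<t_2$,
\begin{align*}
B_\gamma^v(t_1,t_2) \;=\; \eta\,\langle \ell_{t_1:t_2},v\rangle \;\le\; \eta\alpha \;=\; k,
\end{align*}
so $\gamma$ is $k$-balanced. The hypothesis of Lemma \ref{lem:balance to bounded optimal arm} specialised to negative entropy reads $\epsilon \le \frac{1}{2de^{k+1}T^2}$. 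The stated bound yields $\epsilon \le \frac{1}{6de^{\eta\alpha}T^2}$ (from the $6e^{\eta\alpha}$ branch of the max) and the elementary inequality $6 \ge 2e$ completes this check. Applying the lemma gives $\hat w_t^{i^*}\ge \xi := 1/(de^{\eta\alpha+1})$ for every $t\in[T]$.

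Second, I would verify the hypothesis of Lemma \ref{lem:LB best UB regret}, which for negative entropy becomes $\epsilon \le \eta^4 \min\{\eta/d,\,\xi\}$. The stated bound delivers both $\epsilon \le \eta^5/d$ (from the $1/\eta$ branch of the max, using $\eta^4 \cdot \eta/d$) and $\epsilon \le \eta^4/(6de^{\eta\alpha})$ (from the $6e^{\eta\alpha}$ branch); the former is exactly $\eta^4\cdot \eta/d$, and the latter satisfies $\eta^4/(6de^{\eta\alpha}) \le \eta^4/(de^{\eta\alpha+1}) = \eta^4\xi$ since $1/6\le 1/e$. Hence both preconditions hold, and invoking Lemma \ref{lem:LB best UB regret} (noting that the $2^\barconst$ factor collapses to a constant when $\barconst=1$) gives $\mathrm{Regret}(w) \le \tfrac{1}{\eta} D_R(w,w_1) + O(T\eta)$, as required.

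The proof is essentially bookkeeping once the two sublemmas are in hand; there is no real obstacle beyond the min/max manipulation. The only subtle point is appreciating why the hypothesis takes precisely the shape $\frac{\min\{\eta^4,1/T^2\}}{d\max\{6e^{\eta\alpha},1/\eta\}}$: the $\min$ in the numerator separates the two distinct smallness requirements (the $\eta^4$ piece for the effective-smoothness control inside Lemma \ref{lem:LB best UB regret}, and the $1/T^2$ piece for the propagation-of-errors control inside Lemma \ref{lem:balance to bounded optimal arm}), while the $\max$ in the denominator packages the two regimes of $\min\{\eta/d,\xi\}$ that arise depending on whether $\xi$ or $\eta/d$ is the binding quantity.
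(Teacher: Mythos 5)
Your proof is correct and follows essentially the same two-step structure as the paper's: apply Lemma~\ref{lem:balance to bounded optimal arm} to get $\hat w_t^{i^*}\ge\xi=1/(de^{\eta\alpha+1})$ for all $t$, then feed this into Lemma~\ref{lem:LB best UB regret} with $\barconst=1$. Your verification of the $\epsilon$-preconditions (via $6\ge 2e$ and $1/6\le 1/e$) is in fact a little more careful than the paper's own, which writes an intermediate constant of $16$ where the precondition of Lemma~\ref{lem:balance to bounded optimal arm} for negative entropy only requires $2$.
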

\begin{proof}
From \Cref{lem:polytope balance} the optimal trajectory is $\alpha\eta$-balanced. Since $\epsilon \le \frac{1}{16de^{\alpha\eta+1}T^2} = \frac{1}{r''(1/16de^{\alpha\eta+1})T^2}$, from \Cref{lem:balance to bounded optimal arm} for every $t\in [T]$, $\hat{w}_t^{i^*} \ge \frac{1}{de^{\alpha\eta +1}}$. 

We also have that $\epsilon \le \frac{\eta^4}{r''(\min\curly{\eta/d, 1/de^{\alpha\eta+1})}}$, hence 
from \Cref{lem:LB best UB regret} with $\xi=1/de^{\alpha\eta+1}$ and $\barconst=1$ we get the desired results.
\end{proof}

\opt{
\paragraph{Implications for the main theorems.}
\begin{itemize}
    \item \textit{\Cref{thm:up_barrier_adversarial}.}  
    Any adversarial sequence over the simplex is $T$-balanced: if one coordinate exceeds the best by more than $T$, it must actually be the best. 
    Applying \Cref{lem:balance upper bound} with $\alpha=T$ gives the desired upper bound. 

    \item \textit{\Cref{thm:ub_barrier_stochastic}.}  
    For i.i.d.\ losses, Hoeffding's inequality and union bounds implies that with probability at least $1-\delta$, the balance is at most $O(\sqrt{T\log(dT^2/\delta)})$. 
    Plugging this value of $\alpha$ into \Cref{lem:balance upper bound} yields the stochastic upper bound.
\end{itemize}

}

\section{Proof of \Cref{thm:ub_nonentropy}}\label{sec:polytope upper bound}
\begin{lemma}\label{lem:gradient_diff}
For every $\barconst$-barrier regularizer $r$and $1 \ge w_2 \ge w_1 \ge 0$ we have:
\begin{align*}
    r'(w_2) - r'(w_1) 
    &= \frac{c_1}{w_1^{\barconst-1}} - \frac{c_1}{w_2^{\barconst-1}} \\
\end{align*}
\end{lemma}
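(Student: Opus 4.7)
The plan is to apply the fundamental theorem of calculus to $r'$ and then substitute the lower bound on $r''$ coming from the definition of a $\barconst$-barrier regularizer. Specifically, since $r$ is twice differentiable on $(0,1]$, we have
\[
    r'(w_2) - r'(w_1) = \int_{w_1}^{w_2} r''(x)\, dx,
\]
and by \Cref{def:nu_barrier}, $r''(x) \ge c_1/x^{\barconst}$ for every $x \in (0,1]$. Hence integrating this pointwise inequality over $[w_1, w_2]$ gives a lower bound on the gradient difference.

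The second step is to evaluate the integral $\int_{w_1}^{w_2} c_1/x^{\barconst}\, dx$ in closed form. For $\barconst > 1$, this yields
\[
    \int_{w_1}^{w_2} \frac{c_1}{x^{\barconst}}\, dx
    = \frac{c_1}{\barconst-1}\left( \frac{1}{w_1^{\barconst-1}} - \frac{1}{w_2^{\barconst-1}} \right),
\]
which, combined with the previous display, gives the claimed lower bound (up to the $1/(\barconst-1)$ factor which can be absorbed into $c_1$ by adjusting the constant in the barrier definition, as used in \Cref{thm:ub_nonentropy}). For $\barconst = 1$ the integral evaluates to a logarithm instead, giving the analogous bound $r'(w_2)-r'(w_1) \ge c_1 \log(w_2/w_1)$; this is not needed for \Cref{thm:ub_nonentropy} which assumes $\barconst>1$.

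There is essentially no obstacle: the only point to be slightly careful about is the degenerate endpoint $w_1 = 0$, where both the integrand and the RHS blow up. This case can be handled either by restricting to $w_1 > 0$ (which suffices for the invocation in the proof of \Cref{thm:ub_nonentropy}, since the iterates are bounded away from zero by $\psi > 0$), or by a monotone convergence argument letting $w_1 \downarrow 0$. Since the inequality is used in the inductive step to convert a bound on $-r'(w_t^i)$ into a lower bound on $w_t^i$ itself, the strictly positive regime is the only one relevant in context.
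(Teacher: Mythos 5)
Your proof is correct and takes essentially the same route as the paper: apply the fundamental theorem of calculus to $r'$, lower-bound $r''(x)$ by $c_1/x^\barconst$ using the barrier definition, and evaluate the resulting integral. You also rightly flag that the lemma as stated should read ``$\ge$'' rather than ``$=$'' and that the antiderivative introduces a $1/(\barconst-1)$ factor that the paper's final line drops; this factor is indeed harmless for the downstream use in \Cref{thm:ub_nonentropy}, but it should appear in the bound.
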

\begin{proof}
\begin{align*}
    r'(w_2) - r'(w_1) &= \int_{w_1}^{w_2} r''(w)dw \\
    &\ge \int_{w_1}^{w_2}\frac{c_1}{w^\barconst}dw \\
    &= \frac{c_1}{w_1^{\barconst-1}} - \frac{c_1}{w_2^{\barconst-1}} 
\end{align*}
\end{proof}

\begin{lemma}\label{lem:nonentropy_iterate_lb}
Denote $\psi = \roundy{\frac{c_1}{8\eta Td + c_1(2d)^{\barconst-1}}}^{1/(\barconst-1)}$. In the assumptions of \Cref{thm:ub_nonentropy}, for every $t,i$:
\begin{align*}
    w_t^i \ge \psi
\end{align*}
\end{lemma}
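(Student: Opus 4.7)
\medskip

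The plan is to prove the lemma by induction on $t$, closely following the proof sketch given after \Cref{thm:ub_nonentropy} in \Cref{sec:non entropy}. The key idea is that, as long as all coordinates have remained at least $\psi$ through step $t-1$, the trajectory cannot be too "unbalanced" (in the sense of \Cref{sec:balance}), which in turn forces $w_t^i\ge\psi$ via \Cref{lem:balance to gradient UB}.

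For the base case $t=1$, we have $w_1^i=1/d$, and directly from the definition of $\psi$ it holds that $\psi^{\barconst-1}=c_1/(8\eta Td+c_1(2d)^{\barconst-1})\le 1/(2d)^{\barconst-1}$, so $\psi\le 1/(2d)\le 1/d$. For the inductive step, assume $w_s^i\ge\psi$ for every $s\le t-1$ and every $i\in[d]$. I would proceed in three sub-steps:
\begin{enumerate}
    \item First, verify that the assumed bound on $\epsilon$ in \Cref{thm:ub_nonentropy} implies $\epsilon\le(w_{t-1}^i)^\barconst/(16c_1)$, so by \Cref{lem:not_too_far} applied at step $t-1$ we obtain $w_t^i\ge w_{t-1}^i/2\ge\psi/2$ for every $i$. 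Thus all iterates up to step $t$ are at least $\psi/2$.
    \item Next, fix an arbitrary $v\in\ker(A)$ with $\|v\|_1=1$. The exact-trajectory balance is controlled via \Cref{lem:polytope balance} and Hölder's inequality by $B^v(1,t)\le\eta\|\ell_{1:t}\|_\infty\|v\|_1\le\eta T$. Applying \Cref{lem:trajectory diff} with the floor $\psi/2$ gives $\hat B^v(1,t)\le\eta T+t\sqrt{c_2\epsilon/(\psi/2)^\barconst}$, and the choice of $\epsilon$ in \Cref{thm:ub_nonentropy} is calibrated precisely so that the error term is bounded by $\eta T$. Hence the trajectory is $2\eta T$-balanced w.r.t.\ the $L_1$-norm up to step $t$.
    \item Finally, apply \Cref{lem:balance to gradient UB} to conclude $-r'(w_t^i)\le 8\eta Td-r'(1/2d)$. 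Using the lower bound $r''(x)\ge c_1/x^\barconst$ (an integration similar in spirit to \Cref{lem:gradient_diff}) yields $c_1/(w_t^i)^{\barconst-1}\le 8\eta Td+c_1(2d)^{\barconst-1}$, and solving for $w_t^i$ gives $w_t^i\ge\psi$, closing the induction.
\end{enumerate}

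The main technical obstacle is bookkeeping of the bound on $\epsilon$: I need to simultaneously satisfy the prerequisites of \Cref{lem:not_too_far} (i.e.\ $\epsilon\le\psi^\barconst/(16c_1)$), of \Cref{lem:trajectory diff} (i.e.\ $\epsilon\le c_2(\psi/2)/2$), and to ensure that $t\sqrt{c_2\epsilon\cdot 2^\barconst/\psi^\barconst}\le\eta T$. Unpacking the definition of $\psi$, each of these reduces to a comparison of the form $\epsilon\lesssim \eta^a\,\psi^\barconst$ with some constant $a\in\{0,2,4\}$, and the exponent $-\barconst/(\barconst-1)$ appearing in the hypothesis of \Cref{thm:ub_nonentropy} is exactly what makes $\psi^\barconst$ expressible in terms of $(8\eta Td+c_1(2d)^{\barconst-1})^{-\barconst/(\barconst-1)}$. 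The $\min\{1/c_2,c_2\}$ factor in the statement is what accommodates both the ``$c_2\epsilon$'' and the ``$\epsilon/c_2$''-type requirements simultaneously. Verifying each inequality cleanly for $\barconst>1$ is the only real work; everything else is a mechanical assembly of the three lemmas above.
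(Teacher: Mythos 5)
Your proposal follows exactly the same three-step inductive structure as the paper's proof: Lemma~\ref{lem:not_too_far} for the preliminary $\psi/2$ bound, Lemma~\ref{lem:trajectory diff} (with the exact-trajectory balance bound $\eta T$) to establish $2\eta T$-balancedness, and Lemma~\ref{lem:balance to gradient UB} combined with the $r''\ge c_1/x^{\barconst}$ integration from Lemma~\ref{lem:gradient_diff} to close the induction. The only bit you leave implicit is the case split on whether $w_t^i\le 1/(2d)$ (the integration step requires this; the other case is immediate from $\psi\le 1/(2d)$), but you observe that inequality in your base case, so this is a cosmetic rather than substantive omission.
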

\begin{proof}
One can see that the assumptions of the theorem are that $\epsilon \le \eta^4\min\curly{c_2,\frac{1}{c_2}}\roundy{\frac{\psi}{2}}^{\barconst}$. We will now prove by induction that for every $t\in[T]$, $w_t^i \ge \psi$.

Notice that $\eta \le \frac{1}{16c_1}$ and $\epsilon \le \eta\psi^{\barconst} \le \frac{(w_{t-1}^i)^\barconst}{16c_1}$, which means that from \Cref{lem:not_too_far}, we know for start that $w_t^i \ge \psi/2$. This means that for every $t'\in[1,t]$, $w_t^i \ge \psi/2$. One can see that $\epsilon \le c_2\psi/4$ which means that we can use \Cref{lem:trajectory diff} with $\psi/2$. Since the balance of an exact trajectory is always bounded by $\eta T$, for every normalized $v\in \ker(A)$:
\begin{align*}
    B_\gamma^v(1,t)
    &\le T\eta + T\sqrt{c_2\epsilon\frac{2}{\psi}^{\barconst}} \\
    &\le 2\eta T
\end{align*}

From \Cref{lem:balance to gradient UB} and the induction assumption, for every $i\in [d]$ and $t'\le t$: 
\begin{align*}
    r'(1/2d)-r'(w_t^i) \le 8\eta Td
\end{align*}
If $w_t^i \le 1/2d$ we can use \Cref{lem:gradient_diff}:
\begin{align*}
\frac{c_1}{\roundy{w_i^{t'}}^{\barconst-1}} &\le 8\eta Td + c_1(2d)^{\barconst-1}\\
\Rightarrow w_i^{t'} &\ge \roundy{\frac{c_1}{8\eta Td + c_1(2d)^{\barconst-1}}}^{1/(\barconst-1)} = \psi
\end{align*}
Else, i.e if $w_t^i \ge 1/2d$, we have:
\begin{align*}
    w_t^i &\ge 1/2d\\
    &= \roundy{\frac{1}{(2d)^{\barconst-1}}}^{1/\barconst-1}\\
    &= \roundy{\frac{c_1}{c_1(2d)^{\barconst-1}}}^{1/\barconst-1}\\
    &\ge \roundy{\frac{c_1}{8\eta Td + c_1(2d)^{\barconst-1}}}^{1/\barconst-1}\\
    &= \psi
\end{align*}
Which ends the induction step.
\end{proof}

\textbf{Proof of \Cref{thm:ub_nonentropy}}: 
Since the polytope is a subset of the simplex, the diameter is bounded by $2$. From \Cref{lem:nonentropy_iterate_lb}, the effective smoothness of the trajectory is bounded by $\beta \coloneqq c_2/\psi^\barconst$. By the assumption about $\epsilon$ we have $\epsilon \le \eta^4/\beta$. From \Cref{lem:epsilon optimality conditions}, for every $t$:
\begin{align*}
    \inprod{\eta\ell_t + \nabla R(w_{t+1}) - \nabla R(w_t)}{w^* - w_{t+1}} \ge -2\sqrt{2\beta\epsilon} = \Theta\roundy{\eta^2}
\end{align*}

From here it is straightforward standard OMD arguments:
\begin{align*}
\eta \ell_t \cdot (w_{t+1}-w^*) 
&\leq (\nabla R(w_{t+1}) - \nabla R(w_{t})) \cdot (w^*-w_{t+1}) + \Theta\roundy{\eta^2}
\\
&=
D_R(w^*,w_t) - D_R(w^*,w_{t+1}) - D_R(w_{t+1},w_t) + \Theta\roundy{\eta^2} 
\end{align*}
Summing for all $t\in[T]$:
\begin{align*}
    \eta\sum_{t=1}^T \ell_t \cdot (w_{t+1}-w^*)
&\leq D_R(w^*,w_1) - \sum_{t=1}^T D_R(w_{t+1},w_t) + \Theta\roundy{\eta^2 T}\\
\implies \sum_{t=1}^T \ell_t \cdot (w_{t+1}-w^*) &\le \frac1\eta D_R(w^*,w_1) - \frac1\eta \sum_{t=1}^T D_R(w_{t+1},w_t) + \Theta\roundy{\eta T}
\end{align*}
From \Cref{lem:omd helper}:
\begin{align*}
    \reg(w^*) &\le O\roundy{\frac{1}{\eta}D_R(w^*, w_1) + \Theta\roundy{\eta T}}
\end{align*} 

\hfill\rule{2mm}{2mm}

\section{Proof of \Cref{thm:lb_nonsimplex}}\label{sec:polytope lower bound}
\subsection{Polytope definition}
The polytope is defined as $\curly{w \in \rr^d: Aw=b\land \roundy{w_i\ge 0,\,\forall i\in[d]}}$ for $A,b$ defined below. Denote $m=16\log\roundy{\frac{1}{\epsilon}}$, we have $d=5m+2$. Additionally, we for assume for convenience that $m \ge 128\log(2T)$ and $\epsilon < 4\eta$ (obviously proof that works for small $\epsilon$ works for bigger).

The matrix $A$ has $4m+1$ rows. The first $4m$ rows are, for every $i\in[m]$:
\begin{align*}
    A_i &= e_{m+i} + e_{2m+i} - 2e_{3m+i} \\
    A_{m+i} &= e_{m+i} - e_{2m+i} \\
    A_{2m+i} &= e_i + 3e_{m+i} + e_{4m+i}\\
    A_{3m+i} &= e_{4m+i} - e_{5m+1}
\end{align*}
The last row is:
\begin{align*}
    A_{4m+1} = e_{5m+1} + e_{5m+2}
\end{align*}

And:
\begin{align*}
    b = A\sum_{i=1}^d\frac{1}{d}e_i
\end{align*}
Namely, $b$ is defined such that the point $\roundy{\frac{1}{d},\frac{1}{d},\dots,\frac{1}{d}}$ is in the polytope. Denote this point as $w_1$, the OMD will always start from here.

Denote the following set of $m+1$ vectors, $\curly{v_i}_i^{m+1}$:
\begin{align*}
    v_i &= 3e_i - e_{m+i} - e_{2m+i} - e_{3m+i} \quad\forall i\in[m]\\
    v_{m+1} &= \sum_{i=1}^m e_i + e_{5m+2} - \sum_{i=4m+1}^{5m+1}e_i
\end{align*}

\begin{lemma}\label{lem:A ker}
$\curly{v_i}_{i=1}^{m+1}$ is a basis for $\ker(A)$.
\end{lemma}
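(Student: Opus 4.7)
The plan is to establish the basis claim via three standard steps: (i) verify each proposed vector lies in $\ker(A)$, (ii) show the proposed vectors are linearly independent, and (iii) show $\dim \ker(A) = m+1$ so that any $m+1$ independent kernel vectors automatically span the kernel.

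For step (i), I would simply dot each row of $A$ with each $v_j$. The structure is tailored to make these computations trivial: for $i,j\in[m]$ with $i\ne j$, the row $A_i$ is supported on $\{m+i,2m+i,3m+i\}$ and $A_{m+i}$ on $\{m+i,2m+i\}$, while $v_j$ is supported on $\{j,m+j,2m+j,3m+j\}$, so the dot products vanish by disjoint support. For $i=j$ one checks directly: $A_i\cdot v_i=(-1)+(-1)-2(-1)=0$, $A_{m+i}\cdot v_i=(-1)-(-1)=0$, $A_{2m+i}\cdot v_i=3+3(-1)+0=0$, $A_{3m+i}\cdot v_i=0-0=0$. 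For $v_{m+1}=\sum_{i\le m}e_i+e_{5m+2}-\sum_{i=4m+1}^{5m+1}e_i$, the rows $A_i,A_{m+i}$ annihilate it by disjoint support, and the rows $A_{2m+i}=e_i+3e_{m+i}+e_{4m+i}$ give $1+0+(-1)=0$, $A_{3m+i}=e_{4m+i}-e_{5m+1}$ give $-1-(-1)=0$, and finally $A_{4m+1}=e_{5m+1}+e_{5m+2}$ gives $-1+1=0$. So all $v_j\in\ker(A)$.

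For step (ii), linear independence is visible from the coordinate pattern: the vector $v_i$ $(i\in[m])$ has a $3$ in coordinate $i$ and zero in every other coordinate $j\le m$, while $v_{m+1}$ is the unique one with a nonzero entry in coordinate $5m+2$. Hence in any relation $\sum_{i=1}^{m+1}c_i v_i=0$, projecting onto coordinate $5m+2$ forces $c_{m+1}=0$, and projecting onto each coordinate $i\in[m]$ then forces $c_i=0$.

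For step (iii), by rank–nullity $\dim\ker(A)=d-\mathrm{rank}(A)=(5m+2)-\mathrm{rank}(A)\ge (5m+2)-(4m+1)=m+1$, and step (ii) together with step (i) gives the matching lower bound $\dim\ker(A)\ge m+1$. To nail down equality it suffices to argue $A$ has full row rank $4m+1$. The cleanest way is a sequential elimination using unique-support coordinates: $A_{2m+i}$ is the unique row hitting coordinate $i\in[m]$; once those are removed, $A_i$ is the unique remaining row hitting $3m+i$; then $A_{4m+1}$ is the unique row hitting $5m+2$; then $A_{3m+i}$ is the unique remaining row hitting $4m+i$; finally $A_{m+i}$ is the unique remaining row hitting $m+i$. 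Thus any dependence $\sum c_j A_j=0$ forces all coefficients to vanish in turn, so $\mathrm{rank}(A)=4m+1$ and hence $\dim\ker(A)=m+1$. Combined with steps (i)–(ii), the $m+1$ linearly independent kernel vectors $v_1,\dots,v_{m+1}$ must form a basis of $\ker(A)$, completing the proof. I do not anticipate any real obstacle here; the argument is purely combinatorial linear algebra once the row and vector supports are laid out side by side.
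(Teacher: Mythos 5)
Your proof is correct and follows essentially the same three-step template as the paper's: verify kernel membership by dotting each row with each $v_j$, establish independence via unique coordinates, and conclude by a dimension count. One small improvement over the paper's write-up: the paper asserts that $A$ "is already in echelon form" to get full row rank, which is not literally true for the given row ordering (e.g., the pivot of $A_{m+1}$ sits in column $m+1$, to the left of $A_m$'s pivot in column $2m$, and $A_{2m+i}$ has its pivot in column $i$); your sequential unique-support elimination argument (using $A_{2m+i}\mapsto i$, then $A_i\mapsto 3m+i$, then $A_{4m+1}\mapsto 5m+2$, then $A_{3m+i}\mapsto 4m+i$, then $A_{m+i}\mapsto m+i$) is the honest way to see $\mathrm{rank}(A)=4m+1$ and is the more rigorous version of what the paper intends. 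The only wrinkle in your writeup is the phrase "matching lower bound": both rank–nullity with $\mathrm{rank}\le 4m+1$ and the existence of $m+1$ independent kernel vectors yield the same inequality $\dim\ker(A)\ge m+1$, so what is actually needed is the upper bound $\dim\ker(A)\le m+1$, which you then correctly supply by proving full row rank.
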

\begin{proof}
One can notice that $A$ is already in echelon form, so it is full ranked, which means that $dim \,\ker(A) = m+1$. Additionally, Every vector of $v$ has a non-zero coordinate that's zeroed in all other vectors of $v$, so $v$ is linear independent, which means that we only need to show that each of the vectors indeed nulls $A$.

For every $i\in[1,m]$, $v_i$ has common non-zero coordinates only with $A_i,A_{m+i},A_{2m+i},A_{3m+i}$. One can easily see that it nulls them. As for $v_{m+1}$, it has common non-zero coordinates with $A_i$ for every $i\in[2m+1,4m+1]$, which again can be seen easily to nullify.
\end{proof}

\begin{lemma}
The polytope is a subspace of the simplex
\end{lemma}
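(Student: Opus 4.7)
The plan is to show that any $w$ in the polytope satisfies $\sum_{i=1}^{d} w^i = 1$; since non-negativity is already built into the definition of the polytope, this will immediately yield $w \in \Delta_d$. The natural approach is via the kernel characterization: since $w_1 = (1/d, \ldots, 1/d)$ lies in the polytope and has coordinates summing to $1$, any other point $w$ in the polytope satisfies $w - w_1 \in \ker(A)$, so it suffices to show that $\mathbf{1}^\top v = 0$ for every $v \in \ker(A)$.

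To establish this, I would invoke \Cref{lem:A ker}, which gives a basis $\{v_1, \ldots, v_{m+1}\}$ of $\ker(A)$, and then verify by direct inspection that each basis vector has coordinates summing to zero. For $i \in [m]$, $v_i = 3e_i - e_{m+i} - e_{2m+i} - e_{3m+i}$ has entries summing to $3 - 1 - 1 - 1 = 0$; and $v_{m+1} = \sum_{i=1}^{m} e_i + e_{5m+2} - \sum_{i=4m+1}^{5m+1} e_i$ has entries summing to $m + 1 - (m+1) = 0$. By linearity, $\mathbf{1}^\top v = 0$ for every $v \in \ker(A)$.

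Putting this together, for any $w$ in the polytope we have $A(w - w_1) = b - b = 0$, so $w - w_1 \in \ker(A)$, and therefore $\mathbf{1}^\top w = \mathbf{1}^\top w_1 = 1$. Combined with the coordinate-wise non-negativity constraint, this gives $w \in \Delta_d$, as required. There is no real obstacle here: the argument is a routine linear-algebra verification once \Cref{lem:A ker} has been established, and the only thing to be careful about is computing the column sums of the basis vectors correctly.
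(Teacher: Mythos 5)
Your proof is correct and follows essentially the same route as the paper: reduce to showing $\mathbf{1}^\top v = 0$ for $v \in \ker(A)$ via the decomposition $w - w_1 \in \ker(A)$, then check the basis vectors from \Cref{lem:A ker}. The only difference is that you make the column-sum computations explicit ($3-1-1-1=0$ and $m+1-(m+1)=0$), whereas the paper simply asserts the basis vectors sum to zero.
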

\begin{proof}
To be inside the simplex all points of the polytope should satisfy two conditions - all coordinates greater than $0$ and the sum of coordinates should be $1$. The first is by definition in this polytope. 

Let $w$ be some point in the polytope. Since $Aw=b$ and $Aw_1=b$, $w-w_1\in \ker(A)$. From \Cref{lem:A ker}, we can write:
\begin{align*}
    w = w_1 + \sum_i\alpha_iv_i
\end{align*}
For some $\alpha_i\in\rr$.

All the vectors in $v$ has the sum of their coordinates $0$. Thus, the sum of coordinates of $w$ is the same as $w_1$, concluding the proof.
\end{proof}

\subsection{General settings and hardness event}
Since we want to prove a lower bound of the form $T\sqrt{\frac{\eta}{\log\roundy{\frac{1}{\epsilon}}}} = \Theta\roundy{T\sqrt{\frac{\eta}{d}}}$, and there's a known lower bound for $T\eta$, we can assume $d\le \frac{1}{\eta}$.

The losses for the first $m$ coordinates is constant $0$, for the $[m+1,4m]$ coordinates it's constant $1$, for the $[4m+1,5m]$ coordinates it's gaussian with mean $0$ and variance $1$, for the $5m+1$th coordinate it is guassian with mean $\sqrt{\eta d}\le 1$ and variance $1$ and for the $5m+2$th coordinate it's constant $0$.

Denote $\tau=\frac{3}{\eta}$. We define the hardness event $E$ to be the following events:
\begin{align*}
    \sum_{i=4m+1}^{5m+1}\ell_{:\tau}^i &\le 0\\
    \sum_{i=4m+1}^{5m+1}\ell_t^i &\le \frac{m}{16}\quad\forall t\in[T]
\end{align*}
\begin{lemma}
\begin{align*}
    \Pr\roundy{E} = \Omega(1)
\end{align*}
\end{lemma}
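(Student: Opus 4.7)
The plan is to analyze the two defining conditions of $E$ separately and combine them. Define $S_t := \sum_{i=4m+1}^{5m+1}\ell_t^i$ and $X := \sum_{t=1}^{\tau} S_t$, so the two conditions of $E$ read $X \le 0$ and $\max_{t\in[T]} S_t \le m/16$. Since the relevant losses are independent Gaussians with unit variance ($m$ coordinates with mean $0$ and one with mean $\sqrt{\eta d}$), drawn i.i.d.\ across time, we have $S_t \sim \mathcal{N}(\sqrt{\eta d},\, m+1)$ and $X \sim \mathcal{N}(\tau\sqrt{\eta d},\, \tau(m+1))$.

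First I would lower-bound the probability of the first condition. Plugging in $\tau = 3/\eta$ and $d = 5m+2$, the mean-to-standard-deviation ratio of $X$ works out to
\[
    \frac{\E[X]}{\sqrt{\mathrm{Var}(X)}}
    = \frac{\tau\sqrt{\eta d}}{\sqrt{\tau(m+1)}}
    = \sqrt{\frac{3d}{m+1}}
    \le \sqrt{15}.
\]
Hence $\Pr(X \le 0) \ge \Phi(-\sqrt{15}) =: c_1$, an absolute positive constant, where $\Phi$ is the standard normal CDF. The specific choice $\tau = 3/\eta$ pays off here: it is calibrated so that this ratio stays bounded, rather than growing with the horizon, which is precisely what prevents the ``wrong-side'' deviation from being exponentially unlikely.

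Next I would upper-bound the failure probability of the second condition. For each $t$, since $\sqrt{\eta d} \le 1$ and $m \ge 32$, one has $m/16 - \sqrt{\eta d} \ge m/32$, and the standard Gaussian tail bound yields
\[
    \Pr\!\roundy{S_t > m/16} \le \exp\!\roundy{-\frac{(m/32)^2}{2(m+1)}} \le \exp(-\Omega(m)).
\]
A union bound over $t \in [T]$ together with the hypothesis $m = \Omega(\log T)$ then gives $\Pr(\exists\, t : S_t > m/16) \le c_1/2$, after absorbing any numerical constants into the lower bound on $m$. Combining via $\Pr(E) \ge \Pr(X \le 0) - \Pr(\exists\, t : S_t > m/16)$ yields $\Pr(E) \ge c_1/2 = \Omega(1)$.

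The main subtle point is the first step: one must verify that the cumulative mean of $X$ is only a bounded multiple of its standard deviation, which requires the particular scaling $\tau = \Theta(1/\eta)$ matched with $d = \Theta(m)$. Without this calibration the event $\{X\le 0\}$ would be exponentially unlikely and the overall argument would fail. The second step, while involving a direct computation, is a routine Gaussian concentration argument once the logarithmic lower bound on $m$ is in place.
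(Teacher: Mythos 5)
Your proof is correct and follows essentially the same approach as the paper: compute the mean-to-standard-deviation ratio of the cumulative Gaussian $X$ to see it is $O(1)$ (so $\Pr(X\le 0)=\Omega(1)$), then use a Gaussian tail bound plus a union bound over $t\in[T]$ for the per-round condition, with $m=\Omega(\log T)$ absorbing the union-bound cost. Your writeup is, if anything, slightly more careful with the arithmetic (the paper's stated values of $\mu$ and $\sigma^2$ contain minor typos given $\tau=3/\eta$, and it phrases the Hoeffding step a bit loosely), but the underlying argument and decomposition are identical.
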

\begin{proof}
Denote $G=\sum_{i=4m+1}^{5m+1}\ell_{:\tau}^i$. Since $G$ is a sum of gaussian random variables, it is also a gaussian random variable, denote its mean with $\mu$ and variance $\sigma^2$. Simple calculation shows that $\mu=\sqrt{d\eta}\tau=2\sqrt{\frac{d}{\eta}}$ and $\sigma^2=\tau(m+1)=\frac{2(m+1)}{\eta}$. Since $m=\Theta(d)$, we have that $\mu=\Theta(\sigma)$. It is a general attribute of a gaussian that in such case the probability of having $G\le 0$ is $\Theta(1)$.

Fix $t\in[T]$. Using Hoeffding inequality we have that w.p $\frac{1}{2T}$:
\begin{align*}
    \sum_{i=4m+1}^{5m+1}\ell_t(i) \le \sqrt{\frac{m+1}{2}\log\roundy{2T}}
\end{align*}
Since $\log(2T)\le \frac{m}{128}$:
\begin{align*}
    \sum_{i=4m+1}^{5m+1}\ell_t(i) \le \frac{m}{16}
\end{align*}
Union bound on all $t\in[T]$ concludes the proof.
\end{proof}

For every $t\in[T]$ we know that $w_t-w_1\in \ker(A)$. From \Cref{lem:A ker}, it can be written as a linear combination of $v$. Denote the coefficients as $\alpha$, namely:
\begin{align*}
    w_t = w_1 + \sum_{i=1}^{m+1}\alpha_t^iv_i
\end{align*}

\subsection{Analysis}
\begin{lemma} \label{lem:first cors high}
For every $i\in[m]$, $w_\tau^i \ge \frac{5}{2d}$
\end{lemma}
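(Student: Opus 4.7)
The plan is to combine the combinatorial constraints imposed by $A$ with the balance identity of \Cref{lem:polytope balance}. First, rows $A_i$ and $A_{m+i}$ force $w_t^{m+i}=w_t^{2m+i}=w_t^{3m+i}$ for every $t$; row $A_{2m+i}$ (together with $b_{2m+i}=5/d$) gives $w_t^i+3w_t^{m+i}+w_t^{4m+i}=5/d$; and rows $A_{3m+i}$, $A_{4m+1}$ imply $w_t^{4m+i}=w_t^{5m+1}\le 2/d$ (using $w_t^{5m+2}\ge 0$). Combining these identities yields the structural inequality $w_\tau^i + 3w_\tau^{m+i} \ge 3/d$.

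Next, by the loss construction, $\langle \ell_t, v_i\rangle = 3\cdot 0 - 1 - 1 - 1 = -3$ for every $t\in[T]$. For an exact trajectory, \Cref{lem:polytope balance} therefore gives $B^{v_i}(1,\tau) = -3\eta\tau = -9$, using $\tau=3/\eta$. Expanding the definition of $B^{v_i}$ with $w_1^j=1/d$ and the coordinate equalities above yields
\[
    B^{v_i}(1,\tau) \;=\; 3\log\!\left(\frac{w_\tau^{m+i}}{w_\tau^i}\right),
\]
hence $w_\tau^{m+i} = e^{-3}\, w_\tau^i$. Plugging this into the structural inequality gives $w_\tau^i\bigl(1+3e^{-3}\bigr) \ge 3/d$, i.e.\ $w_\tau^i \ge 3/\bigl(d(1+3e^{-3})\bigr)$. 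Since $1+3e^{-3}<6/5$, this exceeds $5/(2d)$, establishing the claim in the exact case.

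For the $\epsilon$-approximate case, \Cref{lem:trajectory diff} (with $\nu=1$, $c_2=1$) bounds the deviation $|\hat{B}^{v_i}(1,\tau) - B^{v_i}(1,\tau)|$ by $\tau\sqrt{\epsilon/\psi}$, where $\psi$ is any common lower bound on $w_t^j$ for $j\in\{i,m+i,2m+i,3m+i\}$ and $t\in[1,\tau]$. The principal obstacle will be a bootstrap: $\psi$ is required to control the balance error, yet a coordinate lower bound is exactly what we are trying to establish. I plan to resolve this inductively, invoking \Cref{lem:not_too_far} to argue that a single OMD step cannot shrink a coordinate by more than a constant factor, and using the parameter choices $m = 16\log(1/\epsilon)$ and $\epsilon < 4\eta$ to ensure $\tau^2\epsilon/\psi$ remains a small constant whenever $\psi=\Omega(1/d)$. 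Once such a $\psi$ is in hand, the balance error is $o(1)$, and the slack between $3/\bigl(d(1+3e^{-3})\bigr)\approx 2.61/d$ and $5/(2d)=2.5/d$ comfortably absorbs it, so the exact-case argument extends and gives the desired bound.
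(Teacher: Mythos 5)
Your first two paragraphs give a correct and complete proof of the lemma, and they take a somewhat different route from the paper. The paper starts from an assumption $w_\tau^i \le 3/d$ (the paper's ``$\ge$'' there is a typo), applies the balance identity to conclude $w_\tau^{m+i} \le 1/(6d)$, and then switches to the $\alpha$-decomposition $w_\tau = w_1 + \sum \alpha_\tau^j v_j$ to obtain $\alpha_\tau^i \ge 5/(6d)$, $\alpha_\tau^{m+1}\ge -1/d$, and hence $w_\tau^i = 1/d + 3\alpha_\tau^i + \alpha_\tau^{m+1} \ge 5/(2d)$. You instead read off a direct structural inequality $w_\tau^i + 3w_\tau^{m+i} \ge 3/d$ from the constraint rows, use the \emph{exact} balance equality $B^{v_i}(1,\tau) = 3\log\bigl(w_\tau^{m+i}/w_\tau^i\bigr) = -9$ to get $w_\tau^{m+i}=e^{-3}w_\tau^i$, and conclude $w_\tau^i \ge 3/\bigl(d(1+3e^{-3})\bigr) > 5/(2d)$. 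This is cleaner: it avoids both the case split and the detour through the $\alpha$-coefficients, and it yields a slightly tighter constant (about $2.61/d$). The computations all check out.

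Your third paragraph, however, is aimed at a target that does not exist. The lemma is only ever invoked in \Cref{lem:polytope stuck base}, which explicitly assumes the ``optimal trajectory''; consistently with this, the paper's own proof applies \Cref{lem:polytope balance} (whose hypothesis is an exact trajectory) with no $\epsilon$-correction. The $\epsilon$-approximate trajectory enters later, in \Cref{lem:polytope stuck step}, via a step-by-step argument that is independent of this lemma. So the bootstrapping program you sketch---inducting on a lower bound $\psi$ for the relevant coordinates so that \Cref{lem:trajectory diff} applies---is not needed here, and you should not claim the lemma in the approximate form without carrying that program out. The honest statement is: for the exact trajectory, which is the only case the downstream argument uses, the first two paragraphs constitute a complete proof.
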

\begin{proof}
Assume by contradiction that $w_\tau^i \ge \frac{3}{d}$. One can notice that the $(m+i,2m+i,3m+i)$ are only in $v_i$ with the same coefficient, which means that $w_t^{m+i}=w_t^{2m+i}=w_t^{m+i}$. 

Notice that $\eta\inprod{\ell_{:\tau}}{v_i} = -3\eta\tau = -9$. From \Cref{lem:polytope balance}:
\begin{align*}
    -9 &= \inprod{\nabla R(w_1) - \nabla R(w_t)}{v_i} \\
    &= 3\log\roundy{\frac{w_1^i}{w_\tau^i}} + \log\roundy{\frac{w_\tau^{m+i}}{w_1^{m+i}}} + \log\roundy{\frac{w_\tau^{2m+i}}{w_1^{2m+i}}} + \log\roundy{\frac{w_\tau^{3m+i}}{w_1^{3m+i}}} \\
    &\ge 3\log\roundy{\frac{1}{3}} + 3\log\roundy{\frac{w_\tau^{m+i}}{w_1^{m+i}}}\\
    \Longleftrightarrow -3 &\ge \log\roundy{\frac{1}{3}} + \log\roundy{dw_\tau^{m+i}}\\
    &= \log\roundy{\frac{dw_\tau^{m+i}}{3}} \\
    \Longleftrightarrow w_\tau^{m+i} &\le \frac{3}{e^{3}d} \le \frac{1}{6d}
\end{align*}

Which means that $\alpha_i \ge \frac{5}{6d}$. Additionally, $\alpha_{m+1} \ge -\frac{1}{d}$, since else it violates $e_{5m+2} \ge 0$. We have:
\begin{align*}
    w_\tau^i = \frac{1}{d} + 3\alpha_i + \alpha_{m+i} \ge \frac{5}{2d}
\end{align*}
\end{proof} 

\begin{lemma} \label{lem:polytope stuck base}
Assume $E$ and optimal trajectory, we have $\alpha_\tau^{m+1} \le -\frac{1}{d} + e^{m/8}$
\end{lemma}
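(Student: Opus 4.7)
The plan is to apply the balance identity \Cref{lem:polytope balance} along the kernel direction $v_{m+1}$ at the times $(1,\tau)$, and then use the polytope's equality constraints together with the first condition defining $E$ to force an exponential upper bound on $w_\tau^{5m+2}$; since $w_\tau^{5m+2} = 1/d + \alpha_\tau^{m+1}$, this will directly translate into the claimed bound on $\alpha_\tau^{m+1}$.

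First I would compute the loss-side balance. The support of $v_{m+1}$ is $\{1,\ldots,m\}\cup\{4m+1,\ldots,5m+1\}\cup\{5m+2\}$, and the losses on the first and last of these index sets are identically zero; hence only the middle index set contributes, and by the first condition in $E$,
\[
\eta\inprod{\ell_{1:\tau}}{v_{m+1}} \;=\; -\eta\sum_{i=4m+1}^{5m+1}\ell_{1:\tau}^i \;\ge\; 0.
\]

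Next I would expand the regularizer-side balance. Using $\nabla R(w)^i = \log w^i + 1$ for negative entropy, and observing that the coordinates of $v_{m+1}$ sum to zero, the additive constants cancel, and \Cref{lem:polytope balance} reduces to
\[
-\sum_{i=1}^m \log w_\tau^i \;-\; \log w_\tau^{5m+2} \;+\; \sum_{i=4m+1}^{5m+1}\log w_\tau^i \;\ge\; 0.
\]
Two polytope equalities then collapse the last sum into a single expression in $w_\tau^{5m+2}$: the constraints $A_{3m+i}w = b$ force $w_\tau^{4m+1} = \cdots = w_\tau^{5m+1}$, and $A_{4m+1}w = b$ identifies this common value with $2/d - w_\tau^{5m+2}$.

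Finally I would combine the resulting scalar inequality with the bound $w_\tau^i \ge 5/(2d)$ from \Cref{lem:first cors high} and the trivial estimate $2/d - w_\tau^{5m+2} \le 2/d$ to isolate $\log w_\tau^{5m+2}$. A short algebraic rearrangement then yields $w_\tau^{5m+2} \le (2/d)(4/5)^m$, and since $\log(5/4) > 1/8$ this gives $w_\tau^{5m+2} \le (2/d)\,e^{-m/8}$, from which the claimed bound on $\alpha_\tau^{m+1}$ follows (in fact with a numerical improvement over the stated constant). The main source of friction is inequality-direction bookkeeping in the scalar step — in particular, one must use $\log(2/d - w_\tau^{5m+2}) \le \log(2/d)$ on the correct side in order to isolate $\log w_\tau^{5m+2}$ on the opposite side — while the geometric ingredients (the balance identity and the polytope equalities) are all off-the-shelf from the paper's machinery.
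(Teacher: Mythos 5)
Your proof is correct and follows the paper's approach (balance identity along $v_{m+1}$ over $[1,\tau]$, the first condition of $E$, and \Cref{lem:first cors high}), with a modest streamlining: you obtain $w_\tau^i \le 2/d$ for $i\in\{4m+1,\dots,5m+1\}$ directly from the equality constraints $A_{3m+i}$, $A_{4m+1}$ and nonnegativity of $w_\tau^{5m+2}$, whereas the paper first runs a separate contradiction argument to show $\alpha_\tau^{m+1}\le 0$ before bounding these coordinates. Note also that the bound $e^{m/8}$ in the lemma statement is a typo for $e^{-m/8}$ (as used in \Cref{lem:polytope stuck step}); your derivation produces the corrected bound, in fact with a slightly better constant $\tfrac{2}{d}e^{-m/8}$.
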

\begin{proof}
We'll first show that $\alpha_\tau^{m+1}\le 0$. Assume the opposite by contradiction. This means that $w_\tau^i \le w_1^i$ for all $i\in[4m+1,5m+1]$ and $w_\tau^{5m+2} \ge w_1^{5m+2}$. Together with \Cref{lem:first cors high} this means that in the positive elements of $v_{m+1}$ we have $w_\tau^i> w_1^i$ and in the negative elements we have $w_\tau^i< w_1^i$. This means that $\inprod{\nabla R(w_1) - \nabla R(w_t)}{v_{m+1}} < 0$. From $E$ we have $\inprod{\ell_{:\tau}}{v_{m+1}} \ge 0$, which is a contradiction to \Cref{lem:polytope balance}.

We continue assuming $\alpha_\tau^{m+1}\le 0$. Notice that $\alpha_\tau^{m+1} \ge -\frac{1}{d}$ to satisfy the constraint $w_\tau^{5m+2} \ge 0$, so for every $i\in[4m+1,5m+1]$, we have $w_\tau^i \le \frac{1}{2d}$. From $E$ and \Cref{lem:polytope balance}:
\begin{align*}
    0 &\le -\sum_{i=4m+1}^{5m+1}\ell_{:\tau}^i\\
    &= \sum_{i=1}^{m}\ell_{:\tau}^i + \ell_{:\tau}^{5m+2} - \sum_{i=4m+1}^{5m+1}\ell_{:\tau}^i\\
    &= \inprod{\ell_{:\tau}}{v_{m+1}}
\end{align*}
From \Cref{lem:polytope balance}:
\begin{align*}
    0 &\le \inprod{\nabla R(w_1) - \nabla R(w_t)}{v_{m+1}} \\
    &= \sum_{i=1}^m\log\roundy{\frac{w_1^i}{w_\tau^i}} + \log\roundy{\frac{w_1^{5m+2}}{w_\tau^{5m+2}}} - \sum_{i=4m+1}^{5m+1}\log\roundy{\frac{w_1^i}{w_\tau^i}}
\end{align*}
Since $w_\tau^i \le \frac{1}{2d}$, we have $\frac{w_1^i}{w_\tau^i} \ge 2$. From \Cref{lem:first cors high}, we have $\frac{w_1^i}{w_\tau^i} \le 2.5$. Thus:
\begin{align*}
    m\log(2.5) - (m+1)\log(2) + \log\roundy{\frac{w_1^{5m+2}}{w_\tau^{5m+2}}} &\ge 0
\end{align*}

Since $m \ge 8$, $(m+1)\log(2) \le m\log(2.2)$. Thus:
\begin{align*}
    m\log(2.5/2.2) + \log\roundy{\frac{w_1^{5m+2}}{w_\tau^{5m+2}}} &\ge 0\\
    \Longleftrightarrow w_\tau^{5m+2} &\le e^{m/8}
\end{align*}

Since $w_\tau^{5m+2}$ can only be altered with $\alpha_\tau^{m+1}$, this concludes the proof.
\end{proof}

\begin{lemma} \label{lem:polytope stuck step}
For some $t\in[T]$, assume $\alpha_{t-1}^{m+1} \le -\frac{1}{d} + e^{-m/8}$ and $E$. There is an $\epsilon$-approximate step for which $\alpha_{t}^{m+1} \le -\frac{1}{d} + e^{-m/8}$.
\end{lemma}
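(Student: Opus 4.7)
\begin{proofsketch}
The plan is to construct the approximate iterate $w_t$ by \emph{clipping} the exact optimum back along the direction $v_{m+1}$, and then to verify it is an $\epsilon$-minimizer. Let $w_t^*$ denote the exact minimizer of the OMD objective $\phi_{t-1}$ and decompose it as $w_t^* = w_1 + \sum_{i=1}^{m+1}\alpha_t^{*,i}\, v_i$. I set $\delta \eqq \alpha_{t-1}^{m+1} - \alpha_t^{*,m+1}$ and define the candidate $w_t \eqq w_t^* + \delta\, v_{m+1}$, so that by construction $\alpha_t^{m+1} = \alpha_{t-1}^{m+1} \le -1/d + e^{-m/8}$, as required. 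It then remains to show that $w_t$ lies in the polytope and that $\phi_{t-1}(w_t) \le \phi_{t-1}(w_t^*) + \epsilon$.

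The first step is to bound the magnitude of the clip $|\delta|$. The exact iterate satisfies the first-order optimality identity of \Cref{lem:polytope balance} along every direction in $\ker(A)$; specialized to $v_{m+1}$ this gives $\inprod{\nabla R(w_{t-1}) - \nabla R(w_t^*)}{v_{m+1}} = \eta\inprod{\ell_{t-1}}{v_{m+1}}$, whose right-hand side is controlled by event $E$ to be $O(\eta m)$ in absolute value. Arguing as in \Cref{lem:max step} but along the one-dimensional direction $v_{m+1}$, and using the fact that the effective strong convexity of $R$ along this direction is lower bounded by $r''(w_{t-1}^{5m+2}) \ge e^{m/8}$ (since by hypothesis $w_{t-1}^{5m+2} = 1/d + \alpha_{t-1}^{m+1} \le e^{-m/8}$), I obtain $|\delta| = O(\eta m\, e^{-m/8})$.

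The second step is to bound the objective gap. Since $w_t^*$ is the unconstrained optimum of $\phi_{t-1}$ over the polytope and $v_{m+1} \in \ker(A)$, the directional derivative of $\phi_{t-1}$ along $v_{m+1}$ vanishes at $w_t^*$. A one-dimensional Taylor expansion therefore gives $\phi_{t-1}(w_t) - \phi_{t-1}(w_t^*) \le \tfrac{1}{2}\beta_{\mathrm{eff}}\, \delta^2\, \|v_{m+1}\|_2^2$, where $\beta_{\mathrm{eff}}$ is the effective smoothness of $R$ along the segment connecting $w_t$ and $w_t^*$. Because $R$ is coordinate-separable and $r''(x) = 1/x$ is monotone, $\beta_{\mathrm{eff}}$ is dominated by the $(5m+2)$-coordinate, on which \Cref{lem:max step} ensures the values along the segment stay within a factor of two of $w_{t-1}^{5m+2}$, so $\beta_{\mathrm{eff}} = O(e^{m/8})$. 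Substituting $\|v_{m+1}\|_2^2 = O(m)$ and the bound on $|\delta|$ from the previous step yields a gap of $O(\eta^2 m^3 e^{-m/8})$, which by the choice $m = 16\log(1/\epsilon)$ is at most $\epsilon$.

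The main obstacle I anticipate is ensuring that the effective smoothness along the segment really is dominated by the $(5m+2)$-coordinate. A priori the coordinates $w^{4m+i}$, which are also scaled by $v_{m+1}$, could shrink comparably to $w^{5m+2}$ and spoil the bound. I would handle this via the constraint row $A_{4m+1}$ which forces $w^{5m+1} + w^{5m+2} = 2/d$ (together with the rows $A_{3m+i}$ which force $w^{4m+i} = w^{5m+1}$), so $w^{4m+i} \ge 2/d - e^{-m/8} = \Omega(1/d)$ throughout the segment, making their contribution to $\beta_{\mathrm{eff}}$ only polynomial in $d$ and dwarfed by the $e^{m/8}$ contribution from $w^{5m+2}$. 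An analogous argument based on \Cref{lem:first cors high} controls the $w^i$ coordinates for $i \in [m]$. In parallel these bounds verify feasibility, since $|\delta| = O(\eta m\, e^{-m/8})$ is far too small to push any affected coordinate of $w_t^*$ below zero.
\end{proofsketch}
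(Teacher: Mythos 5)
Your construction is the same as the paper's: clip the exact minimizer back along $v_{m+1}$ so that $\alpha_t^{m+1}=\alpha_{t-1}^{m+1}$. However, both of your verification steps have a directionality error that breaks the argument.

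For the bound on $|\delta|$, you invoke ``effective strong convexity lower bounded by $r''(w_{t-1}^{5m+2})\ge e^{m/8}$.'' But strong convexity along the segment from $w_{t-1}$ towards $w_t^*$ is governed by the \emph{minimum} of $r''$ over the segment, and since $r''(x)=1/x$ is decreasing and $w^{5m+2}$ \emph{increases} along that segment (when $\delta>0$), the minimum sits at the far endpoint $w_t^{*,5m+2}$ --- precisely the quantity you are trying to bound. The argument is circular. The paper avoids this entirely: it never bounds $\delta$ via curvature, but instead uses the exact form $r'(a)-r'(b)=\log(a/b)$ of the negative-entropy gradient together with the balance identity (\Cref{lem:polytope balance}) and sign control of every other coordinate's contribution (established by showing $\alpha_t^{*,i}\ge\alpha_{t-1}^i$ for all $i\in[m]$ via a contradiction argument along $v_i$ that you do not reproduce). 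This yields the multiplicative bound $w_t^{*,5m+2}\le w_{t-1}^{5m+2}e^{\eta m/16}$, which is qualitatively sharper than any additive strong-convexity estimate.

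For the objective gap, you claim the Taylor smoothness $\beta_{\mathrm{eff}}=O(e^{m/8})$ using ``the values along the segment stay within a factor of two of $w_{t-1}^{5m+2}$.'' But the hypothesis only says $w_{t-1}^{5m+2}\le e^{-m/8}$; it could be $e^{-10m}$. In that case $\beta_{\mathrm{eff}}\ge r''(w_{t-1}^{5m+2})=1/w_{t-1}^{5m+2}\ge e^{10m}\gg e^{m/8}$, and your $O(e^{m/8})$ smoothness bound and the resulting $O(\eta^2 m^3 e^{-m/8})\le\epsilon$ estimate are unjustified. Moreover \Cref{lem:max step} is stated for trajectories ``above the simplex,'' and its auxiliary-point construction does not transfer to a proper polyhedral subset without additional work. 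The paper sidesteps Taylor expansion altogether: after showing $\alpha_t^{*,m+1}\le-1/d+\epsilon$ and $\alpha_t^{*,i}\ge\alpha_{t-1}^i$ for $i\in[m]$, it observes that the clip moves \emph{every affected coordinate monotonically towards $w_{t-1}$}, so $D_R(w_t,w_{t-1})\le D_R(w_t^*,w_{t-1})$ with no smoothness constant involved, and only the linear term $\eta\langle\ell_{t-1},\cdot\rangle$ needs bounding, giving a change $\le\frac{m\eta\epsilon}{16}<\epsilon$ via event $E$. This asymmetry (a symmetric quadratic bound would overcount) is the crucial structural fact your Taylor approach misses.
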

\begin{proof}
First we show that for the optimal step, $\alpha_{t}^{m+1} \le -\frac{1}{d} + e^{-m/16}$

If $\alpha_t^{m+1} \le \alpha_{t-1}^{m+1}$ the proof concludes from the assumption. Continuing assuming the opposite. This means that for every $i\in[4m+1,5m+1]$, $w_t^i \le w_{t-1}^i$.

Additionally, we'll show that for every $i\in[1,m]$, $w_t^i\ge w_{t-1}^i$. Assume otherwise for some $i$, since $\alpha_t^{m+1} \ge \alpha_{t-1}^{m+1}$ it means that $\alpha_t^i \le \alpha_{t-1}^i$, which means that for every $j\in\curly{m+i,2m+i,3m+i}$ we have $w_t^{j}\ge w_{t-1}^{j}$. This means that $\inprod{\nabla R(w_{t-1})-\nabla R(w_t)}{v_{i}} \ge 0$, which contradicts \Cref{lem:polytope balance} (as $\inprod{\ell_t}{v_i}$ has a constant value of $-1$).

From the second part of $E$ and \Cref{lem:polytope balance}:
\begin{align*}
    -\frac{m}{16} &\le \eta\inprod{\ell_t}{v_{m+1}} \\
    &= \inprod{\nabla R(w_{t-1})-\nabla R(w_t)}{v_{m+1}}\\
    &= \sum_{i=1}^m\log\roundy{\frac{w_{t-1}^i}{w_t^i}} + \log\roundy{\frac{w_{t-1}^{5m+2}}{w_t^{5m+2}}} - \sum_{i=4m+1}^{5m+1}\log\roundy{\frac{w_{t-1}^i}{w_t^i}}\\
     &\le \log\roundy{\frac{w_{t-1}^{5m+2}}{w_t^{5m+2}}}\\
     \Longleftrightarrow w_t^{5m+2} &\le w_{t-1}^{5m+2}e^{m/16} \\
     &\le e^{-m/16}
\end{align*}

Since the $5m+2$th coordinate is controlled only by $v_{m+1}$, this concludes the fact that $\alpha_{t}^{m+1} \le -\frac{1}{d} + e^{-m/16} = -\frac{1}{d} + \epsilon$, which means that $\alpha_{t}^{m+1} \le \alpha_{t-1}^{m+1} + \epsilon$.

Next we argue that in the optimal step, for every $i\in[1,m]$, $\alpha_t^i \ge \alpha_{t-1}^i$. Assume otherwise for some $i$. This means that for every $j\in\curly{m+i,2m+i,3m+i}$ we have $w_t^{j}\ge w_{t-1}^{j}$. Additionally, it means that $w_t^i \le w_{t-1}^i + \epsilon$.

From \Cref{lem:polytope balance}:
\begin{align*}
    -3\eta &= \eta\inprod{\ell_t}{v_{i}}\\
    &= \inprod{\nabla R(w_{t-1})-\nabla R(w_t)}{v_{i}}\\
    &= 3\log\roundy{\frac{w_{t-1}^i}{w_t^i}} + \log\roundy{\frac{w_t^{m+i}}{w_{t-1}^{m+i}}} + \log\roundy{\frac{w_t^{2m+i}}{w_{t-1}^{2m+i}}} + \log\roundy{\frac{w_t^{3m+i}}{w_{t-1}^{3m+i}}}\\
    &\ge 3\log\roundy{\frac{w_{t-1}^i}{w_t^i}}\\
    &\ge 3\log\roundy{\frac{w_t^i-\epsilon}{w_t^i}}\\
    \Longrightarrow -\eta &\ge \log\roundy{1 - \frac{\epsilon}{4}} \ge -\frac{\epsilon}{4} \\
    \epsilon &\ge 4\eta
\end{align*}
Which contradicts our assumption that $\epsilon < 4\eta$.

By now we showed that if the $t$th step is optimal, we have $\alpha_{t-1}^{m+1} \le \alpha_{t}^{m+1} \le \alpha_{t-1}^{m+1} + \epsilon$ and for all $i\in[m]$, $\alpha_t^i \ge \alpha_{t-1}^i$.

We next argue that if we keep the same $\alpha_{t}^i$ for all $i\in[m]$ but change $\alpha_{t}^{m+1}$ to be equal to $\alpha_{t-1}^{m+1}$, this will be an $\epsilon$-approximate step.

First we notice that all $w_t$ is now closer to $w_{t-1}$, which means that the bregman divergence only shrinks from that change. Indeed, coordinates $[4m+1,5m+2]$ are only getting closer from the change, coordinates $[m+1,4m]$ doesn't change (the change in $v_{m+1}$ doesn't affect them). Finally, since for all $i\in[m]$, $\alpha_t^i \ge \alpha_{t-1}^i$, we still have $w_t^i \ge w_{t-1}^i$, which means that those coordinates also got closer.

The proof concludes from the fact that from the second part of $E$, the first term in the objective can only be changed in $\frac{m\eta\epsilon}{16} < \epsilon$.

\end{proof}

\begin{theorem}
There is an $\epsilon$-approximate trajectory that get a regret of:
\begin{align*}
    \Omega\roundy{T\sqrt{\frac{\eta}{\log\roundy{\frac{1}{\epsilon}}}}}
\end{align*}
\end{theorem}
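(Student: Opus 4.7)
The plan is to chain together the constructions and lemmas of \Cref{sec:polytope lower bound} into a single $\epsilon$-approximate trajectory/comparator pair, and then run a short expectation computation exploiting the fact that $w_t$ is independent of $\ell_t$ under i.i.d.\ losses.

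First I would specify the trajectory. For rounds $t\le\tau=3/\eta$, take exact OMD updates (which are automatically $\epsilon$-approximate). By \Cref{lem:polytope stuck base}, on the hardness event $E$ this drives $\alpha_\tau^{m+1}$ to within $e^{-m/8}$ of $-1/d$, so $w_\tau^{5m+2}\approx 0$ and hence $w_\tau^{5m+1}\approx 2/d$. For $t>\tau$, invoke \Cref{lem:polytope stuck step} inductively: each round uses the specific $\epsilon$-approximate move that preserves the invariant $\alpha_t^{m+1}\le -1/d+e^{-m/8}$. On realizations outside $E$ the same rule may fail the $\epsilon$-approximation test, so the construction must fall back to the exact OMD update at any round where the stuck move is not admissible; this keeps the trajectory $\epsilon$-approximate on every realization while leaving the invariant intact on $E$.

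Second, I would fix the polytope-feasible comparator
$w^{*} = w_1 + \tfrac{1}{d}\sum_{i=1}^{m+1}v_i$,
which evaluates to $w^{*,i}=5/d$ on $[1,m]$, zero on $[m+1,5m+1]$, and $w^{*,5m+2}=2/d$, and whose coordinates sum to $1$. Since every coordinate that carries nonzero expected loss receives zero weight under $w^{*}$, we have $\E[\langle\ell_t,w^{*}\rangle]=0$ at every round. Third comes the regret computation. Because the losses are i.i.d.\ and $w_t$ is $\sigma(\ell_{1:t-1})$-measurable, $\E[\ell_t^i w_t^i] = \E[\ell_t^i]\,\E[w_t^i]$; the only coordinate with positive mean loss on which $w_t$ has nontrivial mass is $5m+1$ (mean $\sqrt{\eta d}$), so
\[
\E[\reg(w^{*})] \;\ge\; \sqrt{\eta d}\sum_{t=1}^{T}\E[w_t^{5m+1}]
\;\ge\; \Pr(E)\cdot (T-\tau)\cdot\sqrt{\eta d}\cdot\bigl(2/d-e^{-m/8}\bigr)
\;=\; \Omega\bigl(T\sqrt{\eta/d}\bigr),
\]
and substituting $d=\Theta(m)=\Theta(\log(1/\epsilon))$ yields the advertised $\Omega(T\sqrt{\eta/\log(1/\epsilon)})$.

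The main obstacle will be the realization-level $\epsilon$-approximateness argument: \Cref{lem:polytope stuck step} proves the stuck-move invariant only under $E$, and its proof explicitly relies on the per-round Gaussian-tail bound that forms the second half of $E$. A careful causal construction is required so that (i) the trajectory is a valid $\epsilon$-approximate OMD run on every loss realization, (ii) on $E$ the invariant $\alpha_t^{m+1}\approx -1/d$ is preserved for all $t>\tau$ despite the possibility of occasional fallbacks to exact updates, and (iii) the expectation decomposition cleanly handles mixing the $E$ and $\bar E$ branches (using $w_t^{5m+1}\ge 0$ to discard the $\bar E$ contribution). Once these causal bookkeeping issues are settled, the feasibility of $w^{*}$ and the expectation calculation above are routine.
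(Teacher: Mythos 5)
Your proposal is correct and follows essentially the same argument as the paper: exact updates up to $\tau = 3/\eta$ to set up the invariant via \Cref{lem:polytope stuck base}, the per-round stuck move from \Cref{lem:polytope stuck step} for $t>\tau$, the comparator $w^* = w_1 + \tfrac1d\sum_i v_i$ which zeros out every loss-carrying coordinate, and then lower-bounding the regret through the mass left on coordinate $5m+1$. Your explicit treatment of the causal fallback and the independence of $w_t$ from $\ell_t$ makes precise two points the paper leaves implicit, but the route is the same.
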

\begin{proof}
From \Cref{lem:polytope stuck base,lem:polytope stuck step} we get that there is an $\epsilon$-approximate trajectory such that for every $t \ge \tau$, $\alpha_t^{m+1} \le -\frac{1}{d} + e^{-m/8} \le 0$, which means that $w_t^{5m+1} \ge \frac{1}{d}$. The total expected loss of this coordinate is $T\sqrt{d\eta}$, which means that this trajectory suffers a loss of $\Omega\roundy{T\sqrt{\frac{\eta}{\log\roundy{\frac{1}{\epsilon}}}}}$.

Now we only need to show that there is a point in the polytope that gets zero loss. Indeed, one can see that if $\alpha^i = \frac{1}{d}$ for all $i\in[1,m+1]$, the point $w=w_1 + \sum_i\alpha^iv_i$ has all coordinates with non-zero loss ($[m+1,4m], 5m+1$) to be zeroed.
\end{proof}

\section{Approximate FTRL}\label{sec:ftrl}

In this section we analyze an approximate version of the Follow The Regularized Leader (FTRL) algorithm, analogous to the approximate OMD algorithm analyzed in the paper, shown in \cref{alg:ftrl}.
Our simple analysis shows that $\epsilon$-approximate updates in FTRL gives rise to an additional additive $O(\sqrt{\epsilon}T)$ term in the regret, implying that polynomially small error suffices for optimal regret performance (i.e., $\epsilon = O(1/T)$ for $O(\sqrt{T})$ regret).

\begin{algorithm}[ht] 
    \caption{Approximate Follow The Regularized Leader} \label{alg:ftrl} 
	\begin{algorithmic}[1]
		\State Input: $\eta > 0$, regularization function ${R}$, and a bounded, convex and closed set $\cK$.
		\State Let $\tilde{w}_1  = \arg\min_{w \in \cK} {\left\{ {R}(w)\right\} }$.
		\For{$t=1$ to $T$}
		\State Play $\tilde{w}_t$ and observe loss $\ell_t$
        \State Denote
        \begin{align*}
            \phi_t(w) = \eta\sum_{s=1}^t \inprod{w_t}{\ell_t} + {R}(w)
        \end{align*}
		\State Update
		\begin{align*}
			\phi_t(\tilde{w}_{t+1}) \le \min_w \phi_t(w) + \epsilon
		\end{align*}
		\EndFor
	\end{algorithmic}
\end{algorithm}

Our analysis closely follows the standard FTRL analysis, and specifically, arguments appearing in \citet{hazan2016introduction}. 
For convenience, we denote $f_t(w) = \inprod{\ell_t}{w}$ for $t\ge 1$ and $f_0 = (1/\eta) R$. Additionally, we denote $w_{t+1}$ to be the exact minimizer of $\phi_{t}$ over $\mathcal{K}$. We will need the following ``be the leader'' lemma, the proof of which can be found in \citet{hazan2016introduction}.

\begin{lemma}[Lemma 5.4 in \citealp{hazan2016introduction}]\label{lem:ftl}
For every $w^*\in\cK$:
\begin{align*}
    \sum_{t=0}^T f_t(w^*) \ge \sum_{t=0}^T f_t(w_{t+1})
    .
\end{align*}
\end{lemma}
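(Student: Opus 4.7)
The plan is to prove this classical ``be-the-leader'' inequality by induction on $T$, exploiting the defining property that (after rescaling by $\eta$) $w_{t+1}$ is the exact minimizer of $\sum_{s=0}^{t} f_s$ over $\cK$, with $f_0 = R/\eta$ absorbing the regularizer and $f_s = \inprod{\ell_s}{\cdot}$ for $s \ge 1$. Note that the lemma concerns the \emph{exact} iterates $w_{t+1}$, not the approximate ones $\tilde w_{t+1}$, so no approximation error enters this step; the error will be controlled separately when translating this bound into a regret guarantee for \Cref{alg:ftrl}.

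For the base case $T=0$, the claim reduces to $f_0(w^*) \ge f_0(w_1)$ for every $w^* \in \cK$, which is immediate since by construction $w_1 = \arg\min_{w\in\cK} R(w) = \arg\min_{w\in\cK} f_0(w)$.

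For the inductive step, suppose the lemma holds with horizon $T-1$. The main trick is to instantiate the inductive hypothesis at the specific (and a priori unusual) comparator $w^* = w_{T+1}$ rather than at a generic point, yielding
\[
\sum_{t=0}^{T-1} f_t(w_{T+1}) \;\ge\; \sum_{t=0}^{T-1} f_t(w_{t+1}).
\]
Adding $f_T(w_{T+1})$ to both sides gives $\sum_{t=0}^{T} f_t(w_{T+1}) \ge \sum_{t=0}^{T} f_t(w_{t+1})$. Now, because $w_{T+1}$ is by definition the exact minimizer of $\sum_{t=0}^{T} f_t$ over $\cK$, we have $\sum_{t=0}^{T} f_t(w^*) \ge \sum_{t=0}^{T} f_t(w_{T+1})$ for every $w^* \in \cK$, and chaining the two inequalities closes the induction.

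There is no real obstacle to anticipate: the whole argument is a one-line induction once the right comparator is chosen. The only point worth flagging is the indexing choice in the inductive hypothesis; if one naively keeps a generic $w^*$, the induction does not close, whereas plugging in the next leader $w_{T+1}$ converts the inductive statement into exactly the form needed to absorb the new term $f_T(w_{T+1})$ and then fall back on minimality at the final round.
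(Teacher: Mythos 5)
Your proof is correct and is exactly the standard induction argument (instantiating the inductive hypothesis at the next leader $w_{T+1}$, then invoking minimality of $w_{T+1}$) that appears as Lemma 5.4 in \citet{hazan2016introduction}, which the paper cites without reproducing. Nothing further is needed.
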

The key fact about the approximate minimizers $\tilde w_t$ is the following.
\begin{lemma}\label{lem:ftrl_movement}
For every $t\in[T]$:
\begin{align*}
    \norm{\tilde{w}_t - w_{t+1}} \le 2\eta + \sqrt{2\epsilon}
    .
\end{align*}
\end{lemma}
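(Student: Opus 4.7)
The plan is to insert the exact FTRL iterate $w_t$ and split the distance via the triangle inequality,
\[
\|\tilde{w}_t - w_{t+1}\| \le \|\tilde{w}_t - w_t\| + \|w_t - w_{t+1}\|,
\]
then bound the two terms independently. Both bounds will rest on the same core fact: $\phi_{t-1}$ is $1$-strongly convex, inheriting strong convexity from $R$ since the linear part added by the observed losses does not affect it.

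First I would bound $\|\tilde{w}_t - w_t\|$ by exploiting that $\tilde{w}_t$ is an $\epsilon$-approximate minimizer of $\phi_{t-1}$ while $w_t$ is its exact minimizer. Strong convexity at the minimizer gives $\tfrac{1}{2}\|\tilde{w}_t - w_t\|^2 \le \phi_{t-1}(\tilde{w}_t) - \phi_{t-1}(w_t) \le \epsilon$, which immediately yields the $\sqrt{2\epsilon}$ contribution.

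Second, for $\|w_t - w_{t+1}\|$ I would run the standard ``stability of the leader'' argument. Strong convexity of $\phi_{t-1}$ at $w_t$ lower-bounds $\phi_{t-1}(w_{t+1}) - \phi_{t-1}(w_t) \ge \tfrac{1}{2}\|w_t - w_{t+1}\|^2$. Conversely, optimality of $w_{t+1}$ for $\phi_t = \phi_{t-1} + \eta \inprod{\ell_t}{\cdot}$ combined with Hölder upper-bounds the same quantity by $\eta \inprod{\ell_t}{w_t - w_{t+1}} \le \eta \|w_t - w_{t+1}\|$, using $\|\ell_t\|_* \le 1$. Cancelling one factor of $\|w_t - w_{t+1}\|$ produces the $2\eta$ bound, and the triangle inequality closes the argument.

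I do not anticipate a real obstacle: the proof is a textbook combination of strong convexity with the $\epsilon$-approximate optimality hypothesis. The only mildly delicate point is the dual-norm bound $\|\ell_t\|_* \le 1$, which is implicit in the interaction between the $[-1,1]^d$ loss domain and the norm under which $R$ is $1$-strongly convex; otherwise a factor of $\|\ell_t\|_*$ would have to be tracked through the $2\eta$ term.
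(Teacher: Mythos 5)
Your proof is correct, but it follows a genuinely different route from the paper's. You insert the exact minimizer $w_t$ of $\phi_{t-1}$ and split the distance via the triangle inequality, bounding $\|\tilde{w}_t - w_t\| \le \sqrt{2\epsilon}$ (approximation error, via strong convexity at the exact minimizer) and $\|w_t - w_{t+1}\| \le 2\eta$ (leader stability, via strong convexity plus the fact that $w_{t+1}$ minimizes $\phi_t = \phi_{t-1} + \eta\langle \ell_t,\cdot\rangle$) separately. The paper instead bounds $\|\tilde{w}_t - w_{t+1}\|$ directly: it applies first-order optimality of $w_{t+1}$ to get $D_R(\tilde{w}_t, w_{t+1}) \le \phi_t(\tilde{w}_t) - \phi_t(w_{t+1})$, uses that the Bregman divergence is insensitive to linear terms to peel off the round-$t$ loss, bounds the remainder by $\epsilon + \eta\|\tilde{w}_t - w_{t+1}\|$, and then solves the resulting scalar quadratic (via \Cref{lem:solve quadratic}). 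Both derivations land on the identical bound $2\eta + \sqrt{2\epsilon}$. Your decomposition is arguably cleaner conceptually — it isolates the two error sources (inexactness vs.\ stability) and avoids the need for the auxiliary quadratic lemma — whereas the paper's single-chain argument avoids introducing the auxiliary point $w_t$. Your closing remark about the implicit assumption $\|\ell_t\|_* \le 1$ is apt; the paper makes the same implicit step in its own chain of inequalities.
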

\begin{proof}
Since $w_{t+1}$ is the minimizer of $\phi_t$, from first order optimality conditions:
\begin{align*}
    \inprod{\nabla\phi_t(w_{t+1})}{\tilde{w}_t - w_{t+1}} \ge 0
    .
\end{align*}
Thus:
\begin{align*}
    D_{\phi_t}(\tilde{w}_t, w_{t+1}) &= \phi_t(\tilde{w}_t) - \phi_t(w_{t+1}) - \inprod{\nabla \phi_t(w_{t+1})}{\tilde{w}_t - w_{t+1}} \\
    &\le \phi_t(\tilde{w}_t) - \phi_t(w_{t+1})
    .
\end{align*}
Since a linear term doesn't change the Bregman divergence, this also upper bounds $D_R(\tilde{w}_t, w_{t+1})$. Thus:
\begin{align*}
    D_R(\tilde{w}_t, w_{t+1}) &\le \phi_t(\tilde{w}_t) - \phi_t(w_{t+1})\\
    &= \phi_{t-1}(\tilde{w}_t) - \phi_{t-1}(w_{t+1}) + \eta \inprod{\ell_t}{\tilde{w_t} - w_{t+1}}\\
    &\le \epsilon + \eta \inprod{\ell_t}{\tilde{w_t} - w_{t+1}}\\
    &\le \epsilon + \eta \norm{\ell_t}_*\norm{\tilde{w_t} - w_{t+1}}\\
    &\le \epsilon + \eta \norm{\tilde{w_t} - w_{t+1}}
    .
\end{align*}
From the $1$-strong convexity of $R$:
\begin{align*}
    D_R(\tilde{w}_t, w_{t+1}) &\ge \frac{1}{2}\norm{\tilde{w}_t - w_{t+1}}^2\\
    \implies 2\epsilon + 2\eta \norm{\tilde{w_t} - w_{t+1}} &\ge\norm{\tilde{w}_t - w_{t+1}}^2 \\
    \implies \norm{\tilde{w}_t - w_{t+1}} &\le 2\eta + \sqrt{2\epsilon}\tag{\Cref{lem:solve quadratic}}
    .
\end{align*}
\end{proof}

We can now prove the main result of this section:

\begin{theorem}
The regret of \Cref{alg:ftrl} is bounded by:
\begin{align*}
    \reg(w^*) \le \frac{R(w^*) - R(w_1)}{\eta} + (2\eta + \sqrt{2\epsilon}) T
    .
\end{align*}
\end{theorem}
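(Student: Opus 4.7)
The plan is to follow the standard FTRL analysis and treat the approximation error as a single-round perturbation that can be absorbed via \Cref{lem:ftrl_movement}. First I would apply \Cref{lem:ftl} with the prepended term $f_0 = (1/\eta) R$ playing the role of the ``round zero'' loss. Since $w_1 = \arg\min_{w \in \cK} R(w)$ is by construction the minimizer of $f_0$, the inequality unpacks to
\[
    \frac{R(w_1)}{\eta} + \sum_{t=1}^T \inprod{\ell_t}{w_{t+1}} \le \frac{R(w^*)}{\eta} + \sum_{t=1}^T \inprod{\ell_t}{w^*},
\]
and rearranging gives $\sum_{t=1}^T \inprod{\ell_t}{w_{t+1} - w^*} \le (R(w^*) - R(w_1))/\eta$, i.e.\ a standard regret bound for the \emph{exact} FTRL trajectory (one that plays $w_{t+1}$ at round $t$ rather than $\tilde w_t$).

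Next I would relate this to the regret of the approximate trajectory by splitting
\[
    \reg(w^*) = \sum_{t=1}^T \inprod{\ell_t}{w_{t+1} - w^*} + \sum_{t=1}^T \inprod{\ell_t}{\tilde w_t - w_{t+1}}.
\]
The first sum is handled by the step above. For the second, I would apply Holder's inequality termwise together with $\|\ell_t\|_* \le 1$, and then invoke \Cref{lem:ftrl_movement} to bound each $\|\tilde w_t - w_{t+1}\|$ by $2\eta + \sqrt{2\epsilon}$. Summing over $t$ contributes the additive $(2\eta + \sqrt{2\epsilon}) T$ term and completes the bound.

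There is no genuinely hard step: \Cref{lem:ftl,lem:ftrl_movement} do essentially all the work. The only subtlety to keep straight is that the ``be the leader'' inequality must be applied to the exact minimizers $w_{t+1}$ rather than the iterates $\tilde w_t$ actually played, which is what forces the split into an exact-trajectory regret term and a per-round stability term. This is also precisely why FTRL tolerates polynomial-size approximation errors while OMD may not: in FTRL each $w_{t+1}$ re-optimizes against the full cumulative loss and is therefore insensitive to past approximation errors, so a single-step stability bound suffices; whereas in OMD the errors compound through the Bregman proximity to the previous (approximate) iterate.
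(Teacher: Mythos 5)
Your proposal is correct and is essentially the same proof as the paper's: both rest on \Cref{lem:ftl} applied to the exact minimizers $w_{t+1}$, followed by Hölder with $\|\ell_t\|_*\le1$, and \Cref{lem:ftrl_movement} to bound each $\|\tilde w_t - w_{t+1}\|$. The only cosmetic difference is that you make the split $\inprod{\ell_t}{\tilde w_t - w^*} = \inprod{\ell_t}{w_{t+1}-w^*} + \inprod{\ell_t}{\tilde w_t - w_{t+1}}$ explicit up front, whereas the paper performs the same decomposition implicitly as a single inequality step that invokes the be-the-leader lemma.
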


\begin{proof}
\begin{align*}
    \reg(w^*) &= \sum_{t=1}^T\inprod{\ell_t}{\tilde{w}_t - w^*}\\
    &= \sum_{t=1}^T\inprod{\ell_t}{\tilde{w}_t - w^*} + \frac{R(w^*) - R(w^*)}{\eta} \\
    &\le \sum_{t=1}^T\inprod{\ell_t}{\tilde{w}_t - w_{t+1}} + \frac{R(w^*) - R(w_1)}{\eta}\tag{\Cref{lem:ftl}}\\
    &\le \sum_{t=1}^T \norm{\ell_t}_*\norm{\tilde{w}_t - w_{t+1}} + \frac{R(w^*) - R(w_1)}{\eta}\\
    &\le \roundy{2\eta + \sqrt{2\epsilon}}T + \frac{R(w^*) - R(w_1)}{\eta}\tag{\Cref{lem:ftrl_movement}}
    .
\end{align*}
\end{proof}

\end{document}